\newtheorem{theorem}{Theorem}
\newtheorem{lemma}{Lemma}
\definecolor{darkred}{rgb}{1, 0.1, 0.3}
\definecolor{darkblue}{rgb}{0.1, 0.1, 1}
\definecolor{darkgreen}{rgb}{0,0.6,0.5}
\newcommand {\mm}[1] {\ifmmode{#1}\else{\mbox{\(#1\)}}\fi}
\DeclareMathOperator*{\argmin}{argmin}
\newtheorem{proposition}{Proposition}
\newtheorem{assumption}{Assumption}
\newtheorem{definition}{Definition}
\newmdtheoremenv{theo}{Theorem}
\newsavebox{\savepar}
\begin{document}

\title{Linear Regression Games: Convergence Guarantees to Approximate Out-of-Distribution Solutions}
 
\author{Kartik Ahuja$^{*}$ \and Karthikeyan Shanmugam$^{*}$ \and Amit Dhurandhar\footnote{ IBM Research, Thomas J. Watson Research Center, Yorktown Heights, New York }}
\date{}

\maketitle

\begin{abstract}
Recently, invariant risk minimization (IRM) \cite{arjovsky2019invariant} was proposed as a promising solution to address out-of-distribution (OOD) generalization. In \cite{ahuja2020invariant}, it was shown that solving for the Nash equilibria of a new class of “ensemble-games” is equivalent to solving IRM. In this work, we extend the framework in \cite{ahuja2020invariant} for linear regressions by projecting the ensemble-game on an $\ell_{\infty}$ ball. We show that such projections help achieve non-trivial OOD guarantees despite not achieving perfect invariance.  For linear models with confounders, we prove that Nash equilibria of these games are closer to  the ideal OOD solutions than the standard empirical risk minimization (ERM) and we also provide learning algorithms that provably converge to these Nash Equilibria. Empirical comparisons of the proposed approach with the state-of-the-art show consistent gains in achieving OOD solutions in several settings involving anti-causal variables and confounders.
\end{abstract}

\section{Introduction}
A recent study shows that models trained to detect COVID-19 from chest radiographs rely on spurious factors such as the source of the data rather than the lung pathology \cite{degrave2020ai}. This is just one of many alarming examples of spurious correlations failing to hold outside a  specific training distribution.  In one commonly cited example, \cite{beery2018recognition} trained a convolutional neural network (CNN) to classify camels from cows. In the training dataset,  most pictures of  the cows had green pastures, while most pictures of camels were in the desert. The CNN picked up the spurious correlation and associated green pastures with cows thus failing to classify cows on beaches correctly.

Recently, \cite{arjovsky2019invariant} proposed a framework called invariant risk minimization (IRM) to address the problem of models inheriting spurious correlations. They showed that when data is gathered from multiple environments, one can learn to exploit invariant causal relationships, rather than relying on varying spurious relationships, thus learning robust predictors.  The authors used the invariance principle based on causality \cite{pearl1995causal} to construct powerful objects called ``invariant predictors''. An invariant predictor loosely speaking is a predictor that is simultaneously optimal across all the training environments under a shared representation.  In \cite{arjovsky2019invariant}, it was shown that for linear models with confounders and/or anti-causal variables, learning ideal invariant predictors translates to learning solutions with ideal out-of-distribution (OOD) generalization behavior. However, building efficient algorithms guaranteed to learn these invariant predictors is still a challenge.

 The algorithm in \cite{arjovsky2019invariant} is based on minimizing a risk function comprising of the standard risk and a penalty term that tries to approximately ensure that predictors learned are invariant. The penalty is non-convex even for linear models and thus the algorithm is not guaranteed to arrive at invariant predictors. Another recent work \cite{ahuja2020invariant}, proposed a framework called invariant risk minimization games (IRM-games) and showed that solving for the Nash equilibria (NE) of a special class of ``ensemble-games'' is equivalent to solving IRM for many settings. The algorithm in \cite{ahuja2020invariant} has no convergence guarantees to the NE of the ensemble-game. To summarize,  building algorithms that are guaranteed to converge to predictors with non-trivial OOD generalization is unsolved even for linear models with confounders and/or anti-causal variables. 
 
 In this work, we take important steps towards this highly sought after goal. As such, we formulate an ensemble-game that is constrained to be in the $\ell_{\infty}$ ball. 
 Although this construction might seem to be surprising at first, we show that these constrained ensemble-game based predictors have a good OOD behavior even though that they may not be the exact invariant predictors. We provide efficient algorithms that are guaranteed to learn these predictors in many settings. To the best of our knowledge, our algorithms are the first for which we can guarantee both convergence and better OOD behavior than standard empirical risk minimization. We carry out empirical comparisons in the settings proposed in \cite{arjovsky2019invariant}, where the data is generated from models that include both causal and anti-causal variables as well as confounders in some cases. These comparisons of our approach with the state-of-the-art depict  its promise in achieving OOD solutions in these setups. This demonstrates that searching over the NE of constrained ensemble-games is a principled alternative to searching over invariant predictors as is done in IRM.

\section{Related Work}

 IRM \cite{arjovsky2019invariant} has its roots in the theory of causality \cite{pearl1995causal}. A variable $y$ is caused by a set of non-spurious actual causal factors $x_{\mathrm{Pa}(y)}$ if and only if in all environments where $y$ has not been intervened on, the conditional probability $P(y|x_{\mathrm{Pa}(y)})$ remains invariant. This is called the \textit{modularity condition} \cite{bareinboim2012local}.
Related and similar notions are the \emph{independent causal mechanism principle} \cite{scholkopf2012causal}\cite{janzing2010causal}\cite{janzing2012information} and the \emph{invariant causal prediction principle} (ICP) \cite{peters2016causal}\cite{heinze2018invariant}. These principles imply that if all the environments (train and test) are modeled by interventions that do not affect the causal mechanism of target variable $y$, then a predictor trained on the transformation that involves the causal factors  ($\Phi(x)= x_{\mathrm{Pa}(y)}$) to predict $y$ is an invariant predictor, which is robust to unseen interventions.

In general, for finite sets of environments, there may be other invariant predictors. If one has information about the causal Bayesian network structure, one can find invariant predictors that are maximally predictive  using conditional independence tests and other graph-theoretic tools \cite{magliacane2018domain,subbaswamy2019should}. The above works select subsets of features, primarily using conditional independence tests, that make the optimal predictor trained on the selected features invariant. 
In IRM \cite{arjovsky2019invariant}, the authors provide an optimization-based reformulation of this invariance that facilitates searching over transformations in a continuous space. Following the original work  IRM from \cite{arjovsky2019invariant}, there have been several interesting works ---  \cite{teney2020unshuffling}\cite{krueger2020out}\cite{chang2020invariant}\cite{koyama2020out}
 is an incomplete representative list --- that build new methods inspired from IRM to address OOD generalization.  In these works, similar to IRM, the algorithms are not provably guaranteed to converge to predictors with  desirable OOD behavior. 
 
\section{Background, Problem Formulation, and Approach}

\subsection{Nash Equilibrium and Concave Games}

A standard normal form game is written as a tuple $\Omega = (\mathcal{N}, \{u_i\}_{i \in \mathcal{N}},\{\mathcal{S}_i\}_{i\in \mathcal{N}})$, where $\mathcal{N}$ is a finite set of players.  Player $i \in \mathcal{N}$ takes actions from a strategy set $\mathcal{S}_i$. The utility of player $i$ is $u_i:\mathcal{S} \rightarrow \mathbb{R}$, where we write the joint set of actions of all the players as  $\mathcal{S} = \Pi_{i\in \mathcal{N}} \mathcal{S}_i$. The joint strategy of all the players is given as $\bm{s} \in \mathcal{S}$,  the  strategy of player $i$ is $\bm{s}_i$ and the strategy of the rest of players is $\bm{s}_{-i} = (\bm{s}_{i^{'}})_{i^{'} \not = i}$.

\begin{definition}
A strategy $\bm{s}^{\dagger}\in \mathcal{S}$ is said to be a pure strategy Nash equilibrium (NE) if it satisfies
$$u_i(\bm{s}_{i}^{\dagger},\bm{s}_{-i}^{\dagger}) \geq u_i(k,\bm{s}_{-i}^{\dagger}), \forall k \in \mathcal{S}_{i}, \forall i \in \mathcal{N}$$
\end{definition}
  
NE defines a state where each player is using the best possible strategy in response to the rest of the players. A natural question to ask is when does a pure strategy NE exist. In the seminal work of \cite{debreu1952social} it was shown that for a special class of games called concave games such a NE always exists. 

\begin{definition}
A game $\Omega$ is called a concave game if for each $i\in \mathcal{S}$
\begin{itemize}
    \item  $\mathcal{S}_i$ is a compact, convex subset of $\mathbb{R}^{m_i}$
    \item $u_{i}(\bm{s}_i,\bm{s}_{-i})$ is continuous in $\bm{s}_{-i}$
    \item $u_{i}(\bm{s}_i,\bm{s}_{-i})$ is continuous and concave in $\bm{s}_i$ .
\end{itemize}
\label{assm1:Existence}
\end{definition}
\begin{theorem} \label{thm1} \cite{debreu1952social}
For any concave game $\Omega$  a pure strategy Nash equilibrium $\bm{s}^{\dagger}$ always exists.
\end{theorem}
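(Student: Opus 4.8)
The plan is to deduce Theorem~\ref{thm1} from Kakutani's fixed-point theorem applied to the joint best-response correspondence; this is the classical Debreu--Glicksberg--Fan argument.

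First I would introduce, for each player $i$, the best-response set $B_i(\bm{s}_{-i}) := \{k \in \mathcal{S}_i : u_i(k,\bm{s}_{-i}) \geq u_i(k',\bm{s}_{-i}) \text{ for all } k' \in \mathcal{S}_i\}$, and the set-valued map $\Phi$ on $\mathcal{S}$ defined by $\Phi(\bm{s}) := \prod_{i \in \mathcal{N}} B_i(\bm{s}_{-i})$. By Definition~1, a joint strategy $\bm{s}^{\dagger}$ is a pure-strategy NE precisely when $\bm{s}^{\dagger} \in \Phi(\bm{s}^{\dagger})$, so it suffices to exhibit a fixed point of $\Phi$.

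Next I would verify the hypotheses of Kakutani's theorem term by term. (i) $\mathcal{S} = \prod_i \mathcal{S}_i$ is a nonempty, compact, convex subset of a Euclidean space since each $\mathcal{S}_i$ is. (ii) For fixed $\bm{s}_{-i}$, the map $k \mapsto u_i(k,\bm{s}_{-i})$ is continuous on the compact set $\mathcal{S}_i$, so its maximum is attained (Weierstrass) and $B_i(\bm{s}_{-i}) \neq \emptyset$; hence $\Phi(\bm{s}) \neq \emptyset$. (iii) Because $u_i(\cdot,\bm{s}_{-i})$ is concave and $\mathcal{S}_i$ is convex, $B_i(\bm{s}_{-i})$ is a convex set, and a product of convex sets is convex, so $\Phi(\bm{s})$ is convex. (iv) $\Phi$ has a closed graph: this is where Berge's maximum theorem enters, applied to each $B_i$ viewed as the argmax correspondence of $u_i$ over the (constant, hence trivially continuous) constraint set $\mathcal{S}_i$. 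Granting (i)--(iv), Kakutani produces $\bm{s}^{\dagger} \in \Phi(\bm{s}^{\dagger})$, which is the claimed equilibrium.

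The step that deserves real care --- and the main obstacle --- is (iv), upper hemicontinuity/closed-graphness of the best-response map. Berge's theorem wants $u_i$ jointly continuous in $(\bm{s}_i,\bm{s}_{-i})$, while Definition~2 literally only posits continuity in each block; I would either invoke joint continuity directly (it holds for all the concrete linear-regression games treated later) or spell out the closed-graph check by hand: given $\bm{s}^{(n)} \to \bm{s}$ and $\bm{t}^{(n)} \in \Phi(\bm{s}^{(n)})$ with $\bm{t}^{(n)} \to \bm{t}$, one has $u_i(\bm{t}^{(n)}_i, \bm{s}^{(n)}_{-i}) \geq u_i(k, \bm{s}^{(n)}_{-i})$ for every $k \in \mathcal{S}_i$, and taking $n \to \infty$ with continuity gives $u_i(\bm{t}_i,\bm{s}_{-i}) \geq u_i(k,\bm{s}_{-i})$, i.e.\ $\bm{t}_i \in B_i(\bm{s}_{-i})$; since this holds for all $i$, $\bm{t} \in \Phi(\bm{s})$. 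The remaining items (i)--(iii) are routine compactness/convexity bookkeeping.
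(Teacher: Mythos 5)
The paper does not prove Theorem~\ref{thm1}; it is quoted directly from \cite{debreu1952social}, so there is no in-paper argument to compare against. Your proposal is the standard and correct Kakutani-based proof of that classical result: the best-response correspondence is nonempty-valued (Weierstrass), convex-valued (concavity of $u_i(\cdot,\bm{s}_{-i})$ on a convex set), and has a closed graph, so Kakutani yields a fixed point, which is exactly a pure-strategy NE. You are also right to flag the one genuine subtlety: Definition~\ref{assm1:Existence} as written only posits continuity of $u_i$ separately in $\bm{s}_i$ and in $\bm{s}_{-i}$, whereas both Berge's theorem and your hand-checked limit $u_i(\bm{t}^{(n)}_i,\bm{s}^{(n)}_{-i}) \to u_i(\bm{t}_i,\bm{s}_{-i})$ need joint continuity on $\mathcal{S}$; that is what Debreu's original hypothesis assumes and what holds for the quadratic risks used throughout this paper, so your resolution is the right one.
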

In this work, we only study  pure strategy NE and use the terms pure strategy NE and NE interchangeably.

\subsection{Invariant Risk Minimization \& Invariant Risk Minimization Games} 
\label{secn: IRM}
We are given a collection of training datasets $D = \{D_e\}_{e\in \mathcal{E}_{tr}}$ gathered from a set of environments $\mathcal{E}_{tr}$, where $D_e=\{\bm{x}^{i}_e, y^{i}_e\}_{i=1}^{n_e}$ is the dataset gathered from environment $e\in \mathcal{E}_{tr}$ and $n_e$ is the number of points in environment $e$. 
The feature value for data point $i$ is $\bm{x}_e^{i} \in \mathcal{X}$ and the corresponding label is $y_e^{i}\in \mathcal{Y}$, where $\mathcal{X} \subseteq \mathbb{R}^{n}$ and $\mathcal{Y}\subseteq \mathbb{R}$. Each point $(\bm{x}_e^{i},y_e^{i})$ in environment $e$ is drawn i.i.d from  a distribution $\mathbb{P}_e$.
Define a predictor $f:\mathcal{X} \rightarrow \mathbb{R}$. The goal of IRM is to use these collection of datasets $D$ to construct a predictor $f$ that performs well across many unseen environments $\mathcal{E}_{all}$, where $ \mathcal{E}_{all} \supseteq \mathcal{E}_{tr}$. Define the risk achieved by $f$ in environment $e$ as $R_e(f) = \mathbb{E}_{e}\big[\ell(f(\bm{X}_e), Y_e)\big]$, where $\ell$ is the square loss when $f(\bm{X}_{e})$ is the predicted value and $Y_{e}$ is the corresponding label, $(\bm{X}_e,Y_e)\sim \mathbb{P}_e$ and the expectation $\mathbb{E}_{e}$ is defined with respect to (w.r.t.) the distribution of points in environment $e$.

\textbf{Invariant predictor and IRM optimization:} An invariant predictor is composed of two parts a representation $\bm{\Phi} \in \mathbb{R}^{ d \times n}$ and a predictor $\bm{w} \in \mathbb{R}^{d\times 1}$. We say that a data representation $\bm{\Phi}$ elicits an invariant predictor $\bm{w}^{\mathsf{T}}\bm{\Phi}$ across the set of environments $\mathcal{E}_{tr}$ if there is a predictor $\bm{w}$ that achieves the minimum risk for all the environments 
$\bm{w} \in \argmin_{\tilde{\bm{w}} \in \mathbb{R}^{d\times 1}} R_{e}(\tilde{\bm{w}}^{\mathsf{T}}\bm{\Phi}), \; \forall e \in \mathcal{E}_{tr}$.  IRM may be phrased as the following constrained optimization problem:
\begin{equation}
    \begin{split}
        & \min_{\bm{\Phi} \in \mathbb{R}^{d\times n},\bm{w}  \in \mathbb{R}^{d\times 1}} \sum_{e \in \mathcal{E}_{tr}}R_{e}(\bm{w}^{\mathsf{T}}\bm{\Phi}) \\ 
        & \text{s.t.}\;\bm{w} \in \argmin_{\tilde{\bm{w}} \in \mathbb{R}^{d\times 1}} R_{e}(\tilde{\bm{w}}^{\mathsf{T}}\bm{\Phi}),\;\forall e \in \mathcal{E}_{tr}
    \end{split}
    \label{eqn: IRM}
\end{equation}
If $\bm{w}^{\mathsf{T}} \bm{\Phi}$ satisfies the constraints above, then it is an invariant  predictor across the training environments $\mathcal{E}_{tr}$.  Define the set of invariant predictors $\bm{w}^{\mathsf{T}} \bm{\Phi}$ satisfying the constraints in \eqref{eqn: IRM} as  $\mathcal{S}^{\mathsf{IV}}$. Informally stated, the main idea behind the above optimization is inspired from invariance principles in causality \cite{bareinboim2012local}\cite{pearl2009causality}. Each environment can be understood as an intervention. By learning an invariant predictor the learner hopes to identify a representation $\bm{\Phi}$ that transforms the observed features into the causal features and the optimal model trained on causal representations are likely to be same (invariant) across the environments provided we do not intervene on the label itself. These invariant models can be shown to have a good out-of-distribution performance. Next, we briefly describe IRM-games.

\textbf{Ensemble-game:} Each environment $e$ is endowed with its own predictor $\bm{w}_{e} \in \mathbb{R}^{d\times 1}$. Define  an ensemble predictor  $\bar{\bm{w}}\in \mathbb{R}^{d\times 1}$ given as $\bar{\bm{w}}= \sum_{q\in \mathcal{E}_{tr}}\bm{w}_{q}$; for the rest of this work a bar on top of vector represents an ensemble predictor.    We  require all the environments to use this ensemble $\bar{\bm{w}}$. We want to solve the following new optimization problem.
\begin{equation*}
    \begin{split}
        & \min_{\bm{\Phi} \in \mathbb{R}^{d\times n}, \bar{\bm{w}} \in \mathbb{R}^{d\times 1}} \sum_{e \in \mathcal{E}_{tr}}R_{e}\Big(\bar{\bm{w}}^{\mathsf{T}}\bm{\Phi}\Big)  \\
        & \text{s.t.}\;\bm{w}_e \in \argmin_{\tilde{\bm{w}}_{e} \in \mathbb{R}^{d\times 1}} R_{e}\bigg(\Big[\tilde{\bm{w}}_{e} + \sum_{q\in \mathcal{E}_{tr}\backslash \{e\}}\bm{w}_{q}\Big]^{\mathsf{T}}\bm{\Phi}\bigg),\;\forall e \in \mathcal{E}_{tr} \\ 
    \end{split}
    \label{eqn: IRM_ensemble}
\end{equation*}
For a fixed representation $\bm{\Phi}$, the constraints in the above optimization \eqref{eqn: IRM_ensemble} represent the NE of a game with each environment $e$ as a player with actions $\tilde{\bm{w}}_e$. Environment $e$ selects $\tilde{\bm{w}}_e$ to maximize its utility  $-R_{e}\bigg(\Big[\tilde{\bm{w}}_{e} + \sum_{q\not=e}\tilde{\bm{w}}_{q}\Big]^{\mathsf{T}}\bm{\Phi}\bigg)$.  Define the set of ensemble-game predictors $\bar{\bm{w}}^{\mathsf{T}}\bm{\Phi}$, i.e. the predictors that satisfy the constraints in \eqref{eqn: IRM_ensemble} as $\mathcal{S}^{\mathsf{EG}}$. In \cite{ahuja2020invariant} it was shown that the set of ensemble $\mathcal{S}^{\mathsf{EG}} = \mathcal{S}^{\mathsf{IV}}$. Having briefly reviewed IRM and  IRM-games (we presented them with linear models but these works are more general), we are now ready to build our framework.

\subsection{Unconstrained Linear Regression Games} 
\label{senc:ulrg}
The data is gathered from a set of two environments, $\mathcal{E}_{tr}= \{1,2\}$. \footnote{Discussion on multiple environments is in the Appendix Section.} Each data point $(\bm{X}_e,Y_e)$ in environment $e$ is sampled from $\mathbb{P}_e$. Each environment $e \in \{1,2\}$ is a player that wants to select a $\bm{w}_e\in \mathbb{R}^{n\times 1}$ such that it minimizes
\begin{equation}
    R_{e}(\bm{w}_1,\bm{w}_2) = \mathbb{E}_{e}\Big[\big(Y_{e} - \bm{w}_{1}^{\mathsf{T}}\bm{X}_{e} - \bm{w}_{2}^{\mathsf{T}}\bm{X}_{e}\big)^2\Big]
\end{equation}
where $\mathbb{E}_e$ is expectation w.r.t $\mathbb{P}_e$. We write the above as a two player game represented by a tuple $\Gamma = (\{1,2\}, \{R_{e}\}_{e\in \{1,2\}}, \mathbb{R}^{n\times 1})$. We refer to $\Gamma$ as a unconstrained linear regression game (U-LRG). 
A Nash equilibrium $\bm{w}^{\dagger} = (\bm{w}_1^{\dagger},\bm{w}_2^{\dagger})$ of U-LRG is a solution to 
\vspace{-0.5em}
\begin{equation}
\begin{split}
    & \bm{w}_1^{\dagger} \in \argmin_{\tilde{\bm{w}}_1 \in \mathbb{R}^{n \times 1}} \mathbb{E}_{1}\Big[\big(Y_{1} - \tilde{\bm{w}}_{1}^{\mathsf{T}}\bm{X}_{1} - \bm{w}_{2}^{\dagger,\mathsf{T}}\bm{X}_{1}\big)^2\Big]  \\ 
    & \bm{w}_2^{\dagger} \in \argmin_{\tilde{\bm{w}}_2 \in \mathbb{R}^{n \times 1}} \mathbb{E}_{2}\Big[\big(Y_{2} - \bm{w}_{1}^{\dagger,\mathsf{T}}\bm{X}_{2} - \tilde{\bm{w}}_{2}^{\mathsf{T}}\bm{X}_{2}\big)^2\Big] 
\end{split}
\end{equation}
The above two-player U-LRG is a natural extension of linear regressions and we start by analyzing the NE of the above game. Before going further, the above game can be understood as fixing $\bm{\Phi}$ to identity in the ensemble-game defined in the previous section.

For each $e\in \{1,2\}$, define the mean of features $\bm{\mu}_{e}= \mathbb{E}_{e}[\bm{X}_e]$,  $\bm{\Sigma}_{e} = \mathbb{E}_{e}\big[\bm{X}_{e}\bm{X}_e^{\mathsf{T}}\big]$ and the correlation between the feature $\bm{X}_{e}$ and  the label $Y_{e}$ as $\boldsymbol{\rho}_e = \mathbb{E}_{e}\big[\bm{X}_{e}Y_{e}\big]$. 

\begin{assumption}
\textbf{Regularity condition.} For each $e\in \{1,2\}$, $\bm{\mu}_{e} = \bm{0}$
and $\bm{\Sigma}_{e}$ is positive definite.
\label{ass1}
\end{assumption}

The above regularity conditions are fairly standard and the mean zero condition can be relaxed by introducing intercepts in the model. When $\bm{\mu}_{e} = \bm{0}$, $\bm{\Sigma}_{e}$ is the covariance matrix.   For each $e\in \{1,2\}$, define $\bm{w}_e^{*} = \bm{\Sigma}_e^{-1}\bm{\rho}_e$, where $\bm{\Sigma}_e^{-1}$ is the inverse of $\bm{\Sigma}_e$. $\bm{w}_e^{*}$ is the least squares optimal solution for environment $e$, i.e., it solves $\min_{\bm{\tilde{w}}\in \mathbb{R}^{n\times 1}}\mathbb{E}_{e}\Big[\big(Y_{e} - \bm{\tilde{w}}^{\mathsf{T}}\bm{X}_{e}\big)^2\Big]$.

\begin{proposition}
    If Assumption \ref{ass1} holds and  if the least squares optimal solution in the two environments are 
    \begin{itemize}
    \item equal, i.e., $\bm{w}_{1}^{*}=\bm{w}_2^{*}$, then the set $\{(\bm{w}_1^{\dagger}, \bm{w}_2^{\dagger}) \;| \; \bm{w}_1^{\dagger}+\bm{w}_2^{\dagger}= \bm{w}_1^{*}\}$ describes all the pure strategy Nash equilibrium of U-LRG, $\Gamma$.
    \item  not equal, i.e., $\bm{w}_{1}^{*}\not=\bm{w}_2^{*}$, then U-LRG, $\Gamma$, has no pure strategy Nash equilibrium.
    \end{itemize}
    \label{prop1}
\end{proposition}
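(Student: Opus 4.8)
The plan is to compute each player's best-response map explicitly, exploiting the fact that under Assumption~\ref{ass1} each player's objective is a strictly convex quadratic, so the best response is a singleton determined by the first-order stationarity condition. Fix player $2$'s strategy $\bm{w}_2$ and view $R_1(\tilde{\bm{w}}_1,\bm{w}_2)=\mathbb{E}_1\big[(Y_1-\tilde{\bm{w}}_1^{\mathsf{T}}\bm{X}_1-\bm{w}_2^{\mathsf{T}}\bm{X}_1)^2\big]$ as a function of $\tilde{\bm{w}}_1$. Its Hessian is $2\bm{\Sigma}_1\succ 0$, so it has a unique minimizer, found by setting the gradient to zero: $-2\,\mathbb{E}_1\big[\bm{X}_1(Y_1-\tilde{\bm{w}}_1^{\mathsf{T}}\bm{X}_1-\bm{w}_2^{\mathsf{T}}\bm{X}_1)\big]=\bm 0$, i.e. $\bm{\rho}_1-\bm{\Sigma}_1\tilde{\bm{w}}_1-\bm{\Sigma}_1\bm{w}_2=\bm 0$. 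Since $\bm{\Sigma}_1$ is invertible this gives the best response $\mathrm{BR}_1(\bm{w}_2)=\bm{\Sigma}_1^{-1}\bm{\rho}_1-\bm{w}_2=\bm{w}_1^{*}-\bm{w}_2$, and symmetrically $\mathrm{BR}_2(\bm{w}_1)=\bm{w}_2^{*}-\bm{w}_1$.

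Next I would translate the Nash condition into this language. Because each best response is a well-defined function (not a set-valued map), $(\bm{w}_1^{\dagger},\bm{w}_2^{\dagger})$ is a pure strategy NE if and only if it simultaneously solves $\bm{w}_1^{\dagger}=\bm{w}_1^{*}-\bm{w}_2^{\dagger}$ and $\bm{w}_2^{\dagger}=\bm{w}_2^{*}-\bm{w}_1^{\dagger}$; equivalently, $\bm{w}_1^{\dagger}+\bm{w}_2^{\dagger}=\bm{w}_1^{*}$ and $\bm{w}_1^{\dagger}+\bm{w}_2^{\dagger}=\bm{w}_2^{*}$. For the first bullet, when $\bm{w}_1^{*}=\bm{w}_2^{*}$ these two equations collapse to the single constraint $\bm{w}_1^{\dagger}+\bm{w}_2^{\dagger}=\bm{w}_1^{*}$; any pair satisfying it meets both best-response equations, so the NE set is exactly $\{(\bm{w}_1^{\dagger},\bm{w}_2^{\dagger})\mid \bm{w}_1^{\dagger}+\bm{w}_2^{\dagger}=\bm{w}_1^{*}\}$, and this set is nonempty. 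For the second bullet, if $\bm{w}_1^{*}\neq\bm{w}_2^{*}$ the two equations are mutually contradictory (they would force $\bm{w}_1^{*}=\bm{w}_2^{*}$), so no pair can be a NE.

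Since the argument is essentially a linear-algebra computation, there is no single hard step; the only points requiring care are (i) invoking positive definiteness of $\bm{\Sigma}_e$ both to guarantee strict convexity (hence a unique minimizer given by the stationarity equation) and to invert $\bm{\Sigma}_e$ when solving for the best response, and (ii) being explicit that because each best response is single-valued, ``fixed point of the best-response correspondence'' reduces to the clean system of two vector equations above, which makes the dichotomy transparent. I would also remark that the mean-zero part of Assumption~\ref{ass1} is only used to identify $\bm{\Sigma}_e$ with the covariance and is not essential here, since $\bm{\Sigma}_e=\mathbb{E}_e[\bm{X}_e\bm{X}_e^{\mathsf{T}}]$ and $\bm{\rho}_e=\mathbb{E}_e[\bm{X}_eY_e]$ are exactly the quantities appearing in the first-order conditions regardless of the mean.
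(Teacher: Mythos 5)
Your proposal is correct and follows essentially the same route as the paper: both reduce the Nash condition to the first-order stationarity equations $\bm{\Sigma}_e(\bm{w}_1^{\dagger}+\bm{w}_2^{\dagger})=\bm{\rho}_e$ for $e\in\{1,2\}$, i.e.\ $\bm{w}_1^{\dagger}+\bm{w}_2^{\dagger}=\bm{w}_e^{*}$ for both $e$, and read off the dichotomy from whether these two equations are consistent. Your framing via the single-valued best-response maps $\mathrm{BR}_e$ (using strict convexity from $\bm{\Sigma}_e\succ 0$) is a slightly cleaner packaging of the same computation, but not a different argument.
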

The proofs to all the propositions and theorems are in the Appendix.
From the above proposition, it follows that agreement between the environments on least squares optimal solution is both necessary and sufficient for the existence of NE of U-LRG. Next, we describe the family of linear structural equation models (SEMs) in \cite{arjovsky2019invariant} and show how the two cases, $\bm{w}_{1}^{*}=\bm{w}_2^{*}$, and $\bm{w}_{1}^{*}\not=\bm{w}_2^{*}$ naturally arise. 

\subsubsection{Nash Equilibria for Linear SEMs}
In this section, we consider linear SEMs from \cite{arjovsky2019invariant} and study the NE of U-LRG.
\begin{assumption} \textbf{Linear SEM with confounders and (or) anti-causal variables}
For each $e\in \{1,2\}$, $(\bm{X}_e,Y_e)$ is generated from the following SEM
\label{linear_model_ass}
\begin{equation}
\begin{split}
    & Y_{e} \leftarrow \bm{\gamma}^{\mathsf{T}}\bm{X}_{e}^{1} + \bm{\eta}_{e}^{\mathsf{T}}\bm{H}_{e}+ \varepsilon_{e}, \\
    & \bm{X}^2_{e} \leftarrow \bm{\alpha}_e Y_{e} + \bm{\Theta}_e \bm{H}_e  + \bm{\zeta}_e
\end{split}
\label{linear_model}
\end{equation}

The  feature vector is $\bm{X}_e = (\bm{X}_e^1, \bm{X}_e^2)$. $\bm{H}_e\in \mathbb{R}^{s}$ is a confounding random variable, where each component of $\bm{H}_e$ is an i.i.d draw from a distribution with zero mean and unit variance. $\bm{H}_e$ affects both the labels $Y_e$ through weights $\bm{\eta}_e\in \mathbb{R}^s$ and a subset of features $\bm{X}_e^{2}\in \mathbb{R}^{q}$ through weights $\bm{\Theta}_e\in \mathbb{R}^{q\times s}$. $\varepsilon_e\in \mathbb{R}$ is independent zero mean noise in the label generation. $Y_e$ affects a subset of features $\bm{X}_e^2$  with weight $\bm{\alpha}_e\in \mathbb{R}^{q}$, $\bm{\zeta}_e\in \mathbb{R}^{q}$ is an independent zero mean noise vector affecting $\bm{X}_e^2$. $\bm{X}_e^1\in \mathbb{R}^{p}$ are the causal features drawn from a distribution with zero mean and affect the label through a weight $\bm{\gamma}\in \mathbb{R}^{p}$, which is invariant across the environments. 
\end{assumption}

The above model captures many different settings. If $\bm{\alpha}_e=\bm{0}$ and  $\bm{\Theta}_e\not=\bm{0}$, then features $\bm{X}_e^{2}$ appear correlated with the label due to the confounder $\bm{H}_e$.  If $\bm{\alpha_e}\not=\bm{0}$ and  $\bm{\Theta}_e=\bm{0}$, then  features $\bm{X}_e^{2}$ are correlated with the label but they are effects or anti-causal. If both $\bm{\alpha}_e\not=\bm{0}, \bm{\Theta}_e\not=\bm{0}$, then we are in a hybrid of the above two settings.  In all of the above settings it can be shown that relying on $\bm{X}_e^{2}$ to make predictions can lead to failures under distribution shifts (modeled by interventions). From \cite{arjovsky2019invariant}, we know that for the above family of models the ideal OOD predictor is $\big(\bm{\gamma},\bm{0}\big)$ as it performs well across many distribution shifts (modeled by interventions). Hence, the goal is to learn $\big(\bm{\gamma},\bm{0}\big)$. 
 
 \textbf{No confounders \& no anti-causal variables ($\bm{w}_{1}^{*}=\bm{w}_2^{*}$):} Consider the SEM in Assumption \ref{linear_model_ass}. For each environment $e\in \{1,2\}$, assume $\bm{\alpha}_e=0$ and $\bm{\Theta}_e=0$, i.e. no confounding and no anti-causal variables.  This setting captures the standard covariate shifts \cite{gretton2009covariate}, where it is assumed that $\mathbb{P}_e(Y_e|\bm{X}_e=\bm{x})$ is invariant across environments, here we assume $\mathbb{E}_{e}(Y_e|\bm{X}_e=\bm{x}) = \bm{\gamma}^{\mathsf{T}}\bm{x}$ is invariant across environments. The least squares optimal solution for each environment is $\bm{w}_e^{*}= \big(\bm{\gamma},\bm{0}\big)$, which implies that $\bm{w}_1^{*} = \bm{w}_2^{*}$. From Proposition \ref{prop1} we know that a NE exists (any two predictors adding to $\bm{w}_1^{*}$ form a NE). In this setting, different methods -- empirical risk minimization (ERM), IRM, IRM-games, and methods designed for covariate shifts such as sample reweighting --  should  perform well. 

\textbf{Confounders only ($\bm{w}_{1}^{*}\not=\bm{w}_2^{*}$):}
\label{cfd_only}
Consider the SEM in Assumption \ref{linear_model_ass}. For each environment $e\in \{1,2\}$, assume $\bm{\alpha}_e=\bm{0}$, $\bm{\Theta}_e\not=0$, i.e. confounders only setting. 
Define $\bm{\Sigma}_{e1}  = \mathbb{E}_{e}\Big[\bm{X}^1_{e}\bm{X}_{e}^{1,\mathsf{T}}\Big]$ and define the variance for the noise vector $\bm{\zeta}$ as  $\bm{\sigma}_{\bm{\zeta}_e}^2 = \mathbb{E}_e[\bm{\zeta}_e\odot \bm{\zeta}_e]$, where $\odot$ represents element-wise product between two vectors. 

\begin{assumption}
\textbf{Regularity condition for linear SEM in Assumption \ref{linear_model_ass}.} For each environment $e\in \{1,2\}$, $\bm{\Sigma}_{e1}$ is positive definite and each element of the vector $\bm{\sigma}_{\bm{\zeta}_e}^2$ is positive.
\label{ass2}
\end{assumption}

Assumption \ref{ass2} is equivalent to Assumption \ref{ass1} for SEM in Assumption \ref{linear_model_ass} (it ensures $\bm{\Sigma}_e$ is positive definite).
\begin{proposition}
\label{ls_cfd_prop}
If  Assumption \ref{linear_model_ass} holds with $\bm{\alpha}_e=\bm{0}$ for each $e\in \{1,2\}$, and Assumption \ref{ass2} holds, then the least squares optimal solution for environment $e$ is 
\begin{equation}
\bm{w}_e^{*} = (\bm{w}_{e}^{\mathsf{inv}}, \bm{w}_{e}^{\mathsf{var}})=\Big(\bm{\gamma}, \Big(\bm{\Theta}_e\bm{\Theta}_e^{\mathsf{T}} + \mathsf{diag}[\bm{\sigma}_{\bm{\zeta}_e}^{2}]\Big)^{-1} \bm{\Theta}_e\bm{\eta}_e\Big)
\label{sol_confouned}
\end{equation}
\end{proposition}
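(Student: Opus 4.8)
The plan is to verify the first-order (normal) equations for the least-squares objective; under Assumption~\ref{ass2} these pin the minimizer down uniquely. Since Assumption~\ref{ass2} is equivalent to Assumption~\ref{ass1} for this SEM, $\bm{\Sigma}_e$ is positive definite, so $\tilde{\bm{w}}\mapsto \mathbb{E}_{e}\big[(Y_e-\tilde{\bm{w}}^{\mathsf{T}}\bm{X}_e)^2\big]$ is a strictly convex quadratic with Hessian $2\bm{\Sigma}_e$. Hence a vector is the unique global minimizer if and only if its gradient vanishes, i.e.\ $\mathbb{E}_{e}\big[\bm{X}_e\,(Y_e-\bm{w}_e^{*\mathsf{T}}\bm{X}_e)\big]=\bm{0}$: the residual is uncorrelated with every feature. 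So it suffices to exhibit one $\bm{w}_e^{*}$ meeting this orthogonality condition and check it equals \eqref{sol_confouned}.

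First I would set $\bm{\alpha}_e=\bm{0}$ in \eqref{linear_model}, so $\bm{X}_e^{2}=\bm{\Theta}_e\bm{H}_e+\bm{\zeta}_e$, and plug in the candidate $\bm{w}_e^{*}=(\bm{\gamma},\bm{w}_e^{\mathsf{var}})$. The causal block $\bm{\gamma}^{\mathsf{T}}\bm{X}_e^{1}$ cancels the $\bm{\gamma}^{\mathsf{T}}\bm{X}_e^{1}$ term in $Y_e$, leaving the residual
\[
Y_e-\bm{\gamma}^{\mathsf{T}}\bm{X}_e^{1}-\bm{w}_e^{\mathsf{var},\mathsf{T}}\bm{X}_e^{2}\;=\;\big(\bm{\eta}_e-\bm{\Theta}_e^{\mathsf{T}}\bm{w}_e^{\mathsf{var}}\big)^{\mathsf{T}}\bm{H}_e+\varepsilon_e-\bm{w}_e^{\mathsf{var},\mathsf{T}}\bm{\zeta}_e,
\]
a zero-mean function of the exogenous variables $\bm{H}_e,\varepsilon_e,\bm{\zeta}_e$ alone. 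I would then evaluate the two blocks of the orthogonality condition separately, using that in the SEM the causal features $\bm{X}_e^{1}$, the confounder $\bm{H}_e$, the label noise $\varepsilon_e$, and the feature noise $\bm{\zeta}_e$ are mutually independent, that $\mathbb{E}_{e}[\bm{H}_e\bm{H}_e^{\mathsf{T}}]=\bm{I}_s$ (i.i.d.\ unit-variance components), and that $\mathbb{E}_{e}[\bm{\zeta}_e\bm{\zeta}_e^{\mathsf{T}}]=\mathsf{diag}[\bm{\sigma}_{\bm{\zeta}_e}^{2}]$. The $\bm{X}_e^{1}$-block vanishes because the residual is independent of $\bm{X}_e^{1}$ and has zero mean. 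The $\bm{X}_e^{2}$-block reduces, after taking expectations term by term, to $\bm{\Theta}_e\big(\bm{\eta}_e-\bm{\Theta}_e^{\mathsf{T}}\bm{w}_e^{\mathsf{var}}\big)-\mathsf{diag}[\bm{\sigma}_{\bm{\zeta}_e}^{2}]\,\bm{w}_e^{\mathsf{var}}=\bm{\Theta}_e\bm{\eta}_e-\big(\bm{\Theta}_e\bm{\Theta}_e^{\mathsf{T}}+\mathsf{diag}[\bm{\sigma}_{\bm{\zeta}_e}^{2}]\big)\bm{w}_e^{\mathsf{var}}$, which is zero exactly when $\bm{w}_e^{\mathsf{var}}=\big(\bm{\Theta}_e\bm{\Theta}_e^{\mathsf{T}}+\mathsf{diag}[\bm{\sigma}_{\bm{\zeta}_e}^{2}]\big)^{-1}\bm{\Theta}_e\bm{\eta}_e$; the inverse exists because $\bm{\Theta}_e\bm{\Theta}_e^{\mathsf{T}}$ is positive semidefinite and, by Assumption~\ref{ass2}, $\mathsf{diag}[\bm{\sigma}_{\bm{\zeta}_e}^{2}]$ is positive definite. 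This is precisely \eqref{sol_confouned}, and by strict convexity it is the unique least-squares minimizer.

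The argument is essentially a second-moment computation, so there is no deep obstacle. The one place to be careful is making the SEM's exogeneity structure explicit --- namely the mutual independence of $\bm{X}_e^{1},\bm{H}_e,\varepsilon_e,\bm{\zeta}_e$ and the fact that $\bm{\zeta}_e$ has second-moment matrix $\mathsf{diag}[\bm{\sigma}_{\bm{\zeta}_e}^{2}]$ --- since the clean closed form \eqref{sol_confouned} relies on exactly these facts (in particular the cross terms between $\bm{X}_e^{2}$ and $\bm{H}_e,\varepsilon_e$ drop out only because of independence). A secondary point is to state clearly why a vanishing gradient yields a \emph{global} and unique minimum, which is immediate from strict convexity given that $\bm{\Sigma}_e$ is positive definite (Assumption~\ref{ass2}).
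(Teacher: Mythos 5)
Your proof is correct and follows essentially the same route as the paper's: both reduce to the normal equations $\bm{\Sigma}_e\bm{w}=\bm{\rho}_e$ and rely on the same second-moment facts (independence of $\bm{X}_e^{1}$ from $(\bm{H}_e,\bm{\zeta}_e,\varepsilon_e)$, $\mathbb{E}_e[\bm{H}_e\bm{H}_e^{\mathsf{T}}]=\bm{I}_s$, and $\mathbb{E}_e[\bm{\zeta}_e\bm{\zeta}_e^{\mathsf{T}}]=\mathsf{diag}[\bm{\sigma}_{\bm{\zeta}_e}^{2}]$). The only difference is presentational: you verify that the candidate makes the residual orthogonal to the features, whereas the paper computes $\bm{\Sigma}_e^{-1}\bm{\rho}_e$ directly by exhibiting the block-diagonal structure of $\bm{\Sigma}_e$; the underlying computation is identical.
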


We divide $\bm{w}_e^{*}$ into two halves $\bm{w}_{e}^{\mathsf{inv}}= \bm{\gamma}$  and  $\bm{w}_{e}^{\mathsf{var}} = \Big(\bm{\Theta}_e\bm{\Theta}_e^{\mathsf{T}} + \mathsf{diag}[\bm{\sigma}_{\bm{\zeta}_e}^{2}]\Big)^{-1} \bm{\Theta}_e\bm{\eta}_e$. Observe that the first half $\bm{w}_{e}^{\mathsf{inv}}$ is invariant, i.e., it does not depend on the  environment, while  $\bm{w}_{e}^{\mathsf{var}}$ may vary as it depends on the parameters specific to the environment e.g., $\bm{\Theta}_e, \bm{\eta}_e$. In general, $\bm{w}_{1}^{\mathsf{var}}\not=\bm{w}_{2}^{\mathsf{var}}$ (e.g., $s=q$, $\bm{\Theta}_e$ is identity $\bm{I}_q$, $\bm{\sigma}_{\bm{\zeta}_e}^2$ is one $\bm{1}_q$,  $\bm{\eta}_1\not=\bm{\eta}_2$) and as a result  $\bm{w}_1^{*} \not= \bm{w}_2^{*}$. In such a case, from Proposition \ref{prop1}, we know that NE does not exist. 
ERM  and other techniques such as domain adaptation \cite{ajakan2014domain,ben2007analysis,glorot2011domain,ganin2016domain}, robust optimization \cite{mohri2019agnostic,hoffman2018algorithms,lee2018minimax,duchi2016statistics}, would tend to learn a model which tends to exploit information from the spuriously correlated $\bm{X}_e^{2}$ thus placing a non-zero weight on the second half corresponding to the features  $\bm{X}_e^{2}$ and not recovering  $(\bm{\gamma},\bm{0})$.  

IRM based methods are designed to tackle these problems.  These works try to learn representations that filter out causal features, $\bm{X}_e^{1}$, with invariant coefficients, $\bm{w}_{e}^{\mathsf{inv}}$, from spurious features, $\bm{X}_e^{2}$, with variant coefficients $\bm{w}_{e}^{\mathsf{var}}$  and learn a predictor on top resulting in the invariant predictor $ (\bm{\gamma},\bm{0})$. However, the current algorithms that search for these representations in IRM and IRM-games are based on gradient descent over non-convex losses and non-trivial best response dynamics respectively, both of which are  not guaranteed to  converge to the ideal OOD predictor $(\bm{\gamma},\bm{0})$.  We formally state the assumption underlying these methods, which we also use later.
\begin{assumption}\label{sp_var}
\textbf{Spurious features have varying coefficents across environments.} $\bm{w}_{1}^{\mathsf{var}}\not=\bm{w}_{2}^{\mathsf{var}}$ 
\end{assumption}

\subsection{Constrained Linear Regression Games}
\label{secn:clrg}
In U-LRG, $\Gamma$,  the utility of environment $1$ ($2$) for is  $-R_{1}(\bm{w}_{1},\bm{w}_{2})$\big($-R_{2}(\bm{w}_{1},\bm{w}_{2})$\big).  For each environment $e\in \{1,2\}$, $-R_{e}$ is continous and concave in $\bm{w}_{e}$. For each $e$ in the game $\Gamma$, the set of actions it can take is in $\mathbb{R}^{n \times 1}$, which is not a compact set. If the set of actions for each environment were compact and convex, then we can use Theorem \ref{thm1} to guarantee that a NE always exists. Let us constraint the predictors to be in the set $\mathcal{W} = \big\{\bm{w}_e\;\big|\;\|\bm{w}_{e}\|_{\infty} \leq w^{\mathsf{sup}} \big\}$, where $\|\cdot\|_{\infty}$ is the $\ell_{\infty}$ norm and $0<w^{\mathsf{sup}}<\infty$. We define the constrained linear regression game (C-LRG) as $\Gamma_c = \big(\mathcal{E}_{tr}, \{-R^{e}\}_{e\in \mathcal{E}_{tr}}, \mathcal{W}\big)$. $\mathcal{W}$ is a closed and bounded subset in the Euclidean space, which implies it is also a compact set. $\mathcal{W}$ is also a convex set as $\ell_{\infty}$ norm is convex. From Definition \ref{assm1:Existence}, the game $\Gamma_c$ is concave.   
\begin{proposition}
    A pure strategy Nash equilibrium always exists for  C-LRG, $\Gamma_c$.
\end{proposition}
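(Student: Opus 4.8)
The plan is to verify that $\Gamma_c$ satisfies every clause of Definition~\ref{assm1:Existence}, so that the conclusion is an immediate application of Theorem~\ref{thm1} (Debreu). Concretely, I would check the three requirements of Definition~\ref{assm1:Existence} for each player $e\in\{1,2\}$ and then simply invoke the theorem. There is no deep step here, so the write-up is really a matter of recording the routine verifications cleanly; I state them below.

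\textbf{Strategy set.} Each player uses the common action set $\mathcal{W}=\{\bm{w}_e:\|\bm{w}_e\|_\infty\le w^{\mathsf{sup}}\}$, which is the cube $[-w^{\mathsf{sup}},w^{\mathsf{sup}}]^{n}\subseteq\mathbb{R}^{n\times1}$. Since $0<w^{\mathsf{sup}}<\infty$, this set is closed and bounded in a finite-dimensional space, hence compact; and it is convex, being a sublevel set of the convex function $\|\cdot\|_\infty$.

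\textbf{Concavity and continuity of utilities.} Fix $e$ and an opponent action $\bm{w}_{-e}$, and set $\bm{u}=\bm{w}_e+\bm{w}_{-e}$, which is affine in $\bm{w}_e$. Expanding the square and using the definitions of $\bm{\Sigma}_e$ and $\bm{\rho}_e$,
\[
R_e(\bm{w}_1,\bm{w}_2)=\mathbb{E}_e\!\big[(Y_e-\bm{u}^{\mathsf T}\bm{X}_e)^2\big]=\bm{u}^{\mathsf T}\bm{\Sigma}_e\bm{u}-2\bm{u}^{\mathsf T}\bm{\rho}_e+\mathbb{E}_e[Y_e^2].
\]
Since $\bm{\Sigma}_e=\mathbb{E}_e[\bm{X}_e\bm{X}_e^{\mathsf T}]$ is positive semidefinite (indeed positive definite under Assumption~\ref{ass1}), the right-hand side is a convex quadratic in $\bm{u}$, hence convex in $\bm{w}_e$; thus the utility $-R_e$ is concave in $\bm{w}_e$. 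Moreover $R_e$ is a quadratic polynomial in the entries of $(\bm{w}_1,\bm{w}_2)$ with finite coefficients, so it is jointly continuous, in particular continuous in $\bm{w}_e$ and in $\bm{w}_{-e}$.

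\textbf{Conclusion and the only subtlety.} The three facts above are exactly the hypotheses of Definition~\ref{assm1:Existence}, so $\Gamma_c$ is a concave game and Theorem~\ref{thm1} furnishes a pure strategy Nash equilibrium $\bm{w}^{\dagger}=(\bm{w}_1^{\dagger},\bm{w}_2^{\dagger})\in\mathcal{W}\times\mathcal{W}$. The only point that needs a word of care --- and it is the closest thing to an obstacle --- is the well-definedness and continuity of $R_e$: these require $\bm{X}_e$ and $Y_e$ to have finite second moments, which is implicit in the standing regularity assumptions (finiteness of $\bm{\Sigma}_e$ and $\bm{\rho}_e$, together with $\mathbb{E}_e[Y_e^2]<\infty$); once that is granted there is no genuine difficulty. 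It is worth remarking that, in contrast with Proposition~\ref{prop1} for the unconstrained game, no agreement between the two environments is needed here: compactness of $\mathcal{W}$ is precisely what guarantees existence even when $\bm{w}_1^{*}\neq\bm{w}_2^{*}$.
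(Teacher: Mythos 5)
Your proof is correct and follows essentially the same route as the paper: the text preceding the proposition verifies that $\mathcal{W}$ is compact and convex and that each $-R_e$ is continuous and concave in $\bm{w}_e$, so that $\Gamma_c$ is a concave game and Theorem~\ref{thm1} applies. Your write-up merely makes the quadratic expansion and the finite-second-moment caveat explicit, which the paper leaves implicit.
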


The above proposition follows from Theorem \ref{thm1}.  Unlike the game $\Gamma$, a NE always exists for the game $\Gamma_c$. Let $\bm{w}_{1}^{\dagger}, \bm{w}_2^{\dagger}$ be a NE of $\Gamma_c$ and let $\bar{\bm{w}}^{\dagger}$ be the corresponding ensemble predictor, i.e. $\bar{\bm{w}}^{\dagger} = \bm{w}_1^{\dagger} + \bm{w}_2^{\dagger}$. In the next theorem, we analyze the properties of $\bm{w}_{1}^{\dagger}, \bm{w}_2^{\dagger}$ but before that we state some assumptions. 
\begin{assumption}
\textbf{Realizability.} For each $e\in \{1,2\}$ the least squares optimal solution $\bm{w}_e^{*}\in \mathcal{W}$.
\label{realize_ass}
\end{assumption}

We write the feature vector in environment $e$ as $\bm{X}_{e} = (X_{e1},\dots, X_{en})$ and the least squares optimal solution in environment $e$ as $\bm{w}_e^{*} = (w_{e1},\dots, w_{en})$. Divide the features indexed $\{1, \dots, n\}$ into two sets  $\mathcal{U}$ and $\mathcal{V}$.  $\mathcal{U}$ is defined as:   $i \in \mathcal{U}$ if and only if the  weight associated with $i^{th}$ component in the least squares solution is equal in the two environments, i.e., $w_{1i}^{*}=w_{2i}^{*}$. $\mathcal{V}$ is defined as: $i \in \mathcal{V}$ if and only if the  weight associated with $i^{th}$ component in the least squares solution is not equal in the two environments, i.e., $w_{1i}^{*}\not=w_{2i}^{*}$. For an example of these sets, consider the least squares solution to the confounded only SEM in equation \eqref{sol_confouned} under Assumption \ref{sp_var} ,   $\bm{w}_{1}^{\mathsf{inv}}=\bm{w}_{2}^{\mathsf{inv}}=\bm{\gamma}\;\implies \mathcal{U}=\{1,\dots, p\}$, and  $\bm{w}_{1}^{\mathsf{var}}\not=\bm{w}_{2}^{\mathsf{var}} \implies \mathcal{V} = \{p+1,\dots, p+q\}$.

\begin{assumption}
\textbf{Features with varying coefficients across environments are uncorrelated.} For each $i\in \mathcal{V}$ and for each $e\in \{1,2\}$, the corresponding feature $X_{ei}$ is uncorrelated with every other feature $j\in \{1,\dots,n\}\backslash\{i\}$, i.e., $\mathbb{E}[X_{ei}X_{ej}]=\mathbb{E}[X_{ei}]\mathbb{E}[X_{ej}]$.
\label{dis_ass}
\end{assumption}

The above assumption says that any feature component whose  least squares optimal solution coefficient varies across environments is not correlated with the rest of the features.  We use the above assumption to derive an analytical expression for the NE of $\Gamma_c$ next.  

For a vector $\bm{a}$, $|\bm{a}|$ represents the vector of absolute values of all the elements. Element-wise product of two vectors $\bm{a}$ and $\bm{b}$ is written as $\bm{a}\odot \bm{b}$. Define an indicator function $\bm{1}_{\bm{a} \geq \bm{b}}$; it carries  out an element-wise comparison of $\bm{a}$ and $\bm{b}$ and it outputs a vector of ones and zeros, where a one at component $i$ indicates that $i^{th}$ component of $\bm{a}$, $a_i$, is greater than or equal to the $i^{th}$ component of $\bm{b}$, $b_i$. Recall that the ensemble predictor constructed from NE is  $\bar{\bm{w}}^{\dagger} = \bar{\bm{w}}_1^{\dagger} +\bar{\bm{w}}_2^{\dagger}$.

\begin{theorem}
\label{nash_char:thm}

If Assumptions \ref{ass1}, \ref{realize_ass}, \ref{dis_ass} hold, then the ensemble predictor, $\bar{\bm{w}}^{\dagger}$, constructed from the Nash equilibrium, $(\bm{w}_1^{\dagger}, \bm{w}_2^{\dagger})$, of $\Gamma_c$   is equal to
\begin{equation}
    \Big(\bm{w}_1^{*} \odot \bm{1}_{|\bm{w}_2^{*}|\geq |\bm{w}_1^{*}|} + \bm{w}_2^{*} \odot \bm{1}_{|\bm{w}_1^{*}|>|\bm{w}_2^{*}|} \Big) \bm{1}_{ \bm{w}_1^{*}\odot \bm{w}_2^{*} \ge \bm{0}}
    \label{ne_exp}
\end{equation}

\end{theorem}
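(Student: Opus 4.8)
The plan is to characterize every Nash equilibrium of $\Gamma_c$ through the players' constrained best responses and then read off the ensemble $\bar{\bm{w}}^{\dagger}$ one coordinate block at a time. First I would rewrite each risk as a weighted squared distance to the least squares optimum: completing the square gives $R_e(\bm{w}_1,\bm{w}_2)=(\bar{\bm{w}}-\bm{w}_e^{*})^{\mathsf{T}}\bm{\Sigma}_e(\bar{\bm{w}}-\bm{w}_e^{*})+\text{const}$, so environment $e$'s best response to its opponent $e'\neq e$ is the $\bm{\Sigma}_e$-projection of $\bm{w}_e^{*}-\bm{w}_{e'}$ onto the box $\mathcal{W}$. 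Next I would invoke Assumption \ref{dis_ass} together with the mean-zero part of Assumption \ref{ass1}: for $i\in\mathcal{V}$ these give $\mathbb{E}[X_{ei}X_{ej}]=0$ for all $j\neq i$, so each $\bm{\Sigma}_e$ is block diagonal with a diagonal $\mathcal{V}$-block and a (possibly dense) $\mathcal{U}$-block. Consequently both $R_1$ and $R_2$ split as a sum over the individual coordinates $i\in\mathcal{V}$ plus one quadratic term in the $\mathcal{U}$-block, and the whole game decouples into $|\mathcal{V}|$ independent scalar games and one vector game on $\mathcal{U}$.

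For a coordinate $i\in\mathcal{V}$ the scalar game has the two players choosing $u,v\in[-w^{\mathsf{sup}},w^{\mathsf{sup}}]$ to minimize $(u+v-a)^2$ and $(u+v-b)^2$ respectively, where $a=w_{1i}^{*}\neq b=w_{2i}^{*}$; the positive weights $[\bm{\Sigma}_e]_{ii}$ drop out since they only rescale each player's objective. Each best response is the clipping of $a-v$ (resp. $b-u$) to the interval, and I would run the case analysis over the interior/boundary combinations. Using realizability ($|a|,|b|\le w^{\mathsf{sup}}$, Assumption \ref{realize_ass}) one shows there is no interior/interior equilibrium, that same-sign optima force the player with the larger target to saturate the box so the sum collapses to the smaller-magnitude optimum, and that opposite-sign optima (or a vanishing one) drive the two players to opposite corners so the sum is $0$. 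This is exactly the coordinate-$i$ value of \eqref{ne_exp}. I expect this boundary analysis to be the main obstacle: the best responses are coupled and bind only at the box boundary, so one must track the signs of $a,b$ and which constraint is active in each regime, and realizability is precisely what guarantees that the asserted corner is reachable while the opponent's target is not.

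It remains to handle the $\mathcal{U}$-block, where $\bm{w}_{1,\mathcal{U}}^{*}=\bm{w}_{2,\mathcal{U}}^{*}=:\bm{c}$ by the definition of $\mathcal{U}$ and $\bm{c}\in\mathcal{W}$ by realizability; here \eqref{ne_exp} evaluates to $\bm{c}$, so I must show the equilibrium ensemble on $\mathcal{U}$ equals $\bm{c}$. Writing the equilibrium variational inequality for player $1$, if $\bm{c}-\bm{w}_{2,\mathcal{U}}^{\dagger}$ lies in the box then substituting it as a test vector yields $-(\bar{\bm{w}}_{\mathcal{U}}-\bm{c})^{\mathsf{T}}\bm{\Sigma}_{1,\mathcal{U}}(\bar{\bm{w}}_{\mathcal{U}}-\bm{c})\ge 0$, which by positive definiteness forces $\bar{\bm{w}}_{\mathcal{U}}=\bm{c}$; the symmetric statement holds for player $2$, and I would rule out the one remaining configuration (neither cross-difference feasible) by showing it contradicts at least one player's being a best response, as in the reachable case. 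When $\bm{\Sigma}_{1,\mathcal{U}}=\bm{\Sigma}_{2,\mathcal{U}}$ this is immediate, since the $\mathcal{U}$-game is then an exact potential game whose block-coordinate equilibria coincide with the global minimizers of the convex potential, attained at the reachable target $\bm{c}$. Assembling the $\mathcal{V}$-coordinates and the $\mathcal{U}$-block gives $\bar{\bm{w}}^{\dagger}$ equal to \eqref{ne_exp}, and since the argument pins down the ensemble at every equilibrium the stated value is independent of which NE (guaranteed to exist by the preceding proposition) is selected.
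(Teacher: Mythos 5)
Your plan follows essentially the same route as the paper's own proof: complete the square so that $R_e$ is the $\bm{\Sigma}_e$-weighted squared distance of the ensemble to $\bm{w}_e^{*}$, use Assumption~\ref{dis_ass} (with the mean-zero part of Assumption~\ref{ass1}) to make each $\bm{\Sigma}_e$ block diagonal with a diagonal $\mathcal{V}$-block, exploit the product structure of the $\ell_{\infty}$ ball to decouple the game into $|\mathcal{V}|$ scalar games plus one vector game on $\mathcal{U}$, and settle the scalar games by a clipping/boundary case analysis. Your treatment of the $\mathcal{V}$-coordinates matches the paper's and is correct, including the role of realizability in making the corner $w_{ei}^{*}-w^{\mathsf{sup}}$ feasible.

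The genuine gap is on the $\mathcal{U}$-block, in exactly the configuration you defer: the case where neither $\bm{c}-\bm{w}_{2,\mathcal{U}}^{\dagger}$ nor $\bm{c}-\bm{w}_{1,\mathcal{U}}^{\dagger}$ lies in the box cannot be ruled out, so the contradiction you promise does not exist. Concretely, take $n=2$ with both coordinates in $\mathcal{U}$, $w^{\mathsf{sup}}=1$, $\bm{w}_1^{*}=\bm{w}_2^{*}=\bm{c}=(1/2,-1/2)$ (strictly interior to $\mathcal{W}$, and realizable by setting $Y_e=\bm{c}^{\mathsf{T}}\bm{X}_e+\varepsilon_e$, so Assumptions~\ref{ass1}, \ref{realize_ass}, \ref{dis_ass} all hold, the last one vacuously since $\mathcal{V}=\emptyset$), and second-moment matrices
\begin{equation*}
\bm{\Sigma}_1=\begin{pmatrix}10 & 4\\ 4 & 2\end{pmatrix},\qquad \bm{\Sigma}_2=\begin{pmatrix}2 & 4\\ 4 & 10\end{pmatrix},
\end{equation*}
both positive definite. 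Then $\bm{w}_1^{\dagger}=(1,1)$, $\bm{w}_2^{\dagger}=(-1,-1)$ is a Nash equilibrium of $\Gamma_c$: writing $\bm{d}=\bar{\bm{w}}^{\dagger}-\bm{c}=(-1/2,1/2)$, player $1$'s gradient $2\bm{\Sigma}_1\bm{d}=(-6,-2)$ is componentwise nonpositive while player $1$ sits at the upper corner of the box, and player $2$'s gradient $2\bm{\Sigma}_2\bm{d}=(2,6)$ is componentwise nonnegative while player $2$ sits at the lower corner; these are precisely the KKT conditions of the two convex box-constrained best-response problems, which are sufficient. The ensemble is $(0,0)\neq\bm{c}$, whereas \eqref{ne_exp} evaluates to $\bm{c}$. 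So your variational-inequality step (which is correct, and cleaner than what the paper writes) covers only the feasible-test-vector case, and your potential-game remark covers only $\bm{\Sigma}_{1,\mathcal{U}}=\bm{\Sigma}_{2,\mathcal{U}}$; when the $\mathcal{U}$-features are correlated and the two environments' second moments differ, spurious corner equilibria genuinely exist. For what it is worth, the paper's proof glosses over the same point---it asserts that if the ensemble differs from $\bm{w}_{1+}^{*}$ then both players ``would prefer to move to a point where their gradients are zero,'' which ignores that the box can block a player whose gradient is nonzero---so what you have isolated is a soft spot of the theorem itself: closing your gap requires an additional hypothesis (e.g., the $\mathcal{U}$-features are also pairwise uncorrelated, or the $\mathcal{U}$-blocks of $\bm{\Sigma}_1$ and $\bm{\Sigma}_2$ coincide), under which your VI and potential-game arguments do complete the proof.
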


\textbf{Casewise analysis of NE in equation \eqref{ne_exp}}

$\bullet\; \bm{w}_1^{*}=\bm{w}_2^{*}$: Similar to Proposition \ref{prop1} $\big\{(\bm{w}_1^{\dagger}, \bm{w}_2^{\dagger}) \;| \; \bm{w}_1^{\dagger}\in \mathcal{W}, \bm{w}_2^{\dagger}\in \mathcal{W}, \bm{w}_1^{\dagger}+\bm{w}_2^{\dagger}= \bm{w}_1^{*}\big\}$ is the set of  NE of C-LRG

$\bullet$  $\bm{w}_1^{*}\not=\bm{w}_2^{*}$: We analyze this case under two categories
\begin{itemize}
    \item[$\square$] \textbf{Opposite  sign coefficients:} If  the $i^{th}$ component of $\bm{w}_1^{*}$ and $\bm{w}_2^{*}$ have opposite signs, then the $i^{th}$ component of the  ensemble predictor, $\bar{\bm{w}}^{\dagger}$, constructed from the NE of $\Gamma_c$, is zero, i.e., $\bar{w}^{\dagger}_{i} = \big[\bm{1}_{ \bm{w}_1^{*}\odot \bm{w}_2^{*} \ge \bm{0} }\big]_i=0$. In this case, the  coefficient of the environments' predictors in the NE, $w_{1i}^{\dagger}$ and $w_{2i}^{\dagger}$, have exact opposite signs and both are at the boundary one at  $w^{\mathsf{sup}}$ and other at $-w^{\mathsf{sup}}$. This case shows that when the features have a large variation in their least squares coefficients across environments, they can be spurious (see Proposition \ref{prop4}) and the  ensemble predictor filters them by assigning a zero weight to them.
    
    \item[$\square$] \textbf{Same  sign coefficients:} If  the $i^{th}$ component of $\bm{w}_1^{*}$ and $\bm{w}_2^{*}$ have same signs, then the $i^{th}$ component of ensemble predictor, $\bar{\bm{w}}^{\dagger}$, constructed from the NE of $\Gamma_c$, is set to the least squares coefficient with a smaller absolute value, i.e., $\bar{w}^{\dagger}_{i} =w_{1i}^{*}$, where $|w_{1i}^{*}|\leq |w_{2i}^{*}|$.  Suppose $0<w_{1i}^{*}<w_{2i}^{*}$,  the  coefficient of the environments' predictors in the NE, $w_{1i}^{\dagger}$ and $w_{2i}^{\dagger}$ have opposite signs, i.e., $w_{2i}^{\dagger} = w^{\mathsf{sup}}$ and $w_{1i}^{\dagger} = w_1^{*}- w^{\mathsf{sup}}$.  This shows that ensemble predictor is conservative and selects the smaller least squares coefficient. This property is useful to identifying predictors that are robust (see Proposition \ref{prop4}). Lastly, only when the least square coefficients are the same, i.e., $w_{1i}^{*}=w_{2i}^{*}$,  the coefficient of the environments' predictors in the NE can be in the interior, i.e.,  $|w_{1i}^{\dagger}|< w^{\mathsf{sup}}$ and $|w_{2i}^{\dagger}|<w^{\mathsf{sup}}$.
\end{itemize}

\subsubsection{Nash Equilibria for Linear SEMs}
Suppose for each environment $e\in \{1,2\}$ the data is generated from SEM in Assumption \ref{linear_model_ass}. We study if the NE of C-LRG, $\Gamma_c$, achieves or gets close to the ideal OOD predictor $(\bm{\gamma},0)$. We compare the ensemble predictors $\bm{\bar{w}}^{\dagger}$ constructed from the NE of $\Gamma_c$ to the solutions of ERM (Theorem \ref{nash_char:thm} enables this comparison). In ERM, the data from both the environments is combined and the overall least squares loss is minimized. Define the probability that a point is from environment $e$ as $\pi_{e}$ ($\pi_2=1-\pi_1)$. The set of ERM solutions for all distributions, $\{\pi_1,\pi_2\}$, is $\mathcal{S}^{\mathsf{ERM}}$ given as
\begin{equation*}
\Big\{\bm{w} \; \big| \pi_1 \in [0,1], \bm{w}\in \argmin_{\bm{\tilde{w}}\in \mathbb{R}^{n\times 1}}  \sum_{e\in \{1,2\}}\pi_{e}\mathbb{E}_e\big[\big(Y_e-\bm{\tilde{w}}^{\mathsf{T}}\bm{X}_e\big)^2\big] \Big\}
\end{equation*}

\begin{proposition}
\label{prop4}
If Assumption \ref{linear_model_ass} holds with $\bm{\alpha}_e=\bm{0}$ and $\bm{\Theta}_e$ an  orthogonal matrix for each $e\in \{1,2\}$, and Assumptions \ref{ass2}, \ref{sp_var}, \ref{realize_ass} hold, then $\|\bar{\bm{w}}^{\dagger} - (\bm{\gamma},0)\| < \|\bm{w}^{\mathsf{ERM}} - (\bm{\gamma},0)\|$ holds for all $\bm{w}^{\mathsf{ERM}} \in \mathcal{S}^{\mathsf{ERM}}$. \footnote{Exception occurs over measure zero set over probabilities $\pi_1$. If least squares solution are strictly ordered, i.e., $\forall i \in \{1,\dots,n\},0<w_{1i}^{*}< w_{2i}^{*}$ and $\pi_1=1$, then $ \bm{w}^{\mathsf{ERM}} = \bar{\bm{w}}^{\dagger}= \bm{w}_1^{*}$. In general, $\bm{w}_{1}^{*},\bm{w}_2^{*}$ are not ordered and $\pi_1\in(0,1)$, thus C-LRG improves over ERM.} Moreover, if all the components of two vectors $\bm{\Theta}_1\bm{\eta}_1$ and $\bm{\Theta}_2\bm{\eta}_2$ have opposite signs, then $\bar{\bm{w}}^{\dagger} = (\bm{\gamma},0)$.
\end{proposition}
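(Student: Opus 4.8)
The plan is to get closed forms for both $\bar{\bm{w}}^{\dagger}$ and the ERM solutions, and then compare them coordinate by coordinate; Theorem~\ref{nash_char:thm} makes the first closed form available and the block structure forced by the hypotheses makes the second tractable.

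\emph{Step 1: the Nash ensemble predictor.} By Proposition~\ref{ls_cfd_prop}, $\bm{w}_e^{*}=(\bm{\gamma},\bm{w}_e^{\mathsf{var}})$ with $\bm{w}_e^{\mathsf{var}}=D_e\bm{\Theta}_e\bm{\eta}_e$, where $D_e:=(\bm{\Theta}_e\bm{\Theta}_e^{\mathsf{T}}+\mathsf{diag}[\bm{\sigma}_{\bm{\zeta}_e}^{2}])^{-1}=(\bm{I}_q+\mathsf{diag}[\bm{\sigma}_{\bm{\zeta}_e}^{2}])^{-1}$ is a diagonal matrix with entries in $(0,1)$ (using orthogonality of $\bm{\Theta}_e$ and Assumption~\ref{ass2}). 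Since $\bm{w}_1^{*}$ and $\bm{w}_2^{*}$ agree on the first $p$ coordinates, Theorem~\ref{nash_char:thm} gives $\bar{w}_i^{\dagger}=\gamma_i$ for $i\le p$, while for $i>p$, $\bar{w}_i^{\dagger}=0$ when $w_{1i}^{*}$ and $w_{2i}^{*}$ have strictly opposite signs and otherwise $\bar{w}_i^{\dagger}$ equals whichever of $w_{1i}^{*},w_{2i}^{*}$ has smaller absolute value. Hence $\|\bar{\bm{w}}^{\dagger}-(\bm{\gamma},0)\|^{2}=\sum_{i>p}(\bar{w}_i^{\dagger})^{2}$, and coordinates $i>p$ with $w_{1i}^{*}=w_{2i}^{*}$ contribute the same value there as in the ERM analogue, so it suffices to compare coordinates $i\in\mathcal{V}$.

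\emph{Step 2: the ERM solution.} I would write $\bm{w}^{\mathsf{ERM}}=(\sum_e\pi_e\bm{\Sigma}_e)^{-1}(\sum_e\pi_e\bm{\rho}_e)$, which is well defined since each $\bm{\Sigma}_e$ is positive definite. Orthogonality of $\bm{\Theta}_e$ (equivalently Assumption~\ref{dis_ass}), together with the independence of $\bm{X}_e^{1}$ from $\bm{H}_e$, $\varepsilon_e$ and $\bm{\zeta}_e$, makes $\bm{\Sigma}_e$ block diagonal: leading $p\times p$ block $\bm{\Sigma}_{e1}$, trailing block the diagonal matrix $\bm{I}_q+\mathsf{diag}[\bm{\sigma}_{\bm{\zeta}_e}^{2}]$, and zero cross terms. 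So the normal equations decouple. On the causal block, $\mathbb{E}_e[\bm{X}_e^{1}Y_e]=\bm{\Sigma}_{e1}\bm{\gamma}$ (the $\bm{H}_e$ and $\varepsilon_e$ terms drop out), whence $\bm{w}^{\mathsf{ERM}}$ restricted to $\{1,\dots,p\}$ equals $(\sum_e\pi_e\bm{\Sigma}_{e1})^{-1}(\sum_e\pi_e\bm{\Sigma}_{e1})\bm{\gamma}=\bm{\gamma}$. On a spurious coordinate $i>p$, $\rho_{e,i}=\Sigma_{e,ii}w_{ei}^{*}$, so $w_i^{\mathsf{ERM}}=\lambda_i w_{1i}^{*}+(1-\lambda_i)w_{2i}^{*}$ with $\lambda_i=\pi_1\Sigma_{1,ii}/(\pi_1\Sigma_{1,ii}+\pi_2\Sigma_{2,ii})\in[0,1]$, i.e.\ a convex combination of $w_{1i}^{*}$ and $w_{2i}^{*}$. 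Thus $\|\bm{w}^{\mathsf{ERM}}-(\bm{\gamma},0)\|^{2}=\sum_{i>p}(\lambda_i w_{1i}^{*}+(1-\lambda_i)w_{2i}^{*})^{2}$.

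\emph{Step 3: coordinatewise comparison, and the second claim.} Fix $i\in\mathcal{V}$. If $w_{1i}^{*},w_{2i}^{*}$ have the same sign, a convex combination lies between them, so $|w_i^{\mathsf{ERM}}|\ge\min(|w_{1i}^{*}|,|w_{2i}^{*}|)=|\bar{w}_i^{\dagger}|$, with equality only if $\lambda_i\in\{0,1\}$ (i.e.\ $\pi_1\in\{0,1\}$) or $w_{1i}^{*}=w_{2i}^{*}$; if they have opposite signs, $\bar{w}_i^{\dagger}=0\le|w_i^{\mathsf{ERM}}|$, with equality only at the single value of $\lambda_i$ that makes the combination vanish. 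Summing gives $\|\bar{\bm{w}}^{\dagger}-(\bm{\gamma},0)\|\le\|\bm{w}^{\mathsf{ERM}}-(\bm{\gamma},0)\|$ unconditionally; Assumption~\ref{sp_var} supplies some $i_0\in\mathcal{V}$ with $w_{1i_0}^{*}\ne w_{2i_0}^{*}$, at which the inequality is strict for every $\pi_1$ outside a measure-zero set, which is precisely the exception noted in the footnote (the lone bad value being $\pi_1=1$ when the solutions are strictly ordered). For the last assertion: $D_e$ is diagonal with positive entries, so $\mathrm{sign}([\bm{w}_e^{\mathsf{var}}]_i)=\mathrm{sign}([\bm{\Theta}_e\bm{\eta}_e]_i)$; if $\bm{\Theta}_1\bm{\eta}_1$ and $\bm{\Theta}_2\bm{\eta}_2$ are coordinatewise of opposite sign, then so are $\bm{w}_1^{\mathsf{var}}$ and $\bm{w}_2^{\mathsf{var}}$, every spurious coordinate falls in the opposite-sign case, $\bar{w}_i^{\dagger}=0$ there, and $\bar{\bm{w}}^{\dagger}=(\bm{\gamma},0)$.

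The step I expect to need the most care is the strictness bookkeeping in Step 3: one must verify that a \emph{single} strict inequality (at $i_0$) survives simultaneously against \emph{all} $\bm{w}^{\mathsf{ERM}}\in\mathcal{S}^{\mathsf{ERM}}$, which forces the measure-zero qualification; and, secondarily, the block-diagonalization of $\bm{\Sigma}_e$ in Step 2 must correctly invoke both orthogonality of $\bm{\Theta}_e$ (or Assumption~\ref{dis_ass}) to kill cross terms and the independence of $\bm{X}_e^{1}$ from $(\bm{H}_e,\varepsilon_e)$ to reduce $\mathbb{E}_e[\bm{X}_e^{1}Y_e]$ to $\bm{\Sigma}_{e1}\bm{\gamma}$.
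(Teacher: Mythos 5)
Your proposal is correct and follows essentially the same route as the paper's proof: verify that the hypotheses imply those of Theorem \ref{nash_char:thm} via the block-diagonal structure of $\bm{\Sigma}_e$ forced by orthogonality of $\bm{\Theta}_e$, read off $\bar{\bm{w}}^{\dagger}$ from Theorem \ref{nash_char:thm}, compute $\bm{w}^{\mathsf{ERM}}$ in closed form, and compare coordinatewise on the spurious block. Your observation that $w_i^{\mathsf{ERM}}$ is a convex combination of $w_{1i}^{*}$ and $w_{2i}^{*}$ is a cleaner packaging of the paper's explicit difference-of-magnitudes computations, but the underlying casework (same sign versus opposite sign, with the measure-zero exceptions at $\pi_1\in\{0,1\}$ or at the single cancelling mixture weight) is identical.
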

We use Theorem \ref{nash_char:thm} to arrive at the above (Assumptions in the above proposition imply that Assumptions \ref{ass1}, \ref{realize_ass}, \ref{dis_ass} hold thus allowing us to use Theorem \ref{nash_char:thm}). From the first part of the above we learn that for many confounder only models ($\bm{\alpha}_e=\bm{0}$, $\bm{\Theta}_e$ an  orthogonal matrix), the ensemble predictor constructed from the NE is closer to the ideal OOD solution than ERM. For the second part, set $\bm{\Theta}_e = \bm{I}_{q}$, where $\bm{I}_q$ is identity matrix.  Suppose the signs of all the components of $\bm{\eta}_1$ and $\bm{\eta}_2$ disagree. As a result, the signs of latter half of least squares solution $\bm{w}_{e}^{\mathsf{var}}$ (in equation \eqref{sol_confouned}) disagree. From Theorem \ref{nash_char:thm}, we know that if the signs of the coefficients in least squares  solution disagree, then the corresponding coefficient in the ensemble predictor is zero, which implies $\bar{\bm{w}}^{\dagger}=(\bm{\gamma},0)$.

\textbf{Remark.} In Proposition \ref{prop4}, besides the regularity conditions, the main assumption is $\bm{\Theta}_e$ is orthogonal. This assumption ensures that the the spurious features $\bm{X}_e^2$ are uncorrelated (Assumption \ref{dis_ass}). For confounder only models this  seems reasonable. However, in the models involving anti-causal variables, i.e., $\bm{\alpha}_e\not=0$, the spurious features can be correlated and one may wonder how does the ensemble predictor behave in such setups? In experiments, we show that ensemble predictors perform well in these settings as well. Extending the theory to  anti-causal models is a part of future work.

\textbf{Insights from Theorem \ref{nash_char:thm}, Proposition \ref{prop4}}

Suppose the data comes from the SEM in Assumption \ref{linear_model_ass}. For this SEM, \cite{arjovsky2019invariant} showed that if the number of environments grow linearly in the total number of features, then the solution to non-convex IRM optimization recovers the ideal OOD predictor. We showed that for many confounder only SEMs ($\bm{\alpha}_e=0$ and $\bm{\Theta}_e$ orthogonal) NE based ensemble predictor gets closer to the OOD predictor than ERM and sometimes recovers it exactly with just \emph{two environments}, while \emph{no such guarantees} exist for IRM.  Next, we show how to learn these NE based ensemble predictor.

\begin{algorithm}
\SetAlgoLined
  \textbf{Initialize:} $\tilde{\bm{w}}_1= \bm{0}$, $\tilde{\bm{w}}_2 = \bm{0},$
  $p=0$
  
 \While{$w^{\mathsf{diff}}_1> 0$ or $w^{\mathsf{diff}}_2>0$}{
$\;\;\tilde{\bm{w}}_1^{\mathsf{cur}} = \tilde{\bm{w}}_1 $, $\tilde{\bm{w}}_2^{\mathsf{cur}} = \tilde{\bm{w}}_2 $

  $\;\;\tilde{\bm{w}}_1 = \min_{\bm{w}_1\in \mathcal{W}}R_1(\bm{w}_1,\tilde{\bm{w}}_2)\;$
  
  $\;\;\tilde{\bm{w}}_2 = \min_{\bm{w}_2\in \mathcal{W}}R_2(\tilde{\bm{w}}_1,\bm{w}_2)\;$
  
     $\;\;w^{\mathsf{diff}}_1 = \|\tilde{\bm{w}}_1^{\mathsf{cur}} - \tilde{\bm{w}}_1 \|$,
  $w^{\mathsf{diff}}_2 = \|\tilde{\bm{w}}_2^{\mathsf{cur}} - \tilde{\bm{w}}_2 \|$
 }
 \KwOut{$\bar{\bm{w}}^{+} = \tilde{\bm{w}}_1+\tilde{\bm{w}}_2$}
 \caption{Best response based learning}
 \label{algorithm1}
\end{algorithm}
\subsection{Learning NE of C-LRG}

In this section, we show how we can use best response dynamics (BRD) \cite{fudenberg1998theory} to learn the NE. Each environment takes its turn and finds the best possible model given the choice made by the other environment.  This procedure (Algorithm \ref{algorithm1}) is allowed to run until the environments stop updating their models. In  the next theorem, we make the same set of Assumptions as in Theorem \ref{nash_char:thm} and show that Algorithm \ref{algorithm1} converges to the  NE derived in Theorem \ref{nash_char:thm}. 
 \begin{theorem}
\label{brd:thm1}
If Assumption \ref{ass1}, \ref{realize_ass}, \ref{dis_ass} hold, then the output of Algorithm \ref{algorithm1}, $\bar{\bm{w}}^{+}$, is
\begin{equation*}
    \Big(\bm{w}_1^{*} \odot \bm{1}_{|\bm{w}_2^{*}|\geq |\bm{w}_1^{*}|} + \bm{w}_2^{*} \odot \bm{1}_{|\bm{w}_1^{*}|>|\bm{w}_2^{*}|} \Big) \bm{1}_{ \bm{w}_1^{*}\odot \bm{w}_2^{*} \ge \bm{0}}
    \label{ne_exp1}
\end{equation*}
\end{theorem}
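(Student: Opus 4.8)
The plan is to track the iterates of Algorithm \ref{algorithm1} explicitly and show they stabilize, in one step per environment, at the Nash equilibrium characterized in Theorem \ref{nash_char:thm}. The key observation is that the best-response map has a closed form. For fixed $\tilde{\bm{w}}_2$, environment $1$ solves $\min_{\bm{w}_1 \in \mathcal{W}} \mathbb{E}_1[(Y_1 - \bm{w}_1^{\mathsf{T}}\bm{X}_1 - \tilde{\bm{w}}_2^{\mathsf{T}}\bm{X}_1)^2]$; this is a quadratic in $\bm{w}_1$ with Hessian $2\bm{\Sigma}_1 \succ 0$ (Assumption \ref{ass1}), so the unconstrained minimizer is $\bm{w}_1^{*} - \tilde{\bm{w}}_2$ and the constrained minimizer is its projection onto the $\ell_\infty$ box $\mathcal{W}$. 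Under Assumption \ref{dis_ass}, for every coordinate $i \in \mathcal{V}$ the feature $X_{ei}$ is uncorrelated with all others, so $\bm{\Sigma}_e$ is block diagonal in the $\mathcal{V}$-coordinates, the quadratic decouples across those coordinates, and the box-projection is exactly the coordinatewise clip $\mathrm{clip}(w_{1i}^{*} - \tilde{w}_{2i}, -w^{\mathsf{sup}}, w^{\mathsf{sup}})$. (For $i \in \mathcal{U}$ the coordinates may be coupled, but there $w_{1i}^{*} = w_{2i}^{*}$ and we will see the box constraint is inactive, so the coupled sub-problem just returns $\bm{w}_\mathcal{U}^{*} - (\tilde{\bm{w}}_2)_\mathcal{U}$.) First I would record this closed-form best response as a lemma.

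Next I would run the dynamics from the initialization $\tilde{\bm{w}}_1 = \tilde{\bm{w}}_2 = \bm{0}$ and compute, coordinate by coordinate, what happens in the first round and then argue a fixed point is reached. Start: $\tilde{\bm{w}}_2 = \bm{0}$, so environment $1$'s update gives $\tilde{\bm{w}}_1 = \mathrm{clip}(\bm{w}_1^{*}, -w^{\mathsf{sup}}, w^{\mathsf{sup}}) = \bm{w}_1^{*}$ by Realizability (Assumption \ref{realize_ass}). Then environment $2$ updates to $\tilde{\bm{w}}_2 = \mathrm{clip}(\bm{w}_2^{*} - \bm{w}_1^{*})$ coordinatewise. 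Now I do the casework of the ``casewise analysis'' paragraph on each coordinate $i$: if $w_{1i}^{*} = w_{2i}^{*}$ the clip is inactive and both players are already at a joint best response there; if $w_{1i}^{*}, w_{2i}^{*}$ have the same sign with, say, $0 < w_{1i}^{*} \le w_{2i}^{*}$, then $w_{2i}^{*} - w_{1i}^{*} \ge 0$ and (using $w_{1i}^{*}, w_{2i}^{*} \le w^{\mathsf{sup}}$, hence $w_{2i}^{*} - w_{1i}^{*} \le w^{\mathsf{sup}}$) the clip is again inactive so $\tilde w_{2i} = w_{2i}^{*} - w_{1i}^{*}$, giving ensemble coordinate $w_{2i}^{*}$ — wait, we want $w_{1i}^{*}$, the smaller one, so here I must instead check environment $1$'s next move: with $\tilde w_{2i} = w_{2i}^{*} - w_{1i}^{*} > 0$, environment $1$ wants $w_{1i}^{*} - \tilde w_{2i} = 2w_{1i}^{*} - w_{2i}^{*}$, which may now hit $-w^{\mathsf{sup}}$; continuing, the $\mathcal V$-coordinates cycle through clips until one player saturates at $\pm w^{\mathsf{sup}}$ and the other sits at $\bar w^{\dagger}_i \mp w^{\mathsf{sup}}$, and I verify this configuration is a coordinatewise fixed point matching \eqref{ne_exp1}; if $w_{1i}^{*}, w_{2i}^{*}$ have opposite signs, the saturation pushes the two players to $+w^{\mathsf{sup}}$ and $-w^{\mathsf{sup}}$ with ensemble coordinate $0$.

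The main obstacle is making the coordinatewise-cycling argument rigorous and, crucially, showing it terminates in finitely many outer-loop iterations (the algorithm's stopping test is $w^{\mathsf{diff}}_1 = w^{\mathsf{diff}}_2 = 0$, exact equality). The clean way is to verify directly that after at most two or three rounds the pair $(\tilde{\bm{w}}_1, \tilde{\bm{w}}_2)$ equals the explicit NE pair described in the casewise analysis — environment $1$ plays $\bm{w}_1^{*}$ clipped against environment $2$'s saturated choices, environment $2$ plays the $\pm w^{\mathsf{sup}}$ pattern — and then check that this pair is a simultaneous best response, so both diffs are zero and the loop exits; its ensemble is exactly \eqref{ne_exp1}, and by Theorem \ref{nash_char:thm} it is the (essentially unique, on the $\mathcal V$-coordinates) NE. Finally I would remark that on the $\mathcal U$-block the output ensemble is $\bm{w}_1^{*}|_\mathcal{U}$, which matches \eqref{ne_exp1} there since $\bm{1}_{|\bm{w}_2^{*}| \ge |\bm{w}_1^{*}|}$ and $\bm{1}_{\bm{w}_1^{*}\odot\bm{w}_2^{*}\ge\bm{0}}$ both act as the identity when $w_{1i}^{*} = w_{2i}^{*}$.
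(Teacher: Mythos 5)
Your overall strategy is the same as the paper's: reduce each best response to the projection $\Pi_{\mathcal{W}}[\bm{w}_e^{*}-\tilde{\bm{w}}_{-e}]$, use Assumption \ref{dis_ass} to decouple the problem into the $\mathcal{U}$-block (which stabilizes after the first round by Realizability, since $\bm{w}_{1+}^{*}=\bm{w}_{2+}^{*}$) plus independent scalar games on the $\mathcal{V}$-coordinates, track the monotone iterates on each $\mathcal{V}$-coordinate until one player saturates at $\pm w^{\mathsf{sup}}$, and verify that the saturated configuration is a simultaneous best response whose ensemble matches the expression of Theorem \ref{nash_char:thm}. One caution on the $\mathcal{U}$-block: the constrained minimizer of the coupled quadratic is the projection in the $\bm{\Sigma}_{e+}$-norm, not the Euclidean box projection, but as you note this is immaterial because along the trajectory the unconstrained minimizer is always feasible there.

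The one concrete error is your proposed finishing step: the claim that ``after at most two or three rounds the pair $(\tilde{\bm{w}}_1,\tilde{\bm{w}}_2)$ equals the explicit NE pair'' is false. On a coordinate $i\in\mathcal{V}$ with, say, $0<w_{1i}^{*}<w_{2i}^{*}$, the iterates are $\tilde w_{1i}=w_{1i}^{*}-(k-1)\Delta_i^{*}$ and $\tilde w_{2i}=k\Delta_i^{*}$ with $\Delta_i^{*}=w_{2i}^{*}-w_{1i}^{*}$, so neither player saturates until $k$ is on the order of $w^{\mathsf{sup}}/\Delta_i^{*}$, which is unbounded as $\Delta_i^{*}\to 0$ (the paper itself states the bound $2w^{\mathsf{sup}}/\Delta_{\mathsf{min}}$ on the number of rounds in the main text). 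The fix is precisely the mechanism you describe in the preceding sentence and should replace the shortcut: the two scalar sequences are strictly monotone with fixed per-round increment $|\Delta_i^{*}|>0$ and are confined to $[-w^{\mathsf{sup}},w^{\mathsf{sup}}]$, so one of them reaches the boundary after finitely many rounds; one further exchange of best responses then produces an exact simultaneous best response (both diffs are zero, the loop exits), and the resulting ensemble coordinate is $0$ in the opposite-sign case and the smaller-magnitude least-squares coefficient in the same-sign case, matching the stated expression. With that substitution your argument coincides with the paper's proof.
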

We now understand different aspects of BRD.

\textbf{Dynamics of BRD.} Consider the $i^{th}$ component of the predictors $\tilde{w}_{1i}$ and $\tilde{w}_{2i}$ from Algorithm \ref{algorithm1}. Suppose $w_{1i}^{*}>w_{2i}^{*}$ and $|w_{1i}^{*}|>|w_{2i}^{*}|$. The two environments push the ensemble predictor, $\tilde{w}_{1i}+ \tilde{w}_{2i}$, in opposite directions during their turns, with the first environment increasing its weight, $\tilde{w}_{1i}$, and the second environment decreasing its weight, $\tilde{w}_{2i}$. Eventually, the environment with a higher absolute value ($e=1$ since $|w_{1i}^{*}|>|w_{2i}^{*}|$) reaches the boundary ($\tilde{w}_{1i}=w^{\mathsf{sup}}$) and cannot move any further due to the constraint. The other environment ($e=2$) best responds. It either hits the other end of the boundary ($\tilde{w}_{2i}=-w^{\mathsf{sup}}$), in which case the weight of the ensemble for component $i$ is zero, or gets close to the other boundary while staying in the interior ($\tilde{w}_{2i}=w_{2i}^{*}-w^{\mathsf{sup}}$), in which case the weight of the ensemble for component $i$ is $w_{2i}^{*}$.

\textbf{BRD a sequence of convex minimizations.} In Algorithm \ref{algorithm1}, we assumed that at each time step each environment can do an exact minimization operation. The minimization for each environment is a simple least squares regression, which is a convex quadratic minimization problem. There can be several ways of solving it  -- gradient descent for $R_e$ and solving for gradient of $R_e$ equals zero directly, which is a linear system of equations. We provide a simple bound for the total number of convex minimizations (or turns for each environment) in Algorithm \ref{algorithm1} next. For each $i\in \mathcal{V}$ (defined in Section \ref{secn:clrg}), compute the distance between the least square coefficients in the two environments $|w_{1i}^{*}-w_{2i}^{*}|$ and find the least distance over the set $\mathcal{V}$ given as $\Delta_{\mathsf{min}} = \min_{i\in \mathcal{V}}|w_{1i}^{*}-w_{2i}^{*}|$ (following the definition of $\mathcal{V}$ this distance is positive). The bound on number of minimizations is $\frac{2w^{\mathsf{sup}}}{\Delta_{\mathsf{min}}}$.

\subsubsection{Learning NE of C-LRG: Linear SEMs}

Suppose the data is generated from SEM in Assumption \ref{linear_model_ass}. Next, we show the final result that the NE based predictor, which we proved in Proposition \ref{prop4} is closer to the OOD solution, is achieved by Algorithm \ref{algorithm1}. 
 
\begin{proposition}
\label{prop5}
If Assumption \ref{linear_model_ass} holds with $\bm{\alpha}_e=\bm{0}$ and $\bm{\Theta}_e$ an  orthogonal matrix for each $e\in \{1,2\}$, and Assumptions \ref{ass2}, \ref{sp_var}, \ref{realize_ass} hold, then the output of Algorithm \ref{algorithm1}, $\bar{\bm{w}}^{+}$ obeys  $\|\bar{\bm{w}}^{+} - (\bm{\gamma},0)\| < \|\bm{w}^{\mathsf{ERM}} - (\bm{\gamma},0)\|$ for all $\bm{w}^{\mathsf{ERM}} \in \mathcal{S}^{\mathsf{ERM}}$ except over a set of measure zero (see footnote 2).  Moreover, if all the components of vectors $\bm{\Theta}_1\bm{\eta}_1$ and $\bm{\Theta}_2\bm{\eta}_2$ have opposite signs, then $\bar{\bm{w}}^{+}  = (\bm{\gamma},0)$.
\end{proposition}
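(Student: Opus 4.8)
The plan is to obtain Proposition \ref{prop5} as an immediate consequence of Theorem \ref{brd:thm1} and Proposition \ref{prop4}: the former identifies \emph{what} Algorithm \ref{algorithm1} outputs, the latter tells us how that quantity compares with ERM. The only genuine work is checking that the hypotheses line up.

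First I would verify that the hypotheses of Proposition \ref{prop5} imply those of Theorem \ref{brd:thm1}, namely Assumptions \ref{ass1}, \ref{realize_ass}, and \ref{dis_ass}. Assumption \ref{realize_ass} is assumed outright. Assumption \ref{ass1} follows from Assumption \ref{ass2} for the SEM family of Assumption \ref{linear_model_ass}, as already noted below Assumption \ref{ass2}: every exogenous variable in \eqref{linear_model} is zero-mean so $\bm{\mu}_e=\bm{0}$, and $\bm{\Sigma}_{e1}\succ 0$ together with the entrywise positivity of $\bm{\sigma}_{\bm{\zeta}_e}^2$ forces $\bm{\Sigma}_e\succ 0$. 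For Assumption \ref{dis_ass} I would run the short second-moment computation already sketched in the remark after Proposition \ref{prop4}: with $\bm{\alpha}_e=\bm{0}$ the model reduces to $\bm{X}_e^2 = \bm{\Theta}_e\bm{H}_e + \bm{\zeta}_e$, and since the components of $\bm{H}_e$ are i.i.d.\ zero-mean unit-variance, $\bm{\zeta}_e$ is independent zero-mean noise, and the causal block $\bm{X}_e^1$ is exogenous (hence independent of $\bm{H}_e$ and $\bm{\zeta}_e$), one gets $\mathbb{E}[\bm{X}_e^2(\bm{X}_e^1)^{\mathsf{T}}]=\bm{0}$ and $\mathbb{E}[\bm{X}_e^2(\bm{X}_e^2)^{\mathsf{T}}]=\bm{\Theta}_e\bm{\Theta}_e^{\mathsf{T}}+\mathsf{diag}[\bm{\sigma}_{\bm{\zeta}_e}^2]$. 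Orthogonality of $\bm{\Theta}_e$ makes the latter $\bm{I}_q+\mathsf{diag}[\bm{\sigma}_{\bm{\zeta}_e}^2]$, which is diagonal. By Proposition \ref{ls_cfd_prop} and Assumption \ref{sp_var}, the varying set is exactly $\mathcal{V}=\{p+1,\dots,p+q\}$, i.e.\ the coordinates of $\bm{X}_e^2$, so each such coordinate is uncorrelated with every other feature and Assumption \ref{dis_ass} holds.

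Next I would invoke Theorem \ref{brd:thm1}: under Assumptions \ref{ass1}, \ref{realize_ass}, \ref{dis_ass}, the output of Algorithm \ref{algorithm1} is $\bar{\bm{w}}^{+} = \big(\bm{w}_1^{*} \odot \bm{1}_{|\bm{w}_2^{*}|\geq |\bm{w}_1^{*}|} + \bm{w}_2^{*} \odot \bm{1}_{|\bm{w}_1^{*}|>|\bm{w}_2^{*}|} \big)\,\bm{1}_{ \bm{w}_1^{*}\odot \bm{w}_2^{*} \ge \bm{0}}$, which is verbatim the expression \eqref{ne_exp} characterizing the ensemble predictor $\bar{\bm{w}}^{\dagger}$ built from the Nash equilibrium of $\Gamma_c$ in Theorem \ref{nash_char:thm}. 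Hence $\bar{\bm{w}}^{+}=\bar{\bm{w}}^{\dagger}$. Finally, since the hypotheses of Proposition \ref{prop5} are identical to those of Proposition \ref{prop4}, I would apply Proposition \ref{prop4} and substitute $\bar{\bm{w}}^{\dagger}=\bar{\bm{w}}^{+}$ throughout its conclusion: this yields $\|\bar{\bm{w}}^{+} - (\bm{\gamma},0)\| < \|\bm{w}^{\mathsf{ERM}} - (\bm{\gamma},0)\|$ for every $\bm{w}^{\mathsf{ERM}}\in\mathcal{S}^{\mathsf{ERM}}$ outside the measure-zero set of $\pi_1$ values flagged in Proposition \ref{prop4}, and $\bar{\bm{w}}^{+}=(\bm{\gamma},0)$ whenever all components of $\bm{\Theta}_1\bm{\eta}_1$ and $\bm{\Theta}_2\bm{\eta}_2$ have opposite signs, which is exactly the claim.

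In short, this proof is bookkeeping rather than a new argument: once Theorem \ref{brd:thm1} and Proposition \ref{prop4} are available, there is no hard step. The one point to be careful about is making the identity $\bar{\bm{w}}^{+}=\bar{\bm{w}}^{\dagger}$ literal — same formula, same hypotheses — so that the measure-zero caveat of Proposition \ref{prop4} transfers unchanged, and double-checking the mean-zero / positive-definiteness bookkeeping that lets Assumption \ref{ass2} stand in for Assumption \ref{ass1}. The only conceivable obstacle would be a mismatch in the exceptional-set description between the two results, but since Proposition \ref{prop4} already states the caveat in the form used here, no extra work is needed.
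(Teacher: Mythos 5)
Your proposal is correct and follows essentially the same route as the paper's proof: verify that the hypotheses imply Assumptions \ref{ass1}, \ref{realize_ass}, \ref{dis_ass} (which the paper delegates to the proof of Proposition \ref{prop4}), apply Theorem \ref{brd:thm1} to identify $\bar{\bm{w}}^{+}$ with the Nash-equilibrium ensemble predictor of equation \eqref{ne_exp}, and then transfer the conclusions of Proposition \ref{prop4} verbatim, including the measure-zero caveat.
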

We use Theorem \ref{brd:thm1} to arive at the above result. We have shown through Theorem \ref{nash_char:thm}, Proposition \ref{prop4}, Theorem \ref{brd:thm1} and Proposition \ref{prop5} that the NE based ensemble predictor of $\Gamma_c$ has good OOD properties and it can be learned by solving a sequence of convex quadratic minimizations. 

\textbf{Extensions:} In the Appendix, we extend the Theorem \ref{brd:thm1} to other BRD commonly used in literature (alternating gradient descent). We discuss convergence to NE when Assumption \ref{dis_ass} (uncorrelatedness) may not hold. Finally, we also have a high level discussion on extending the entire setup to  non-linear models and multiple environments.

\begin{figure*}[htbp]
\centering
  \includegraphics[width=0.4\textwidth]{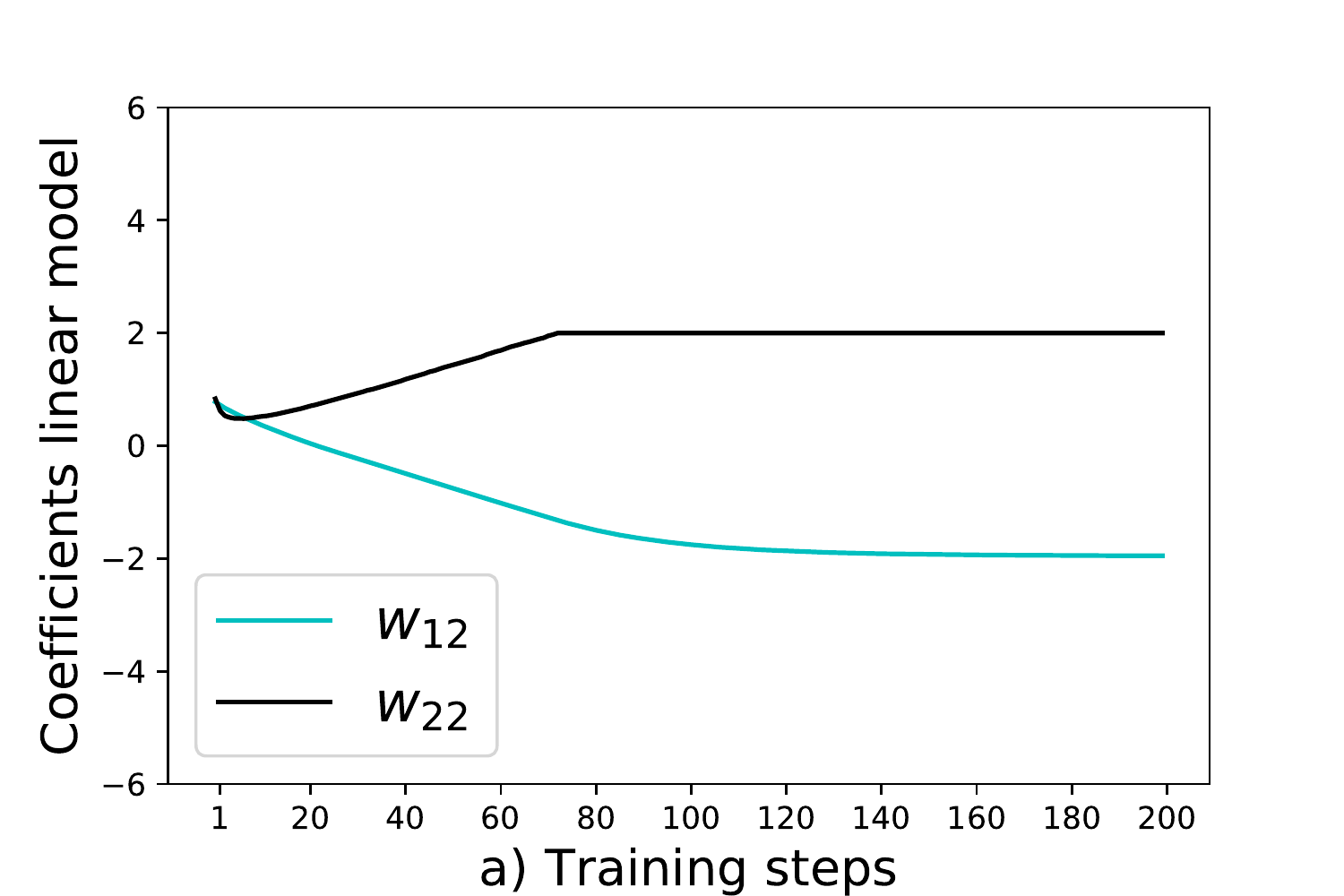}
  \includegraphics[width=0.4\textwidth]{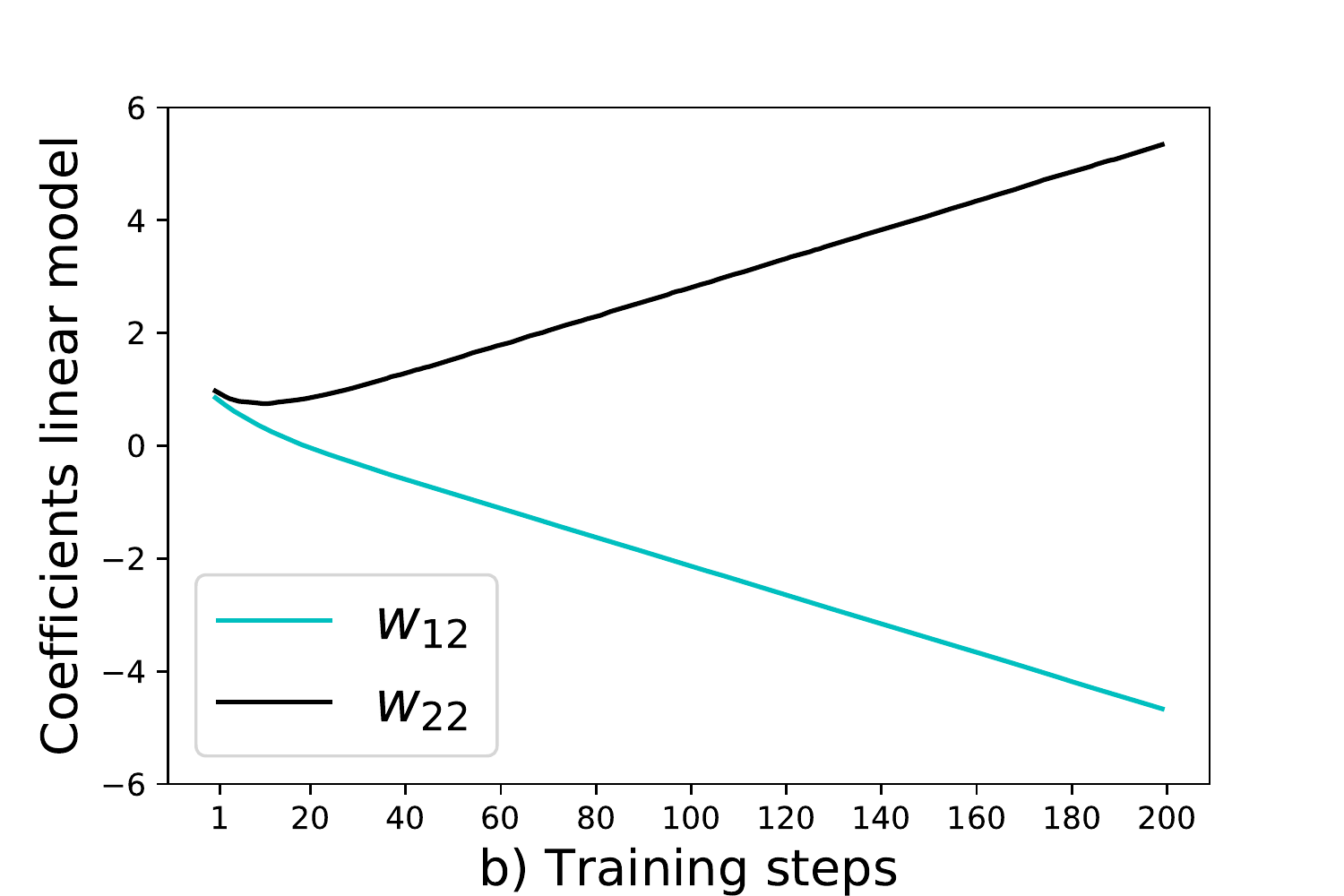}
  \includegraphics[width=0.4\textwidth]{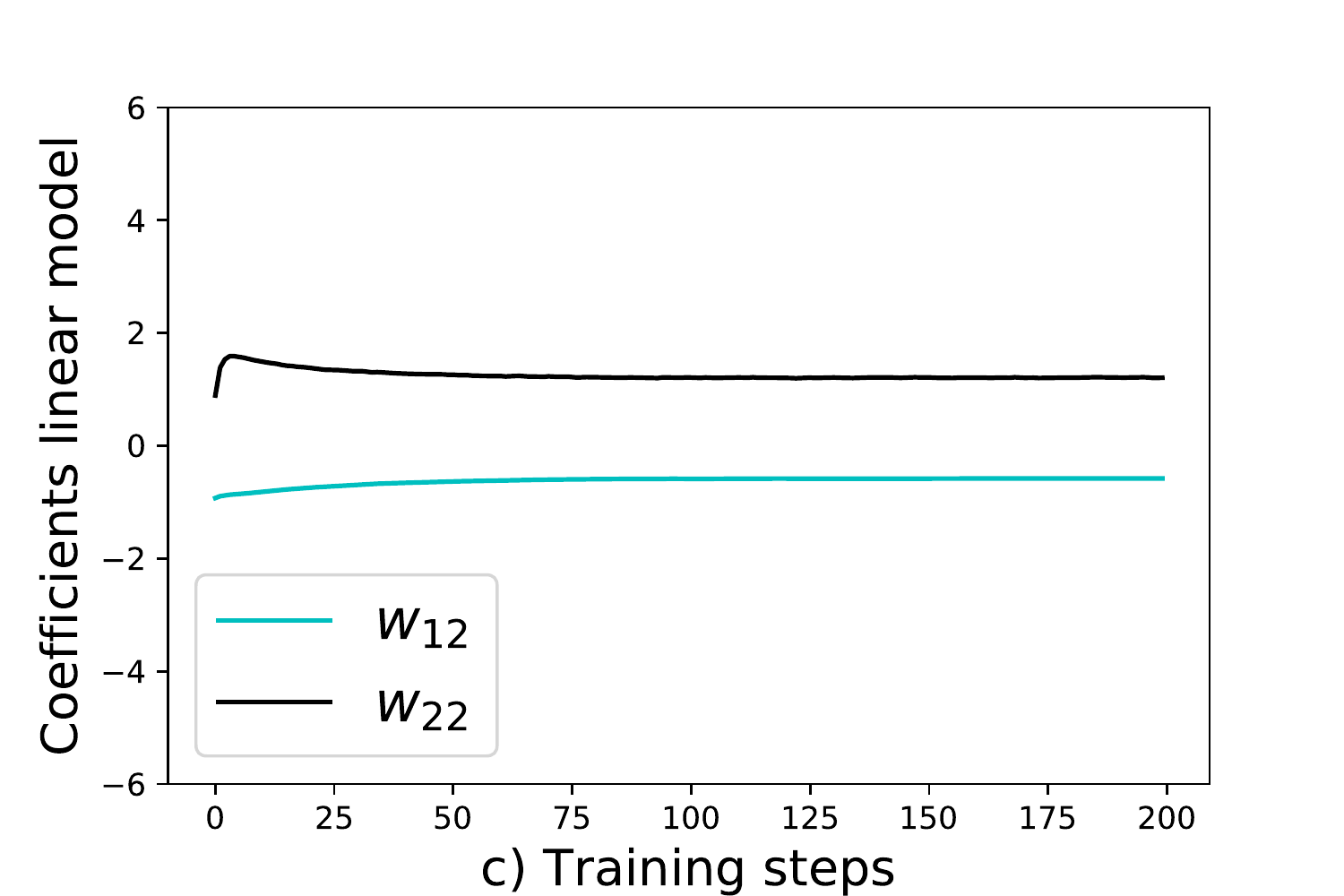}
  \includegraphics[width=0.4\textwidth]{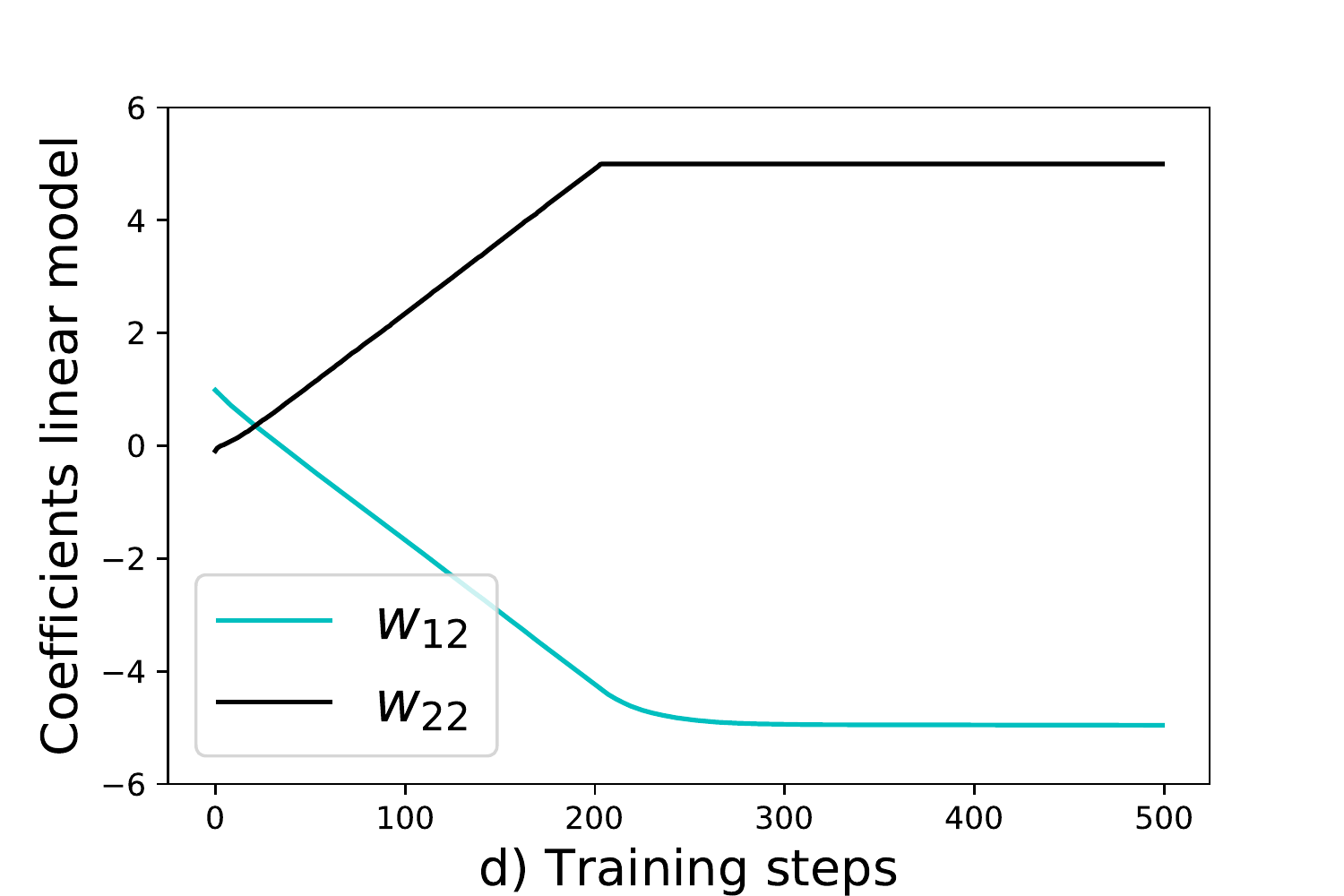}
\caption{\small{a) C-LRG ($w^{\mathsf{sup}}=2$), b) U-LRG, c) R$_{\infty}$-LRG , d) C-LRG ($w^{\mathsf{sup}}= 5$) }}
\label{fig1}
\end{figure*}

\section{Experiments}

In this section, we run the regression experiments described in \cite{arjovsky2019invariant}. We use the SEM in Assumption \ref{linear_model_ass} with following configurations.

  $\bullet$ $\bm{\gamma}$ is a vector of ones with $p$ dimensions, $\bm{1}_p$, which makes the ideal OOD model  $\big(\bm{1}_p, \bm{0}_{q}\big)$.  Each component of the confounder $\bm{H}_e$ is drawn i.i.d. from $\mathcal{N}(0,\sigma_{\bm{H}_e}^2)$. $\sigma_{\bm{H}_1}=0.2$, $\sigma_{\bm{H}_2}=2.0$. We consider two  configurations for $\bm{\Theta}_e$ and $\bm{\eta}_e$. i) $\bm{\Theta}_e=\bm{0}, \bm{\eta}_e=\bm{0}$, thus there is full observability (F) as there are no confounding effects, ii)   each component of $\bm{\Theta}_e$ and $\bm{\eta}_e$ is drawn i.i.d. from $\mathcal{N}(0,1)$ thus there is partial observability (P) as there are confounding effects.

$\bullet$ Each component of $\bm{\alpha}_e$ is drawn i.i.d from $ \mathcal{N}(0,1)$. $\varepsilon_e\sim \mathcal{N}(0, \sigma_{\varepsilon_e}^2)$ and each component of the vector $\bm{\zeta}_e$ is drawn from $\mathcal{N}(0,\sigma_{\bm{\zeta}_{e}}^2)$.  We  consider two settings for the noise variances -- Homoskedastic (HOM) $\sigma_{\varepsilon_1}=0.2$ and  $\sigma_{\varepsilon_2}=2.0$, $\sigma_{\bm{\zeta}_{1}}=\sigma_{\bm{\zeta}_{2}} = 1.0$  and Heteroskedastic (HET) $\sigma_{\bm{\zeta}_1}=0.2$ and  $\sigma_{\bm{\zeta}_2}=2.0$, $\sigma_{\varepsilon_{1}}=\sigma_{\varepsilon_{2}}=1.0$. 

\begin{table}
    \centering
     \renewcommand{\arraystretch}{1.25}
 \begin{tabular}{||c c c ||} 
 \hline
 \textbf{Method} & \textbf{Solution} & \textbf{Error} \\ [0.5ex] 
 \hline\hline
 Oracle            & $(1.0,0.0)$     & $0.0$ \\ \hline
 U-LRG  &$(0.34, 0.67)$ & $0.88$  \\ \hline
\textbf{C-LRG} $(w^{\mathsf{sup}} =  2)$ &  $(0.95, 0.05)$& $\bm{0.005}$  \\
 \hline
 \textbf{C-LRG} $(w^{\mathsf{sup}} = 5)$&  $(0.95, 0.04)$& $\bm{0.005}$  \\
 \hline
R$_\infty$-LRG &$(0.33, 0.65)$ & $0.87$  \\   \hline 
R$_2$-LRG  &$(0.33, 0.63)$ & $0.83$ \\  \hline 
ERM & $(0.34, 0.67)$ & $0.88$   \\ \hline
IRM & $(0.63, 0.44)$& $0.33$ \\ \hline
ICP & $(0.0, 0.0)$& $1.0$  \\ \hline
\end{tabular}
    \caption{\small{Comparing variants of LRG, IRM, ICP, and ERM.}}
    \label{tab:my_label}
\end{table}
\begin{figure*}[htbp]
    \centering
    \includegraphics[width=0.4\textwidth]{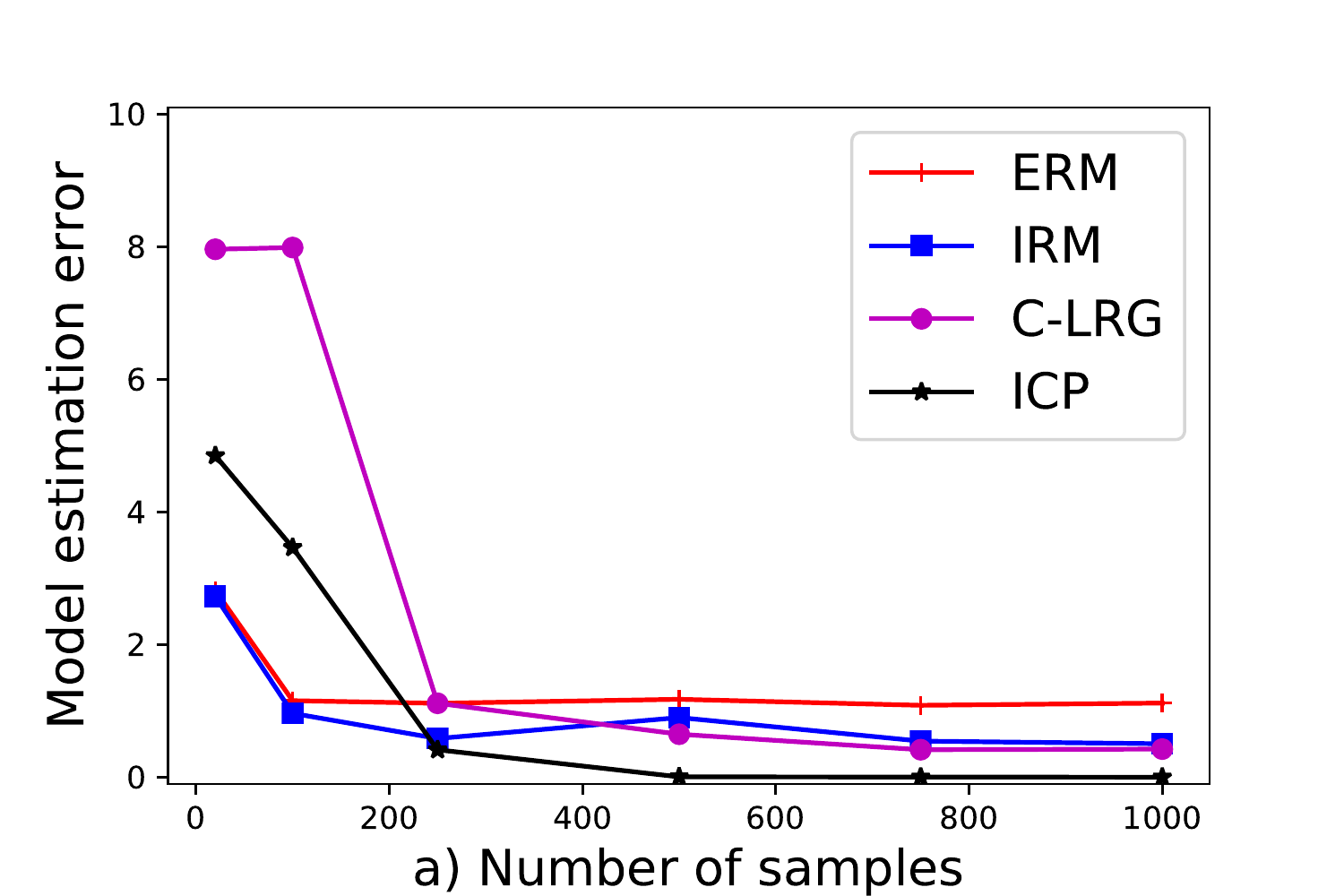}
    \includegraphics[width=0.4\textwidth]{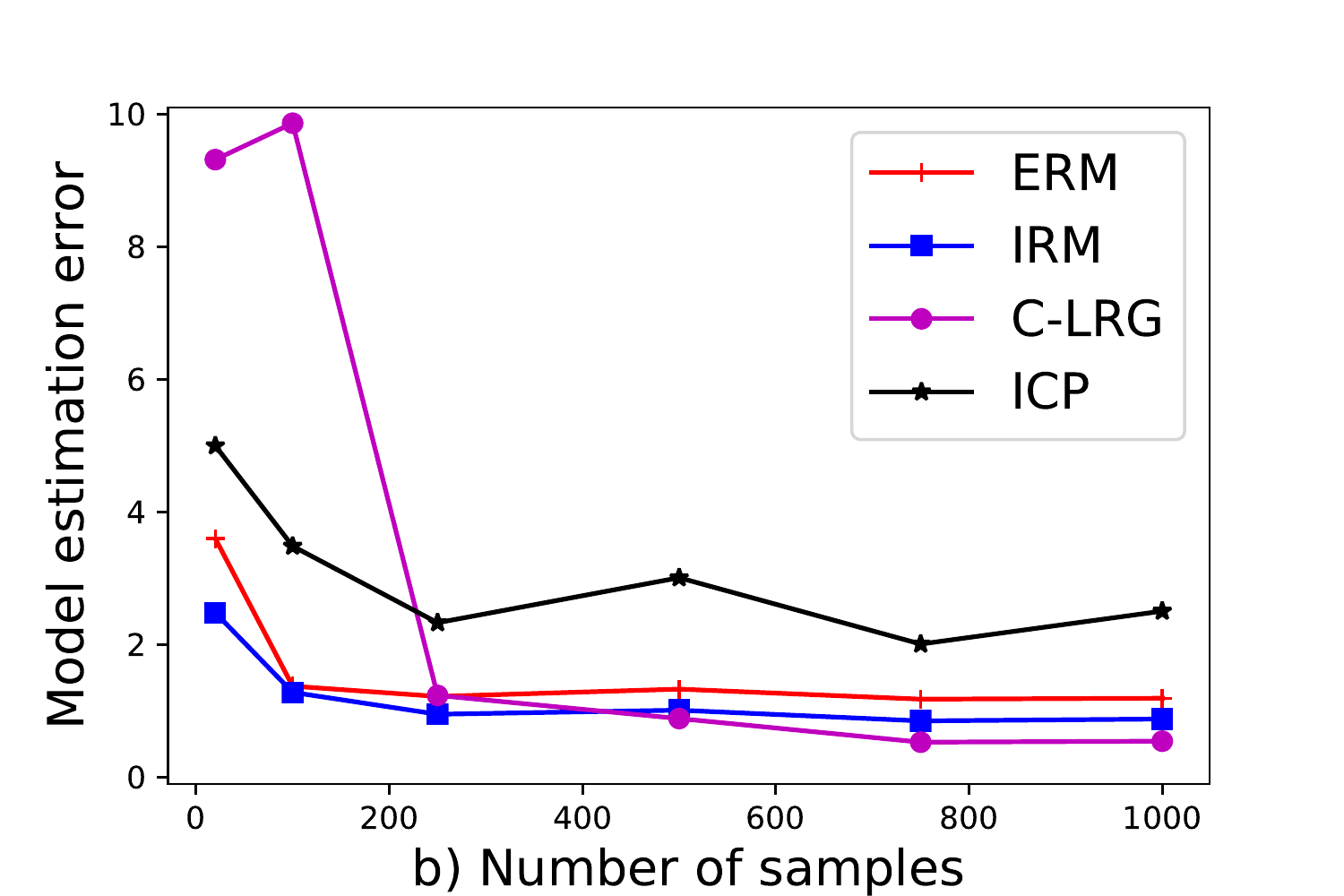}\\
    \includegraphics[width=0.4\textwidth]{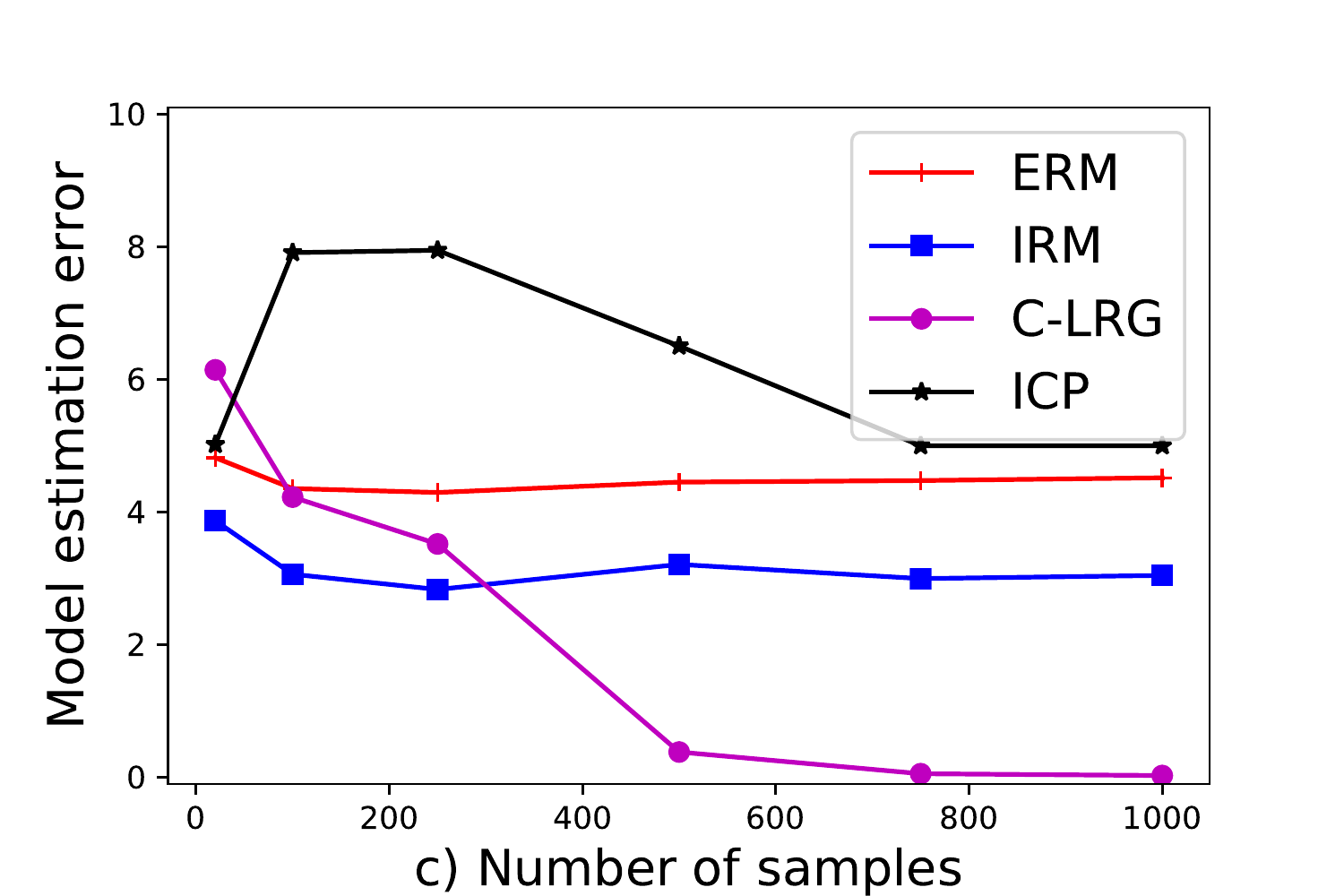}
    \includegraphics[width=0.4\textwidth]{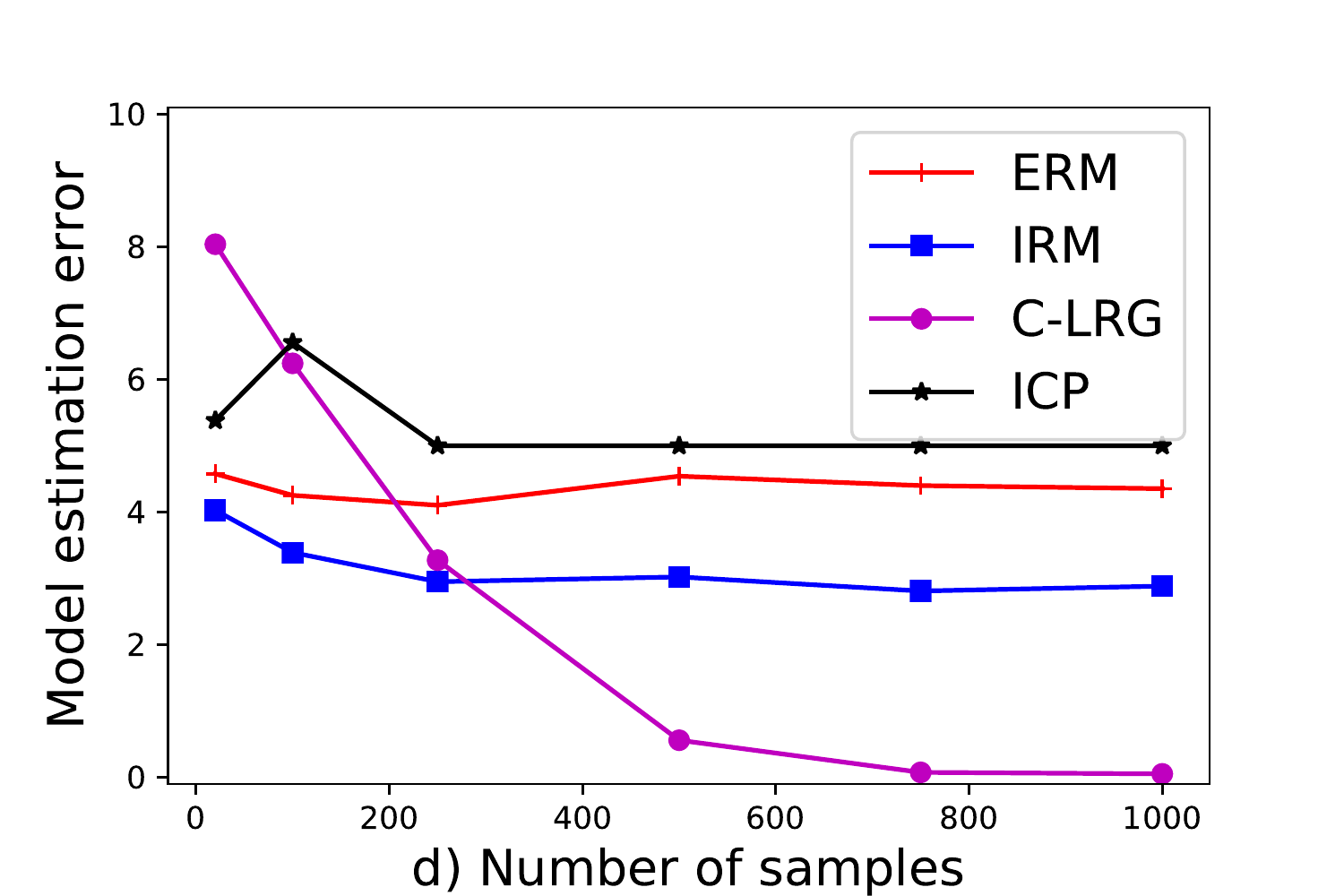}
     \caption{We compare across four settings: a) F-HET, b) P-HET, c) F-HOM and d) P-HOM.}
    \label{fig2}
\end{figure*}
From the above, we gather that there are four possible combination of settings in which comparisons will be carried out -- F-HOM, P-HOM, F-HET, P-HET.   We  use the following benchmarks in our comparison. IRM from \cite{arjovsky2019invariant}, ICP from  \cite{peters2015causal}, and standard ERM. Note in each of the cases we use a linear model. Our implementation for the data generation, IRM, ICP and ERM comes from \url{https://github.com/facebookresearch/InvariantRiskMinimization}. All other implementation details can be found in the Appendix. The source code is available at \url{https://github.com/IBM/IRM-games/tree/master/LRG_games}. The performance is measured in terms of the model estimation error, i.e., the square of the distance from the ideal model $(\bm{1}_{p},\bm{0}_q)$. 

Before we discuss a comparison in all these settings, we look at a two dimensional experiment where $p=q=1$ and the parameters are set to F-HOM. We carry out this comparison to illustrate several points. Firstly, we want to show why is $\ell_{\infty}$ constraint very important. Secondly, we want to show that the works when  $\bm{\alpha}_e$ is non-zero, i.e., $\bm{X}_e^2$ is anti-causal (in the theory we had assumed $\bm{\alpha}_e=0$). We compare with following variants of the linear regression game (LRG) i) no constraints, which is the game U-LRG (Section \ref{senc:ulrg}), ii) regularize  each $R_e$ with $\ell_{\infty}$ penalty (R$_\infty$-LRG), and iii) regularize each $R_e$ with $\ell_2$ penalty (R$_2$-LRG). In Table \ref{tab:my_label}, we show the estimated model against the respective method and the estimation error.  Observe that C-LRG was able to outperform other variants of LRG. Moreover, C-LRG performed better than the other existing methods as well.  $w_{12}$ ($w_{22}$) are the coefficients that model 1 (2) associates with feature 2, which is spuriously correlated. We plot the trajectories of the coefficients $w_{12}$ ($w_{22}$) of the models of each of the environments for the spurious features as the best response dynamics based training proceeds in Figure \ref{fig1}. Observe how the $\ell_{\infty}$ constrained models saturate on opposite ends of the boundary and as a result they cancel the spurious factors out. In contrast for other models, we do not see such an effect. Lastly, see  if we choose a larger bound $w^{\mathsf{sup}}=5$ the coefficients reach the boundary they just take more steps than  $w^{\mathsf{sup}}=2$.

Next, we move to a more elaborate comparison for the $10$ ($p=q=5$) dimensional setting from \cite{arjovsky2019invariant}. In Figure \ref{fig2}a, \ref{fig2}b, we show the  estimation error for F-HET and P-HET settings. In Figure \ref{fig2}c, \ref{fig2}d, we show the  estimation error for F-HOM and P-HOM settings.  Observe that in each of the settings how the C-LRG performs better than the rest or is close to the best approaches  when the number of samples is more than $400$.

\section{Conclusion}

In this work, we developed a new game-theoretic approach to learn OOD solutions for linear regressions. To the best of our knowledge, we have provided the first algorithms for which we can guarantee both convergence and better OOD behavior than standard empirical risk minimization. Experimentally too we see the promise of our approach as it is either competitive or outperforms the state-of-the-art by a margin. 

\section{Acknowledgement}
We would like to thank Dr. Kush R. Varshney for the valuable discussions in the initial stages of this work. 

\clearpage
\section{Appendix}

In this section, we provide the proofs to the propositions and theorems, and also provide other details on the experiments. We restate all the propositions and theorems for reader's convenience. In all our results, we use the following notation $\bm{a}$ is a vector, $a_i$ is the $i^{th}$ component of vector $\bm{a}$, $A$ is a scalar random variable, $\bm{A}$ is a vector random variable, $A_i$ is the $i^{th}$ component of the random variable $\bm{A}$, $\mathcal{A}$ is a set, bold capitalized Greek letters e.g., $\bm{\Sigma}$ are used for matrices. $\bm{I}_m$ is  a $m$ dimensional identity matrix and $\bm{1}_m$ is a $m$ dimensional vector of ones.  A bar over a vector $\bm{w}$, $\bar{\bm{w}}$, denotes the ensemble predictor (sum of predictor from the two environments). 

\subsection{Proposition 1}
We restate Proposition \ref{prop1} below. 
\begin{proposition}
    If Assumption \ref{ass1} holds and  if the least squares optimal solution in the two environments are 
    \begin{itemize}
    \item equal, i.e., $\bm{w}_{1}^{*}=\bm{w}_2^{*}$, then the set $\{(\bm{w}_1^{\dagger}, \bm{w}_2^{\dagger}) \;| \; \bm{w}_1^{\dagger}+\bm{w}_2^{\dagger}= \bm{w}_1^{*}\}$ describes all the pure strategy Nash equilibrium of U-LRG, $\Gamma$.
    \item  not equal, i.e., $\bm{w}_{1}^{*}\not=\bm{w}_2^{*}$, then U-LRG, $\Gamma$, has no pure strategy Nash equilibrium.
    \end{itemize}
    \label{prop1_append}
\end{proposition}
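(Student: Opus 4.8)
The plan is to characterize the best-response maps of the two players and then identify the fixed points. Fix the action $\bm{w}_2$ of player $2$. Player $1$ minimizes $\mathbb{E}_1[(Y_1 - \bm{w}_1^{\mathsf{T}}\bm{X}_1 - \bm{w}_2^{\mathsf{T}}\bm{X}_1)^2]$ over $\bm{w}_1 \in \mathbb{R}^{n\times 1}$. This is an unconstrained convex quadratic, so its minimizer is found by setting the gradient to zero: $\mathbb{E}_1[\bm{X}_1(Y_1 - (\bm{w}_1+\bm{w}_2)^{\mathsf{T}}\bm{X}_1)] = \bm{0}$, i.e. $\bm{\Sigma}_1(\bm{w}_1 + \bm{w}_2) = \bm{\rho}_1$. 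Since $\bm{\Sigma}_1$ is positive definite (Assumption \ref{ass1}), this is equivalent to $\bm{w}_1 + \bm{w}_2 = \bm{\Sigma}_1^{-1}\bm{\rho}_1 = \bm{w}_1^{*}$, so player $1$'s unique best response is $\bm{w}_1 = \bm{w}_1^{*} - \bm{w}_2$. Symmetrically, player $2$'s unique best response to $\bm{w}_1$ is $\bm{w}_2 = \bm{w}_2^{*} - \bm{w}_1$.

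Next I would impose both conditions simultaneously. A pair $(\bm{w}_1^{\dagger}, \bm{w}_2^{\dagger})$ is a NE iff $\bm{w}_1^{\dagger} + \bm{w}_2^{\dagger} = \bm{w}_1^{*}$ and $\bm{w}_1^{\dagger} + \bm{w}_2^{\dagger} = \bm{w}_2^{*}$. If $\bm{w}_1^{*} = \bm{w}_2^{*}$, these two equations coincide, and the set of solutions is exactly $\{(\bm{w}_1^{\dagger}, \bm{w}_2^{\dagger}) \mid \bm{w}_1^{\dagger} + \bm{w}_2^{\dagger} = \bm{w}_1^{*}\}$, which proves the first bullet. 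If $\bm{w}_1^{*} \neq \bm{w}_2^{*}$, the two equations are inconsistent (the sum $\bm{w}_1^{\dagger} + \bm{w}_2^{\dagger}$ cannot equal two distinct vectors), so no NE exists, proving the second bullet.

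This argument is essentially routine; there is no serious obstacle. The only point requiring a little care is justifying that the first-order stationarity condition is both necessary and sufficient for a minimizer — this follows because the objective is a convex quadratic in $\bm{w}_1$ (its Hessian is $2\bm{\Sigma}_1 \succ 0$), so it is strictly convex and the unique global minimizer is the unique stationary point. One should also note explicitly that the minimizer always exists and lies in $\mathbb{R}^{n\times 1}$ (no compactness needed because strict convexity plus coercivity of a positive-definite quadratic guarantees attainment), which is what makes the best-response map well defined everywhere.
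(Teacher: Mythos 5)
Your proposal is correct and follows essentially the same route as the paper's proof: both reduce the Nash equilibrium condition to the first-order stationarity equations $\bm{\Sigma}_e(\bm{w}_1+\bm{w}_2)=\bm{\rho}_e$ for $e\in\{1,2\}$, observe that consistency of the two resulting constraints on the sum $\bm{w}_1^{\dagger}+\bm{w}_2^{\dagger}$ is equivalent to $\bm{w}_1^{*}=\bm{w}_2^{*}$, and use convexity to conclude sufficiency. Your explicit remarks on strict convexity, coercivity, and attainment of the best response are a slightly more careful justification than the paper gives, but the substance is identical.
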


\begin{proof}
We start with latter part of the proposition.
Suppose there exists a pair $\bm{w}_1^{\dagger}, \bm{w}_2^{\dagger}$ which is a NE of U-LRG. 
Observe that $R_{e}(\bm{w}_1,\bm{w}_2)$ is jointly convex in $\bm{w}_1,\bm{w}_2$ ($R_{e}(\bm{w}_1,\bm{w}_2) = \mathbb{E}_{e}[(Y_{e}-\bm{w}_1^{\mathsf{T}}\bm{X}_{e}-\bm{w}_2^{\mathsf{T}}\bm{X}_{e})^2]$; loss inside the expectation is convex and expectation is a weighted sum over these losses). Let us compute the gradient of $R_{e}(\bm{w}_1,\bm{w}_2)$ w.r.t $\bm{w}_e$. 

\begin{equation}
\begin{split}
  &  \nabla_{\bm{w}_1} R_{1}(\bm{w}_1,\bm{w}_2)  = 2\bm{\Sigma}_1(\bm{w}_1+\bm{w}_2) - 2\bm{\rho}_1  \\ 
   &  \nabla_{\bm{w}_2} R_{2}(\bm{w}_1,\bm{w}_2)  = 2\bm{\Sigma}_2(\bm{w}_1+\bm{w}_2) - 2\bm{\rho}_2  
\end{split}
\label{proof1: eqn1}
\end{equation}

From the definition of pure strategy NE, it follows that $\bm{w}_1^{\dagger}$ ($\bm{w}_2^{\dagger}$) minimizes $R_1(\cdot, \bm{w}_2^{\dagger})$ ($R_2(\bm{w}_1^{\dagger},\cdot)$). From the convexity of $R_1(\cdot, \bm{w}_2^{\dagger})$ and $R_2(\bm{w}_1^{\dagger},\cdot)$ it follows that $\nabla_{\bm{w}_1|\bm{w}_1=\bm{w}_1^{\dagger}} R_{1}(\bm{w}_1,\bm{w}_2^{\dagger})=0$ and $\nabla_{\bm{w}_2|\bm{w}_2=\bm{w}_2^{\dagger}} R_{2}(\bm{w}_1^{\dagger},\bm{w}_2)=0$. Therefore, we have 
\begin{equation}
    \begin{split}
        & \bm{\Sigma}_1(\bm{w}_1^{\dagger}+\bm{w}_2^{\dagger}) -  \bm{\rho}_1 = 0 \implies   \bm{w}_1^{\dagger}+\bm{w}_2^{\dagger} = \bm{\Sigma}_{1}^{-1}\bm{\rho}_1= \bm{w}_1^{*}\;\\
        & \bm{\Sigma}_2(\bm{w}_1^{\dagger}+\bm{w}_2^{\dagger}) - \bm{\rho}_2 =0  \implies \bm{w}_1^{\dagger}+\bm{w}_2^{\dagger} =  \bm{\Sigma}_{2}^{-1}\bm{\rho}_2= \bm{w}_2^{*}\; 
    \end{split}
    \label{proof1:eqn2}
\end{equation}
In the above equation \eqref{proof1:eqn2}, we use Assumption \ref{ass1} and the optimal solution defined in Section \ref{senc:ulrg}, $\bm{w}_{e}^{*} = \bm{\Sigma}_{e}^{-1}\bm{\rho}_e$, for each $e\in \{1,2\}$.
From equations \eqref{proof1:eqn2}  it follows that $\bm{w}_1^{*}=\bm{w}_2^{*}$. Therefore, the existence of NE implies $\bm{w}_1^{*}=\bm{w}_2^{*}$ or in other words if $\bm{w}_1^{*}\not=\bm{w}_2^{*}$ implies NE does not exist. In the above we learned that $ \bm{w}_1^{*}=\bm{w}_2^{*}$ is a necessary condition for NE to exist. In the next part we show that this condition is sufficient as well. Suppose  $\bm{w}_1^{*}=\bm{w}_2^{*}=\bm{w}^{*}$. Define any point $\hat{\bm{w}}_1$ and another point $\hat{\bm{w}}_2 = \bm{w}^{*}-\hat{\bm{w}}_1$. Compute 
$\nabla_{\bm{w}_1|\bm{w}_1=\hat{\bm{w}}_1} R_{1}(\bm{w}_1,\hat{\bm{w}}_{2})$ and  $\nabla_{\bm{w}_2|\bm{w}_2=\hat{\bm{w}}_2} R_{2}(\hat{\bm{w}}_1,\bm{w}_2)$. Using the expression in equation \eqref{proof1: eqn1} we get 
\begin{equation}
\begin{split}
  &  \nabla_{\bm{w}_1| \bm{w}_1 = \hat{\bm{w}}_1} R_{1}(\hat{\bm{w}}_1,\hat{\bm{w}}_{2}) = 2\bm{\Sigma}_1(\hat{\bm{w}}_1 + \hat{\bm{w}}_2) - 2\bm{\rho}_1  \\
  & = 2\bm{\Sigma}_1\bm{w}^{*} - 2\bm{\rho}_1  = 0\; \text{(From the optimality of}\; \bm{w}^{*} \; \text{for} \; R_1 \text{)}\\
    & \nabla_{\bm{w}_2| \bm{w}_2 = \hat{\bm{w}}_2} R_{2}(\hat{\bm{w}}_1,\hat{\bm{w}}_2) = 2\bm{\Sigma}_2(\hat{\bm{w}}_1 + \hat{\bm{w}}_2) -2\bm{\rho}_2  \\ 
    & =2\bm{\Sigma}_2\bm{w}^{*} - 2\bm{\rho}_2  = 0\; \text{(From the optimality of}\; \bm{w}^{*} \; \text{for} \; R_2  \text{)}
    \end{split}
\end{equation}
From the convexity of $R_1$ and $R_2$ it follows that $\hat{\bm{w}}_1$, $\hat{\bm{w}}_2$ simultaneously minimize $R_1$ and $R_2$. Therefore, every such $\hat{\bm{w}}_1$ and $\hat{\bm{w}}_2$ that sum to $\bm{w}^{*}$ form a NE. This completes the proof. 
\end{proof}

\subsection{Proposition 2}
We restate Proposition \ref{ls_cfd_prop}  below. 
\begin{proposition}
If  Assumption \ref{linear_model_ass} holds with $\bm{\alpha}_e=\bm{0}$ for each $e\in \{1,2\}$, and Assumption \ref{ass2} holds, then the least squares optimal solution for environment $e$ is 
\begin{equation}
\bm{w}_e^{*} = (\bm{w}_{e}^{\mathsf{inv}}, \bm{w}_{e}^{\mathsf{var}})=\Big(\bm{\gamma}, \Big(\bm{\Theta}_e\bm{\Theta}_e^{\mathsf{T}} + \mathsf{diag}[\bm{\sigma}_{\bm{\zeta}_e}^{2}]\Big)^{-1} \bm{\Theta}_e\bm{\eta}_e\Big)
\label{sol_confouned_append}
\end{equation}
\end{proposition}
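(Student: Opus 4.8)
The plan is to compute the least squares optimal solution $\bm{w}_e^{*} = \bm{\Sigma}_e^{-1}\bm{\rho}_e$ directly by exploiting the block structure of the feature vector $\bm{X}_e = (\bm{X}_e^1, \bm{X}_e^2)$. First I would observe that when $\bm{\alpha}_e = \bm{0}$, the label is $Y_e = \bm{\gamma}^{\mathsf{T}}\bm{X}_e^1 + \bm{\eta}_e^{\mathsf{T}}\bm{H}_e + \varepsilon_e$ and the spurious block is $\bm{X}_e^2 = \bm{\Theta}_e \bm{H}_e + \bm{\zeta}_e$. The key structural fact is that the causal features $\bm{X}_e^1$ are (by the SEM) independent of the confounder $\bm{H}_e$ and of $\bm{\zeta}_e$, so $\mathbb{E}_e[\bm{X}_e^1 \bm{X}_e^{2,\mathsf{T}}] = \mathbb{E}_e[\bm{X}_e^1 (\bm{\Theta}_e\bm{H}_e + \bm{\zeta}_e)^{\mathsf{T}}] = \bm{0}$ because all means are zero. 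Hence $\bm{\Sigma}_e$ is block-diagonal, $\bm{\Sigma}_e = \mathsf{blkdiag}(\bm{\Sigma}_{e1}, \bm{\Sigma}_{e2})$ with $\bm{\Sigma}_{e1} = \mathbb{E}_e[\bm{X}_e^1 \bm{X}_e^{1,\mathsf{T}}]$ and $\bm{\Sigma}_{e2} = \mathbb{E}_e[\bm{X}_e^2 \bm{X}_e^{2,\mathsf{T}}]$. This block-diagonality is what lets the two halves of $\bm{w}_e^{*}$ decouple.

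Next I would compute the two blocks separately. For the causal block, the cross-correlation is $\bm{\rho}_{e1} = \mathbb{E}_e[\bm{X}_e^1 Y_e] = \mathbb{E}_e[\bm{X}_e^1(\bm{\gamma}^{\mathsf{T}}\bm{X}_e^1 + \bm{\eta}_e^{\mathsf{T}}\bm{H}_e + \varepsilon_e)] = \bm{\Sigma}_{e1}\bm{\gamma}$, again using independence of $\bm{X}_e^1$ from $\bm{H}_e$ and $\varepsilon_e$ and zero means. So the causal half of $\bm{w}_e^{*}$ is $\bm{\Sigma}_{e1}^{-1}\bm{\rho}_{e1} = \bm{\gamma}$, which is invariant. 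For the spurious block, I would first compute $\bm{\Sigma}_{e2} = \mathbb{E}_e[(\bm{\Theta}_e\bm{H}_e + \bm{\zeta}_e)(\bm{\Theta}_e\bm{H}_e + \bm{\zeta}_e)^{\mathsf{T}}]$; since $\bm{H}_e$ has i.i.d.\ zero-mean unit-variance components, $\mathbb{E}_e[\bm{H}_e\bm{H}_e^{\mathsf{T}}] = \bm{I}_s$, and $\bm{\zeta}_e$ is independent zero-mean, this gives $\bm{\Sigma}_{e2} = \bm{\Theta}_e\bm{\Theta}_e^{\mathsf{T}} + \mathsf{diag}[\bm{\sigma}_{\bm{\zeta}_e}^2]$. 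The cross-correlation is $\bm{\rho}_{e2} = \mathbb{E}_e[\bm{X}_e^2 Y_e] = \mathbb{E}_e[(\bm{\Theta}_e\bm{H}_e + \bm{\zeta}_e)(\bm{\gamma}^{\mathsf{T}}\bm{X}_e^1 + \bm{\eta}_e^{\mathsf{T}}\bm{H}_e + \varepsilon_e)]$; the only surviving term is $\bm{\Theta}_e \mathbb{E}_e[\bm{H}_e\bm{H}_e^{\mathsf{T}}]\bm{\eta}_e = \bm{\Theta}_e\bm{\eta}_e$. Therefore the spurious half is $\bm{\Sigma}_{e2}^{-1}\bm{\rho}_{e2} = (\bm{\Theta}_e\bm{\Theta}_e^{\mathsf{T}} + \mathsf{diag}[\bm{\sigma}_{\bm{\zeta}_e}^2])^{-1}\bm{\Theta}_e\bm{\eta}_e$, matching \eqref{sol_confouned_append}. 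I would also note that Assumption \ref{ass2} guarantees both $\bm{\Sigma}_{e1}$ and $\bm{\Sigma}_{e2}$ are invertible (the latter because $\bm{\Theta}_e\bm{\Theta}_e^{\mathsf{T}}$ is PSD and $\mathsf{diag}[\bm{\sigma}_{\bm{\zeta}_e}^2]$ is strictly positive diagonal).

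I do not anticipate a serious obstacle here — the proof is essentially a careful bookkeeping of which cross-terms vanish by the independence structure of the SEM and the zero-mean assumptions. The one point requiring mild care is making the independence claims precise: the SEM in \eqref{linear_model} specifies $\bm{X}_e^1$, $\bm{H}_e$, $\varepsilon_e$, $\bm{\zeta}_e$ as mutually independent primitive (exogenous) variables, so every cross-moment between distinct ones factors through the (zero) means; I would state this explicitly at the outset so the subsequent moment computations are clean. A secondary small item is justifying that minimizing $\mathbb{E}_e[(Y_e - \tilde{\bm{w}}^{\mathsf{T}}\bm{X}_e)^2]$ over $\tilde{\bm{w}}$ indeed yields $\bm{\Sigma}_e^{-1}\bm{\rho}_e$ — but this is just the normal equations $\bm{\Sigma}_e\tilde{\bm{w}} = \bm{\rho}_e$ together with positive definiteness of $\bm{\Sigma}_e$ from Assumption \ref{ass2} (equivalently Assumption \ref{ass1}), exactly as already recorded in Section \ref{senc:ulrg}.
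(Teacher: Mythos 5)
Your proposal is correct and follows essentially the same route as the paper's proof: both establish that $\bm{\Sigma}_e$ is block-diagonal (via independence of $\bm{X}_e^1$ from $\bm{H}_e,\bm{\zeta}_e$ and zero means), compute $\bm{\Sigma}_{e2}=\bm{\Theta}_e\bm{\Theta}_e^{\mathsf{T}}+\mathsf{diag}[\bm{\sigma}_{\bm{\zeta}_e}^2]$, $\bm{\rho}_{e1}=\bm{\Sigma}_{e1}\bm{\gamma}$, $\bm{\rho}_{e2}=\bm{\Theta}_e\bm{\eta}_e$, and invert blockwise. Your explicit remarks on invertibility of the blocks and on the normal equations are fine and match the paper's use of Assumption \ref{ass2}.
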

\begin{proof}
We derive the expression for the optimal predictor in the confounder only SEM in Assumption \ref{linear_model_ass}.
Recall that the general expression for the least squares optimal predictor (defined in Section \ref{senc:ulrg}) is 
\begin{equation}
\bm{w}_e^{*} = \bm{\Sigma}_e^{-1}\bm{\rho}_e
\label{eqn: proof_prop2_0}
\end{equation}

We use the SEM in Assumption \ref{linear_model_ass} to derive an expression for $\bm{\Sigma}_e$. First observe that from Assumption \ref{linear_model_ass}, we have that $\mathbb{E}_e[\bm{X}_e^{1}]=0$ and 

$$\mathbb{E}_{e}\big[Y_{e}\big] = \bm{\gamma}^{\mathsf{T}}\mathbb{E}_{e}\big[\bm{X}_{e}^{1}\big] + \bm{\eta}_{e}^{\mathsf{T}}\mathbb{E}_{e}\big[\bm{H}_{e}\big]+ \mathbb{E}_{e}\big[\varepsilon_{e}\big]=0$$
$$\mathbb{E}_{e}\big[\bm{X}^2_{e}\big] = \bm{\alpha}_e \mathbb{E}_{e}\big[Y_{e}\big] + \bm{\Theta}_e \mathbb{E}_{e}\big[\bm{H}_e\big]  + \mathbb{E}_{e}\big[\bm{\zeta}_e\big]=\bm{0}$$

Therefore
\begin{equation}
\mathbb{E}_e\big[\bm{X}_e^{1}\big]=0, \mathbb{E}_e\big[\bm{X}_e^{2}\big]=0
\label{eqn: proof_prop2_mean}
\end{equation}

We divide $\bm{\Sigma}_e$ into four smaller matrices 
$\bm{\Sigma}_{e1} = \bm{E}_{e}[\bm{X}_{e}^{1} \bm{X}_{e}^{1,\mathsf{T}}]$, $\bm{\Sigma}_{e2} = \bm{E}_{e}[\bm{X}_{e}^{2} \bm{X}_{e}^{2,\mathsf{T}}]$, $\bm{\Sigma}_{e12} = \bm{E}_{e}[\bm{X}_{e}^{1} \bm{X}_{e}^{2,\mathsf{T}}]$ and 
$\bm{\Sigma}_{e21} = \bm{E}_{e}[\bm{X}_{e}^{2} \bm{X}_{e}^{1,\mathsf{T}}]$.

From Assumption \ref{linear_model_ass}, we know $(\bm{H}_e,\bm{\zeta}_e) \perp \bm{X}_e^{1}$ and $\bm{X}_e^{2} \leftarrow \bm{\Theta}_e \bm{H}_e  + \bm{\zeta}_e$, which implies $\bm{X}_{e}^2\perp \bm{X}_e^{1}$. 

Therefore, from  $\bm{X}_{e}^2\perp \bm{X}_e^{1}$ and equation \eqref{eqn: proof_prop2_mean} it follows that 

\begin{equation}
\bm{\Sigma}_{e21} = \mathbb{E}_e\Big[\bm{X}_e^{2}\bm{X}_e^{1,\mathsf{T}}\Big]=\mathbb{E}_e\Big[\bm{X}_e^2\Big]\mathbb{E}_e\Big[\bm{X}_e^{1,\mathsf{T}}\Big]= \bm{0}_{q\times p}
\label{eqn: proof_prop2_1}
\end{equation}

\begin{equation}
\begin{split}
    & \bm{\Sigma}_{e2} = \mathbb{E}_{e}\Big[\bm{X}_e^2\bm{X}_e^{2,\mathsf{T}}\Big] 
    = \bm{\Theta}_e \mathbb{E}_{e}\Big[\bm{H}_e \bm{H}_e^{\mathsf{T}}\Big] \bm{\Theta}_e^{\mathsf{T}} + \bm{\Theta}_e \mathbb{E}_{e}\Big[\bm{H}_e\bm{\zeta}_e^{\mathsf{T}}\Big] + \mathbb{E}_{e}\Big[\bm{\zeta}_e\bm{H}_e^{\mathsf{T}}\Big]\bm{\Theta}_e^{\mathsf{T}} + \mathbb{E}\Big[\bm{\zeta}_e\bm{\zeta}_e^{\mathsf{T}}\Big]  \\&= \bm{\Theta}_e  \bm{\Theta}_e^{\mathsf{T}} + \mathsf{diag}[\bm{\sigma}_{\bm{\eta}_e}^2]
    \label{eqn: proof_prop2_2}
\end{split}
\end{equation}

In the above equation \eqref{eqn: proof_prop2_2}, we use  $\mathbb{E}_{e}\Big[\bm{H}_e \bm{H}_e^{\mathsf{T}}\Big] = \bm{I}_s$ and $\mathbb{E}_{e}\Big[\bm{H}_e\bm{\zeta}_e^{\mathsf{T}}\Big] = \bm{0}_{s\times q}$, which follow from Assumption \ref{linear_model_ass}. From Assumption \ref{ass2}, we know that $\bm{\sigma}_{\bm{\eta}_e}^2>\bm{0}$ and we use this observation in equation \eqref{eqn: proof_prop2_2} to deduce that $\bm{\Sigma}_{e2}$ is positive definite. 

From equation \eqref{eqn: proof_prop2_1} we can simplify $\bm{\Sigma}_e$ into a block diagonal matrix written as $\mathsf{diag}\Big[\bm{\Sigma}_{e1}, \bm{\Sigma}_{e2}\Big]$, where $\bm{\Sigma}_{e1} = \mathbb{E}_e\Big[\bm{X}_e^{1}, \bm{X}_e^{1,\mathsf{T}}\Big]$ and $\bm{\Sigma}_{e2} = \mathbb{E}_e\Big[\bm{X}_e^{2}, \bm{X}_e^{2,\mathsf{T}}\Big]$.  

 
 From Assumption \ref{ass2}, $\bm{\Sigma}_{e1}$ is positive definite and we showed above that $\bm{\Sigma}_{e2}$ is positive definite as well. Therefore, we can write the inverse of $\bm{\Sigma}_e$ as another block diagonal matrix written as 

\begin{equation}
\bm{\Sigma}_e^{-1} = \mathsf{diag}\Big[\bm{\Sigma}_{e1}^{-1}, \bm{\Sigma}_{e2}^{-1}\Big]
\label{eqn:sig_inv}
\end{equation}

Next let us simplify $\bm{\rho}_e = \Big[\mathbb{E}_{e}\Big[\bm{X}_{e}^{1}Y_{e}\Big], \mathbb{E}_{e}\Big[\bm{X}_e^{2}Y_{e}\Big]\Big]$.

\begin{equation}
    \mathbb{E}_{e}\Big[\bm{X}^1_{e}Y_{e}\Big] = \mathbb{E}_{e}\Big[\bm{X}^1_{e}\bm{\gamma}^{\mathsf{T}}\bm{X}^1_{e} + \bm{\eta}_{e}^{\mathsf{T}}\bm{H}_{e} + \varepsilon_{e}\Big] = \bm{\Sigma}_{e1}\bm{\gamma}
    \label{deriv:eqn1}
\end{equation}

\begin{equation}
\begin{split}
    & \mathbb{E}_{e}\Big[\bm{X}^2_{e}Y_{e}\Big] =\mathbb{E}_{e}\Big[\bm{X}^2_{e}(\bm{\gamma}^{\mathsf{T}}\bm{X}^1_{e} + \bm{\eta}_e^{\mathsf{T}}\bm{H}_{e} + \varepsilon_{e})] = \mathbb{E}_{e}[\bm{X}_{e}^{2}\bm{\eta}_{e}^{\mathsf{T}}\bm{H}^{e}\Big]  \\ 
& \mathbb{E}_{e}\Big[\bm{X}_{e}^{2}\bm{\eta}_{e}^{\mathsf{T}}\bm{H}^{e}\Big] = \mathbb{E}_{e}\Big[\bm{\Theta}_{e}\bm{H}_e\bm{\eta}_{e}^{\mathsf{T}}\bm{H}^{e}\Big] = \bm{\Theta}_{e} \mathbb{E}_e\Big[\bm{H}_e \bm{H}_e^{\mathsf{T}}\Big] \bm{\eta}_e   \\ 
& = \bm{\Theta}_e\bm{\eta}_e \; \text{\Big(Since}\; \mathbb{E}_{e}\Big[\bm{H}_e \bm{H}_e^{\mathsf{T}}\Big] = \bm{I}_s \; \text{\Big)}
\end{split}
\label{deriv:eqn2}
\end{equation}

Combining equations \eqref{eqn: proof_prop2_0}- \eqref{deriv:eqn2}, 

\begin{equation*}
    \bm{w}_e^{*} = \bm{\Sigma}_e^{-1}\bm{\rho}_e = \Big(\bm{\gamma}, \Big(\bm{\Theta}_e\bm{\Theta}_e^{\mathsf{T}} + \mathsf{diag}[\bm{\sigma}_{\bm{\zeta} e}^{2}]\Big)^{-1} \bm{\Theta}_e\bm{\eta}_e\Big)
\end{equation*}

This completes the derivation. 
\end{proof}
\subsection{Theorem \ref{nash_char:thm}} 
We first state a lemma needed for proving Theorem \ref{nash_char:thm}. 
\begin{lemma}
\label{lemma2}
Suppose Assumptions \ref{ass1} and \ref{realize_ass} hold. Consider the case when $\bm{w}_1^{*}\not=\bm{w}_2^{*}$. In this case, at least one of the predictors in the NE of C-LRG $\bm{w}_1^{\dagger}$ or $ \bm{w}_2^{\dagger}$ has to be on the boundary of the set, i.e. for at least one $e \in \{1,2\},$ $\|\bm{w}_e^{\dagger}\|_{\infty} = w^{\mathsf{sup}}$. Moreover, if $\|\bm{w}_1^{\dagger}\|_{\infty} < w^{\mathsf{sup}}$  ($\|\bm{w}_2^{\dagger}\|_{\infty} < w^{\mathsf{sup}}$) and $\|\bm{w}_{2}^{\dagger}\|_{\infty} = w^{\mathsf{sup}}$ ($\|\bm{w}_{1}^{\dagger}\|_{\infty} = w^{\mathsf{sup}}$) then the  ensemble predictor  is optimal for environment $e$, i.e., $\bar{\bm{w}}^{\dagger}=\bm{w}_{2}^{*}$ ($\bar{\bm{w}}^{\dagger}=\bm{w}_{1}^{*}$).

\end{lemma}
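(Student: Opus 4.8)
The plan is to exploit the first-order optimality conditions for the constrained least-squares problem that each environment solves at a Nash equilibrium, in exact parallel to the unconstrained analysis in Proposition \ref{prop1}. Recall from \eqref{proof1: eqn1} that $\nabla_{\bm{w}_e} R_e(\bm{w}_1,\bm{w}_2) = 2\bm{\Sigma}_e(\bm{w}_1+\bm{w}_2) - 2\bm{\rho}_e = 2\bm{\Sigma}_e(\bar{\bm{w}} - \bm{w}_e^{*})$, using $\bm{w}_e^{*} = \bm{\Sigma}_e^{-1}\bm{\rho}_e$. First I would argue by contradiction: suppose $(\bm{w}_1^{\dagger},\bm{w}_2^{\dagger})$ is a NE with $\|\bm{w}_1^{\dagger}\|_\infty < w^{\mathsf{sup}}$ and $\|\bm{w}_2^{\dagger}\|_\infty < w^{\mathsf{sup}}$, i.e.\ both predictors are in the interior of $\mathcal{W}$. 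Since $R_e(\cdot,\bm{w}_{-e}^{\dagger})$ is convex and $\bm{w}_e^{\dagger}$ is an interior minimizer over $\mathcal{W}$, it is an unconstrained minimizer, so $\nabla_{\bm{w}_e} R_e = \bm{0}$, giving $\bm{\Sigma}_e(\bar{\bm{w}}^{\dagger} - \bm{w}_e^{*}) = \bm{0}$ for each $e$. As $\bm{\Sigma}_e$ is positive definite (Assumption \ref{ass1}), this forces $\bar{\bm{w}}^{\dagger} = \bm{w}_1^{*} = \bm{w}_2^{*}$, contradicting $\bm{w}_1^{*}\neq\bm{w}_2^{*}$. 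Hence at least one predictor lies on the boundary.

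For the second part, suppose $\|\bm{w}_1^{\dagger}\|_\infty < w^{\mathsf{sup}}$ while $\|\bm{w}_2^{\dagger}\|_\infty = w^{\mathsf{sup}}$ (the other case is symmetric). Because $\bm{w}_1^{\dagger}$ is an interior best response for environment $1$, the same convexity-plus-interiority argument gives $\nabla_{\bm{w}_1} R_1(\bm{w}_1^{\dagger},\bm{w}_2^{\dagger}) = \bm{0}$, i.e.\ $\bm{\Sigma}_1(\bar{\bm{w}}^{\dagger} - \bm{w}_1^{*}) = \bm{0}$, and positive definiteness of $\bm{\Sigma}_1$ yields $\bar{\bm{w}}^{\dagger} = \bm{w}_1^{*}$. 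One subtlety to dispatch: I should use Assumption \ref{realize_ass} (realizability, $\bm{w}_e^{*}\in\mathcal{W}$) to confirm that this interior-optimality deduction is internally consistent — e.g.\ when I later need $\bm{w}_1^{\dagger} = \bm{w}_1^{*} - \bm{w}_2^{\dagger}$ to actually be a feasible point, or to rule out degenerate boundary configurations; realizability guarantees $\bm{w}_1^{*}$ is attainable as an ensemble. The statement in the lemma writes this conclusion as $\bar{\bm{w}}^{\dagger} = \bm{w}_2^{*}$ in the first parenthetical, which I read as a labeling of the environment whose predictor sits in the interior; I would make the indexing explicit so the "optimal for environment $e$" clause matches the predictor that is strictly inside $\mathcal{W}$.

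The main obstacle is not the algebra — which is a one-line gradient computation reused from Proposition \ref{prop1} — but making the boundary/interior case analysis airtight: specifically, justifying that an interior best response of a convex function over the box $\mathcal{W}$ must be a stationary point (this is standard but should be stated), and correctly handling the $\ell_\infty$ geometry, since $\|\bm{w}_e^{\dagger}\|_\infty = w^{\mathsf{sup}}$ only means \emph{some} coordinate saturates, not all of them. I would be careful to phrase the contradiction in part one purely in terms of the global optimality of interior points, so that no coordinatewise reasoning is needed there; the coordinatewise structure is deferred to the proof of Theorem \ref{nash_char:thm} itself, where Assumption \ref{dis_ass} makes the problem separable across coordinates. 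I expect the whole proof to be under half a page.
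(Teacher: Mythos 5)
Your proof is correct and follows essentially the same route as the paper's: the paper phrases the key step (interior best response implies vanishing gradient, hence $\bm{\Sigma}_e(\bar{\bm{w}}^{\dagger}-\bm{w}_e^{*})=\bm{0}$ and, by positive definiteness, $\bar{\bm{w}}^{\dagger}=\bm{w}_e^{*}$) through KKT conditions with Slater's constraint qualification and complementary slackness, whereas you invoke interior first-order optimality of a convex function directly, which is the same content in more elementary dress. Your reading of the second part --- the ensemble equals the least-squares optimum of the environment whose predictor is strictly interior --- is the correct one and matches what the paper's proof body actually establishes (the indices in the lemma's displayed conclusion are indeed swapped), and your observation that realizability plays no essential role in this lemma is also consistent with the paper, whose proof never actually uses Assumption \ref{realize_ass}.
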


\begin{proof} We start with the first part of the above lemma.
In the first part, the only case that is excluded is when both the points forming the NE are in the interior, i.e., $\|\bm{w}_1^{\dagger}\|_{\infty} < w^{\mathsf{sup}}$ and $\|\bm{w}_2^{\dagger}\|_{\infty} < w^{\mathsf{sup}}$. Denote $\bm{w}_{-e}$ as the predictor used by the environment $q\in \{1,2\}\setminus \{e\}$.
We interhangeably use $R_{e}(\bm{w}_e, \bm{w}_{-e})$ and $ R_e(\bm{w}_1, \bm{w}_2)$. 
For environment $e$, from the definition of NE, it follows that $\bm{w}_e^{\dagger}$ satisfies $\bm{w}_e^{\dagger} \in \arg\min_{\bm{w}_e \in \mathcal{W}}R_{e}(\bm{w}_e, \bm{w}_{-e}^{\dagger})$. Note  i) $R_{e}(\bm{w}_e,\bm{w}_{-e}^{\dagger})$  is a convex function in $\bm{w}_e$, and ii) the set $\mathcal{W}$ has a non-empty relative interior (Since $w^{\mathsf{sup}}>0$). From these two conditions it follows that Slater's constraint qualification is satisfied, which implies strong duality holds \cite{boyd2004convex}. From strong duality, it follows that $\bm{w}_e^{\dagger}$ and $\lambda_e^{\dagger}$, where $\lambda_e^{\dagger}$ is the dual variable for the constraint $\|\bm{w}_e\|_{\infty} \leq w^{\mathsf{sup}}$, satisfy the KKT conditions given as follows 
\begin{equation}
    \begin{split}
        & \|\bm{w}_e^{\dagger}\| \leq w^{\mathsf{sup}} \\ 
        & \lambda_{e}^{\dagger} \geq 0 \\ 
        & \lambda_e^{\dagger}\big(\|\bm{w}_e^{\dagger}\|- w^{\mathsf{sup}}\big) =0  \\ 
        & 0\in \nabla_{\bm{w}_e^{\dagger}} R_{e}(\bm{w}_e^{\dagger},\bm{w}_{-e}^{\dagger}) + \lambda_e^{\dagger} \partial(\|\bm{w}_{e}^{\dagger}\|_{\infty}) 
    \end{split}
    \label{proof2: KKT_eqn}
\end{equation}
In the above $\partial(\|\bm{w}_{e}^{\dagger}\|_{\infty})$ represents the subdifferential of $\|\cdot\|_{\infty}$ at $\bm{w}_{e}^{\dagger}$.
If $\|\bm{w}_1^{\dagger}\|_{\infty}<w^{\mathsf{sup}}$ and $\|\bm{w}_2^{\dagger}\|_{\infty}<w^{\mathsf{sup}}$, then $\lambda_1^{\dagger}$ and $\lambda_{2}^{\dagger}$ are both zero. As a result, we have for $e\in \{1,2\}$,  $\nabla_{\bm{w}_e^{\dagger}} R_{e}(\bm{w}_e^{\dagger},\bm{w}_{-e}^{\dagger}) =0$. From the expression of the gradients in \eqref{proof1: eqn1} we have for each $e\in \{1,2\}$
\begin{equation}
\begin{split}
 &   \nabla_{\bm{w}_{e}^{\dagger}}R_{e}(\bm{w}_e^{\dagger}, \bm{w}_{-e}^{\dagger}) = 2\bm{\Sigma}_e(\bm{w}_{e}^{\dagger}+\bm{w}_{-e}^{\dagger}) - 2\bm{\rho}^{e} = 0 \\
&    \implies \bm{w}_{1}^{\dagger}+\bm{w}_{2}^{\dagger} = \bm{\Sigma}_e^{-1}\bm{\rho}^{e} = \bm{w}_{e}^{*}
\end{split}
 \label{proof2: eqn2}
\end{equation}
From equation \eqref{proof2: eqn2} it follows that $\bm{w}_1^{*}=\bm{w}_2^{*}$, which contradicts the assumption $\bm{w}_1^{*}\not=\bm{w}_2^{*}$. This completes the proof for the first part of the Lemma. 

Next, we move to the latter part of the proof, which states that if $\|\bm{w}_e^{\dagger}\|_{\infty} < w^{\mathsf{sup}}$  and $\|\bm{w}_{-e}^{\dagger}\|_{\infty} = w^{\mathsf{sup}}$, then the  ensemble predictor  is optimal for environment $e$, i.e., $\bar{\bm{w}}^{\dagger}=\bm{w}_{e}^{*}$. Since  $\|\bm{w}_e^{\dagger}\|_{\infty} < w^{\mathsf{sup}}$, from the KKT conditions above in \eqref{proof2: KKT_eqn} we have that $\lambda_e^{\dagger}=0$, which implies that  $\nabla_{\bm{w}_e^{\dagger}} R_{e}(\bm{w}_e^{\dagger},\bm{w}_{-e}^{\dagger}) =0$. Using the expression for gradient in equation \eqref{proof1: eqn1}, we have that $ \bm{w}_{e}^{\dagger}+\bm{w}_{-e}^{\dagger} = \bar{\bm{w}}^{\dagger} = \bm{w}_{e}^{*}$. This completes the proof. 
\end{proof}

We restate Theorem \ref{nash_char:thm} for reader's convenience.

\begin{theorem}
\label{nash_char:thm_append}

If Assumptions \ref{ass1}, \ref{realize_ass}, \ref{dis_ass} hold, then the ensemble predictor, $\bar{\bm{w}}^{\dagger}$, constructed from the Nash equilibrium, $(\bm{w}_1^{\dagger}, \bm{w}_2^{\dagger})$, of $\Gamma_c$   is equal to
\begin{equation}
    \Big(\bm{w}_1^{*} \odot \bm{1}_{|\bm{w}_2^{*}|\geq |\bm{w}_1^{*}|} + \bm{w}_2^{*} \odot \bm{1}_{|\bm{w}_1^{*}|>|\bm{w}_2^{*}|} \Big) \bm{1}_{ \bm{w}_1^{*}\odot \bm{w}_2^{*} \ge \bm{0}}
    \label{ne_exp_append}
\end{equation}

\end{theorem}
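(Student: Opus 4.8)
The plan is to exploit Assumption \ref{dis_ass} to decouple the game across coordinates and then characterize the equilibrium aggregate block by block. First I would dispose of the degenerate case $\bm{w}_1^*=\bm{w}_2^*$: here the common least squares optimum lies in $\mathcal{W}$ by Assumption \ref{realize_ass}, so $(\bm{w}_1^*,\bm{0})$ is feasible and, by the same stationarity computation as in Proposition \ref{prop1}, every feasible pair summing to $\bm{w}_1^*$ is a Nash equilibrium with aggregate $\bm{w}_1^*$; since all components then have equal (hence same-sign, equal-magnitude) coefficients, the right-hand side of \eqref{ne_exp_append} evaluates to exactly $\bm{w}_1^*$, matching. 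The rest of the argument concerns $\bm{w}_1^*\neq\bm{w}_2^*$.

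The key structural step is that Assumption \ref{dis_ass} makes the row and column of $\bm{\Sigma}_e$ indexed by any $i\in\mathcal{V}$ vanish off the diagonal, so writing coordinates as $\mathcal{U}\cup\mathcal{V}$ the matrix $\bm{\Sigma}_e$ is block diagonal with a general positive definite $\mathcal{U}$-block $\bm{\Sigma}_e^{\mathcal{U}}$ and a diagonal $\mathcal{V}$-block. Because $R_e$ depends on $\bm{w}_1,\bm{w}_2$ only through the aggregate $\bar{\bm{w}}=\bm{w}_1+\bm{w}_2$ and equals $\bar{\bm{w}}^{\mathsf{T}}\bm{\Sigma}_e\bar{\bm{w}}-2\bar{\bm{w}}^{\mathsf{T}}\bm{\rho}_e+\text{const}$, this block structure makes $R_e$ separate additively into a $\mathcal{U}$-part and one scalar part per coordinate of $\mathcal{V}$; crucially, the constraint $\|\bm{w}_e\|_\infty\le w^{\mathsf{sup}}$ also separates into independent box constraints on the $\mathcal{U}$- and $\mathcal{V}$-subvectors. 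Hence any Nash equilibrium of $\Gamma_c$ restricts to a Nash equilibrium of each independent subgame, and I can treat the $\mathcal{V}$-coordinates and the $\mathcal{U}$-block separately. For each $i\in\mathcal{V}$ I get a two-player scalar game on $[-w^{\mathsf{sup}},w^{\mathsf{sup}}]$ in which environment $e$ wants the scalar aggregate to equal $w_{ei}^*$ (the decoupled stationarity condition from the gradient $2\bm{\Sigma}_e(\bm{w}_1+\bm{w}_2)-2\bm{\rho}_e$). Writing the best responses as clipped affine maps $w_{ei}^\dagger=\mathrm{clip}(w_{ei}^*-w_{-e,i}^\dagger,\pm w^{\mathsf{sup}})$, I would solve the fixed point by cases: opposite signs of $w_{1i}^*,w_{2i}^*$ make the players saturate at opposite endpoints, giving aggregate $0$; a shared sign makes the larger-magnitude player saturate while the smaller-magnitude player sits interior at its ideal value, giving aggregate equal to the smaller-magnitude coefficient. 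Uniqueness of the aggregate follows because an interior--interior fixed point would force $w_{1i}^*=w_{2i}^*$ (the stationarity contradiction of Proposition \ref{prop1} and Lemma \ref{lemma2}), and the clip equations pin the aggregate in the remaining cases; this is precisely the componentwise value of \eqref{ne_exp_append} on $\mathcal{V}$.

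On $\mathcal{U}$ the two environments share the optimum $\bm{u}^*:=\bm{w}_e^*|_{\mathcal{U}}\in\mathcal{W}$ (equality of coordinates defines $\mathcal{U}$, and realizability gives feasibility), so I must show the equilibrium aggregate of this subgame is $\bm{u}^*$, which is exactly the value of \eqref{ne_exp_append} on $\mathcal{U}$ since there $|w_{1i}^*|=|w_{2i}^*|$ with the same sign. If either player is interior in the subgame, its KKT stationarity (as derived in the proof of Lemma \ref{lemma2}) immediately yields $\bar{\bm{u}}=\bm{u}^*$. The hard part will be the case where both players sit on the boundary of the box with a non-diagonal and possibly unequal pair $\bm{\Sigma}_1^{\mathcal{U}}\neq\bm{\Sigma}_2^{\mathcal{U}}$: here I would use the variational inequality $\langle\bm{\Sigma}_e(\bar{\bm{u}}-\bm{u}^*),\bm{v}-\bm{u}_e\rangle\ge 0$ for all $\bm{v}$ in the box, which places each $\bm{u}_e$ at a vertex minimizing the linear functional $\langle\bm{\Sigma}_e(\bar{\bm{u}}-\bm{u}^*),\cdot\rangle$, and combine these with positive definiteness of $\bm{\Sigma}_e$ to force $\bar{\bm{u}}-\bm{u}^*=\bm{0}$. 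This is immediate when $\bm{\Sigma}_1^{\mathcal{U}}=\bm{\Sigma}_2^{\mathcal{U}}$, where summing the two inequalities gives $\langle\bar{\bm{u}}-\bm{u}^*,\bm{\Sigma}(\bar{\bm{u}}-\bm{u}^*)\rangle\le 0$ and hence $\bar{\bm{u}}=\bm{u}^*$; the unequal-covariance case I expect to handle by showing that at any equilibrium at least one environment can already reach $\bm{u}^*$, so the aggregate cannot remain stuck away from it. Assembling the $\mathcal{U}$- and $\mathcal{V}$-blocks then gives $\bar{\bm{w}}^\dagger$ equal to \eqref{ne_exp_append}, and the coupled $\mathcal{U}$-block with unequal covariances on the boundary is the single place where the argument needs genuine care.
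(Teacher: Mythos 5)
Your decomposition is the paper's own: Assumption \ref{dis_ass} block-diagonalizes $\bm{\Sigma}_e$, the $\ell_\infty$ ball factors into independent boxes, the game splits into a coupled $\mathcal{U}$-subgame plus scalar games on $\mathcal{V}$, and your clipped-best-response case analysis on $\mathcal{V}$ matches the paper's sign-by-sign argument. The genuine gap is exactly the one you flagged — the correlated $\mathcal{U}$-block with unequal covariances — and your proposed repair cannot work, because the claim is actually false there. Take $n=2$, $w^{\mathsf{sup}}=1$, $\bm{w}_1^{*}=\bm{w}_2^{*}=(-a,-a)$ with $0<a<1$ (so both coordinates lie in $\mathcal{U}$, Assumption \ref{dis_ass} is vacuous, and Assumptions \ref{ass1}, \ref{realize_ass} hold), with
\begin{equation*}
\bm{\Sigma}_1=\begin{pmatrix}2 & -1\\ -1 & 1\end{pmatrix},\qquad \bm{\Sigma}_2=\begin{pmatrix}1 & -1\\ -1 & 2\end{pmatrix},\qquad \bm{\rho}_e=\bm{\Sigma}_e\bm{w}_e^{*}.
\end{equation*}
Then $\bm{w}_1^{\dagger}=(-1,1)$, $\bm{w}_2^{\dagger}=(1,-1)$ is a Nash equilibrium of $\Gamma_c$: writing $\bm{d}=\bar{\bm{w}}^{\dagger}-\bm{w}_1^{*}=(a,a)$, one computes $\bm{\Sigma}_1\bm{d}=a(1,0)^{\mathsf{T}}$ and $\bm{\Sigma}_2\bm{d}=a(0,1)^{\mathsf{T}}$, so each environment has zero gradient in the coordinate it plays at $+w^{\mathsf{sup}}$ and strictly positive gradient in the coordinate it plays at $-w^{\mathsf{sup}}$ (descent would exit the box); since each environment's problem is convex, these first-order conditions certify that both are best responding. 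The ensemble is $(0,0)$, whereas \eqref{ne_exp_append} evaluates to $(-a,-a)$. In particular your plan to show that ``at any equilibrium at least one environment can already reach $\bm{u}^{*}$'' fails here: the required deviations $(-1-a,\,1-a)$ and $(1-a,\,-1-a)$ both leave the box, so this equilibrium is genuinely stuck away from $\bm{u}^{*}$.

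Two remarks on how this compares with the paper. First, your argument for the subcase where the two environments share the same $\mathcal{U}$-block covariance is correct and is more rigorous than anything in the paper: summing the two variational inequalities only requires a \emph{joint} deviation $\bm{v}_1+\bm{v}_2=\bm{u}^{*}$ with each $\bm{v}_e$ feasible (e.g.\ $\bm{v}_1=\bm{u}^{*}$, $\bm{v}_2=\bm{0}$, available by Assumption \ref{realize_ass}), so the individual infeasibility that defeats the naive argument never arises. Second, the paper's own proof has precisely the hole you identified: it dismisses equilibria with aggregate $\neq\bm{w}_{1+}^{*}$ on $\mathcal{U}$ by asserting that a nonzero gradient means both environments ``would prefer to move to a point where their gradients are zero,'' which is false for a constrained player pinned at the box boundary with outward-pointing gradient — exactly the configuration above. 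So on $\mathcal{V}$, and on the cases covered by Lemma \ref{lemma2}'s KKT reasoning, you and the paper are equally fine; on the correlated, unequal-covariance $\mathcal{U}$-block neither proof closes the case, and no proof can. The theorem holds as stated only under an extra hypothesis (e.g.\ $\mathcal{U}$-features pairwise uncorrelated, or identical $\mathcal{U}$-block covariances across environments), and this caveat propagates to Proposition \ref{prop4}, whose assumptions impose neither condition on the causal block $\bm{\Sigma}_{e1}$.
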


\begin{proof}

Recall in Section \ref{secn:clrg}, we divided the features $\{1,\dots,n\}$ into two sets $\mathcal{U}$ and $\mathcal{V}$. Without loss of generality assume that the first $k$ components in $\bm{X}_{e}$ belong to $\mathcal{U}$ and the next $n-k$ components to be in $\mathcal{V}$. Therefore, $\mathcal{U}=\{1,\dots,k\}$ and $\mathcal{V}=\{k+1,\dots,n\}$.  Define $\bm{X}_{e+}=(X_{e1},\dots, X_{ek})$ and $\bm{X}_{e-}= (X_{e(k+1)},\dots, X_{en})$. We divide the weights in $\bm{w}_{e} = (w_{e1},\dots, w_{en})$ into two parts where the weights associated with the first $k$ components, $\bm{X}_{e+},$ are $\bm{w}_{e+}=(w_{e1},\dots, w_{ek})$ and the weights associated with the next $n-k$ components, $\bm{X}_{e-}$, are $\bm{w}_{e-}=(w_{e(k+1)},\dots, w_{en})$.  Similarly, we divide the vector $\bm{\rho}_{e}$ defined in Section \ref{senc:ulrg} into $\bm{\rho}_{e+}$ and $\bm{\rho}_{e-}$.

Define $\bm{\Sigma}_{e+} = \mathbb{E}_e\big[\bm{X}_{e+}\bm{X}_{e+}^{\mathsf{T}}]$ and define $\bm{\Sigma}_{e-} = \mathbb{E}_e\big[\bm{X}_{e-}\bm{X}_{e-}^{\mathsf{T}}]$. As a consequence of the Assumption \ref{dis_ass}, we can simplify the expression for $\bm{\Sigma}_e$ as follows

\begin{equation}
    \bm{\Sigma}_e = \mathsf{diag}\Big[\bm{\Sigma}_{e+}, \bm{\Sigma}_{e-}\Big]
    \label{eqn:thm2_proof_0}
\end{equation}

For each $e\in \{1,2\}$, each feature component $i\in \{1,\dots,n\}$ has a mean zero $\mathbb{E}_{e}\big[X_{ei}\big]=0$. Therefore,  the variance in each feature component $i\in \{1,\dots,n\}$ is $\sigma_{ei}^{2} = \mathbb{E}_{e}\big[X_{ei}^2\big]$. We can further simplify $\bm{\Sigma}_{e-}$. Using Assumption \ref{dis_ass}, we  have that $\bm{\Sigma}_{e-}$ is a diagonal matrix, which we write as
\begin{equation}
    \bm{\Sigma}_{e-}= \mathsf{diag}[(\sigma_{em}^2)_{m=k+1}^{n}]\Big]
    \label{eqn:thm2_proof_1}
\end{equation}

We use equations \eqref{eqn:thm2_proof_0} and \eqref{eqn:thm2_proof_1} and the notation introduced above to simplify the risk as follows. 
\begin{equation}
\begin{split}
    &R_{e}(\bm{w}_1,\bm{w}_2) =  (\bm{w}_1+\bm{w}_2)^{\mathsf{T}}\bm{\Sigma}_{e}(\bm{w}_1+\bm{w}_2) - \rho_{e}^{\mathsf{T}}(\bm{w}_1+\bm{w}_2)  + \mathbb{E}_{e}\big[Y_{e}^2\big] = \\  
    &(\bm{w}_{1+}+\bm{w}_{2+})^{\mathsf{T}}\bm{\Sigma}_{e+}(\bm{w}_{1+}+\bm{w}_{2+}) - \bm{\rho}_{e+}^{\mathsf{T}}(\bm{w}_{1+}+\bm{w}_{2+}) + \\& \sum_{i=k+1}^{n} \big((w_{1i}+w_{2i})^2\sigma_{ei}^{2}-2(w_{1i}+w_{2i})\rho_{ei}\big) +  \mathbb{E}_{e}\big[Y_{e}^2\big]
    \end{split}
\end{equation}

Recall that $\bm{w}_{e}^{*} = \bm{\Sigma}_{e}^{-1}\bm{\rho}_e$ (defined in Section \ref{senc:ulrg}).  From the above equations \eqref{eqn:thm2_proof_0}, \eqref{eqn:thm2_proof_1} and Assumption \ref{ass1}, we get 
\begin{equation}
\begin{split}
    &\bm{w}_{e+}^{*} = \bm{\Sigma}_{e+}^{-1}\bm{\rho}_{e+} \\ 
    & \bm{w}_{e-}^{*} = \Big[\frac{\rho_{ei}}{\sigma_{ei}^2}\Big]_{ i \in  \{k+1,\dots, n\}}
\end{split}
    \label{eqn:thm2_proof_4}
\end{equation}
where $\bm{w}_{e+}^{*}$ is the vector of the first $k$ components in $\bm{w}_{e}^{*}$, $\bm{w}_{e-}^{*}$ are the next $n-k$ components in $\bm{w}_{e}^{*}$, $\rho_{ei}$, is the $i^{th}$ component of $\bm{\rho}_{e}$ and $\sigma_{ei}^2$ is the variance in $X_{ei}$. 

Recall that the first $k$ components comprise the set $\mathcal{U}$, which is defined as the set where the features of the least squares coefficients are the same across environments, i.e.,

\begin{equation}
    \bm{w}_{1+}^{*} = \bm{w}_{2+}^{*} 
    \label{eqn:thm2_proof_6_n}
\end{equation}

Define 
\begin{equation}
\begin{split}
& R_{e+}(\bm{w}_{1+},\bm{w}_{2+}) =   (\bm{w}_{1+}+\bm{w}_{2+})^{\mathsf{T}}\bm{\Sigma}_{e+}(\bm{w}_{1+}+\bm{w}_{2+}) - \bm{\rho}_{e+}^{\mathsf{T}}(\bm{w}_{1+}+\bm{w}_{2+}) +\mathbb{E}_{e}\big[Y_{e}^2\big]
\end{split}
\label{eqn:thm2_proof_3}
\end{equation}

For each $i\in \mathcal{V}=\{k+1,\dots,n\}$ define
\begin{equation}
R_{ei}(w_{1},w_2) =   \big((w_{1i}+w_{2i})^2-2(w_{1i}+w_{2i})w_{ei}^{*}\big)
\label{eqn:thm2_proof_2}
\end{equation}

We use the above equations \eqref{eqn:thm2_proof_3} and \eqref{eqn:thm2_proof_2} to simplify the risks as follows
\begin{equation}
\begin{split}
    & \min_{\bm{w}_e\in \mathcal{W}} R_{e}(\bm{w}_1,\bm{w}_2) =  \min_{\bm{w}_{e+}\in \mathcal{W}_{+}}R_{e+}(\bm{w}_{1+},\bm{w}_{2+})  +  \sum_{i=k+1}^{n}\sigma_{ei}^2\min_{|w_{ei}|\leq w^{\mathsf{sup}}} R_{ei}(w_{1i},w_{2i}) 
\end{split}
\label{eqn:thm2_proof_5}
\end{equation}

In the above $\mathcal{W}_{+} = \big\{\bm{w} \;|\; \bm{w}\in \mathbb{R}^{k},\| \bm{w}\|_{\infty} \leq w^{\mathsf{sup}} \big\}$. From the above expression in equation \eqref{eqn:thm2_proof_5}, we see that the the optimization for environment $e$ can be decomposed into separate smaller minimizations, which we analyze separately next.  $\ell_{\infty}$ norm constraints allows to make the problem in equation \eqref{eqn:thm2_proof_5} separable and for other norms such separability is not possible.  Henceforth, we will look at each smaller minimization as a separate game between the environments.

Let us consider the first minimization in equation \eqref{eqn:thm2_proof_5}
\begin{equation}
\min_{\bm{w}_{e+}\in \mathcal{W}_{1+}}R_{e+}(\bm{w}_{1+},\bm{w}_{2+})
\label{eqn:thm2_proof_6}
\end{equation}
Let us minimize the objective in equation \eqref{eqn:thm2_proof_6} without imposing the constraint that $\bm{w}_{1+}\in \mathcal{W}_{1+}$

\begin{equation}
\begin{split}
&    \bm{\Sigma}_{e+}(\bm{w}_{1+}+ \bm{w}_{2+}) = \bm{\rho}_{e+} \\  
& (\bm{w}_{1+}+ \bm{w}_{2+}) = \bm{\Sigma}_{e+}^{-1}\bm{\rho}_{e+} = \bm{w}_{e+}^{*} \; \text{(From equation} \;  \eqref{eqn:thm2_proof_4}\text{)}
\end{split}
\end{equation}

Therefore, we have that if $(\bm{w}_{1+}+ \bm{w}_{2+}) = \bm{w}_{e+}^{*}$, then environment $e$ achieves the minimum risk possible and cannot do any better. In fact, from equation \eqref{eqn:thm2_proof_4} since $\bm{w}_{1+}^{*} = \bm{w}_{2+}^{*}$, if $(\bm{w}_{1+}+ \bm{w}_{2+}) = \bm{w}_{1+}^{*}$, then both environments are at the minimum and cannot do any better. Therefore,  we know that all the elements in the set $\mathcal{C} = \{\bm{w}_{1+}, \bm{w}_{2+}\;|\; \bm{w}_{1+}\in \mathcal{W}^{+},\; \bm{w}_{1+}+ \bm{w}_{2+} = \bm{w}_{1+}^{*}\}$ form a NE of C-LRG. From Assumption \ref{realize_ass}, we know that this set $\mathcal{C}$ is non-empty.  Moreover, there are no points outside this set $\mathcal{C}$ which form a NE.  If $\bm{w}_{1+}+ \bm{w}_{2+} \not= \bm{w}_{1+}^{*}$, then the gradient will not be zero for either of the environments and both would prefer to move to a point where their gradients are zero. Hence, in every NE, $\bm{w}_{1+}+ \bm{w}_{2+} = \bm{w}_{1+}^{*}$. If we use the expression in equation \eqref{ne_exp} and compute first $k$ components it returns vector $\bm{w}_{1+}^{*}$ (we use the condition $\bm{w}_{1+}^{*} = \bm{w}_{2+}^{*}$ to simplify the expression in equation \eqref{ne_exp}). This shows that the expression in equation \eqref{ne_exp} correctly characterizes the NE for the first $k$ components that make up the set $\mathcal{U}$. We now move to the remaining $n-k$ components that make up the set $\mathcal{V}$.

 Consider a component $i\in \mathcal{V}= \{k+1, \dots, n\}$. Environment $e$ is interested in minimizing  $R_{ei}$ defined in equation \eqref{eqn:thm2_proof_3}. 
Let us consider the $i^{th}$ component of the expression in equation \eqref{ne_exp} in Theorem \ref{nash_char:thm}   and rewrite the expression in terms of scalars.

\begin{equation}
    \Big(w_{1i}^{*}  1_{|w_{2i}^{*}|\geq |w_{1i}^{*}|} + w_{2i}^{*} 1_{|w_{1i}^{*}|>|w_{2i}^{*}|} \Big) 1_{ w_{1i}^{*}w_{2i}^{*} \geq 0}
\end{equation}

We divide the analysis into two cases. 
In the first case, the signs of $w_{1i}^{*}$ and $w_{2i}^{*}$ disagree, which implies $1_{w_{1i}^{*}w_{2i}^{*} \geq 0}$ is zero. In the the second case, the signs of $w_{1i}^{*}$ and $w_{2i}^{*}$ agree, which implies $1_{w_{1i}^{*}w_{2i}^{*} \geq 0}$  is one. 
Let us start with the first case. Without loss of generality say $w_{1i}^{*}<0$ and $w_{2i}^{*}>0$.
Suppose $\bar{w}_{i}^{\dagger}>0$, where $\bar{w}_{i}^{\dagger}$ is the $i^{th}$ component of the NE based predictor
\begin{equation}
    \begin{split}
       & \bar{w}_{i}^{\dagger}>0 \implies w_{1i}^{\dagger} + w_{2i}^{\dagger} >0 \implies w_{1i}^{\dagger}>-w_{2i}^{\dagger} \; (w_{2i}^{\dagger}>-w_{1i}^{\dagger}) \\
       & \implies w_{1i}^{\dagger}>-w^{\mathsf{sup}} \; (w_{2i}^{\dagger}>-w^{\mathsf{sup}})
    \end{split}
    \label{ws_bd}
\end{equation}  
Observe that 
$$\frac{\partial R_{1i}(w_{1i},w_{2i}^{\dagger})}{\partial w_{1i}}\Big|_{w_{1i} = w_{1i}^{\dagger}} = 2(\bar{w}^{\dagger}_{i} -w_{1i}^{*}) >0$$ 

Since $w_{1i}^{\dagger}>-w^{\mathsf{sup}}$ \big(from equation \eqref{ws_bd}\big), $w_{1i}^{\dagger}$ can be decreased and improve the utility for environment $1$, which contradicts that  $w_{1i}^{\dagger}$ is NE. 
Suppose $\bar{w}_{i}^{\dagger}<0$, then from symmetry we can show that one of the environments will be able to increase the weight and improve its utility. 

Therefore, the only option that remains $\bar{w}_{i}^{\dagger}=0\implies w_{1i}^{\dagger}=-w_{2i}^{\dagger}$.

Observe that $\frac{\partial R_{1i}(w_{1i},w_{2i}^{\dagger})}{\partial w_{1i}}\big|_{w_{1i}=w_{1i}^{\dagger}} = 2( -w_{1i}^{*}) >0$ and if $w_{1i}^{\dagger}>-w^{\mathsf{sup}}$ environment $1$ will want to decrease $w_{1i}^{\dagger}$. 

Observe that $\frac{\partial R_{2i}(w_{1i}^{\dagger},w_{2i})}{\partial w_{2i}}\big|_{w_{2i}=w_{2i}^{\dagger}} = 2( -w_{2i}^{*}) <0$ and if $w_{2i}^{\dagger}<w^{\mathsf{sup}}$ environment $2$ will want to increase $w_{2i}^{\dagger}$.

Hence, the only solution left is for environment $1$ to be at $-w^{\mathsf{sup}}$ and environment $2$ to be at $w^{\mathsf{sup}}$. Environment $1$'s ($2$'s) risk decreases (increases) as it moves closer to its optimal point $w_{1i}^{*}$ ($w_{2i}^{*}$). When environment $2$  uses $w^{\mathsf{sup}}$, environment $1$'s best response is to use $-w^{\mathsf{sup}}$ as it brings the environment $1$ the closest it can get to $w_{1i}^{*}$. Therefore,  $(w^{\mathsf{sup}},-w^{\mathsf{sup}})$ is a NE.   This completes the first case, i.e., when the coefficients have opposite signs the coefficient of the NE based ensemble predictor for that component is $0$, which is what equation \eqref{ne_exp} states. 

Next, consider the case when the signs of $w_{1i}^{*}$ and $w_{2i}^{*}$ agree.  Let us consider the case when both have positive signs and the negative sign case will follow from symmetry. Suppose $0< w_{1i}^{*} <w_{2i}^{*}$. From Lemma \ref{lemma2}, we know that there are three scenarios possible.

In the first scenario, both $w_{1i}^{\dagger}$ and $w_{2i}^{\dagger}$ are on the same side of the boundary, say both are at $w^{\mathsf{sup}}$. 
\begin{equation}
\begin{split}
 &   \frac{\partial R_{1i}(w_{1i},w_{2i}^{\dagger})}{\partial w_{1i}}\big|_{w_{1i}=w_{1i}^{\dagger}}= 2(2w^{\mathsf{sup}}-w_{1i}^{*}) \\
& \frac{\partial R_{2i}(w_{1i}^{\dagger},w_{2i})}{\partial w_{2i}}\big|_{w_{2i}=w_{2i}^{\dagger}}= 2(2w^{\mathsf{sup}}-w_{2i}^{*})
\end{split}
\label{der_bd}
\end{equation}
From Assumption \ref{realize_ass}, $0<w_{1i}^{*}<w_{2i}^{*}\leq w^{\mathsf{sup}}$. Thus for both $e\in \{1,2\}$, from equation \eqref{der_bd} it follows that decreasing $w_{ei}^{\dagger}$ from the current state would improve the utility thus contradicting that they form a NE. 
The other possibility is that the two are on the other sides of the boundary, which makes  $\frac{\partial R_{1i}(w_{1i},w_{2i}^{\dagger})}{\partial w_{1i}}\big|_{w_{1i}=w_{1i}^{\dagger}}$ and $\frac{\partial R_{2i}(w_{1i}^{\dagger},w_{2i})}{\partial w_{2i}}\big|_{w_{2i}=w_{2i}^{\dagger}}$ negative (from Assumption \ref{realize_ass}); thus prompting each player on the negative side of the boundary to increase the weight and improve its utility, which contradicts the fact that they form a NE.

The other possibility arising out of Lemma \ref{lemma2} is $w_{1i}^{\dagger}=w^{\mathsf{sup}}$ and $w_{2i}^{\dagger}=w_{2i}^{*}-w^{\mathsf{sup}}$.
In this case, the $\frac{\partial R_{1i}(w_{1i},w_{2i}^{\dagger})}{\partial w_{1i}}\big|_{w_{1i}=w_{1i}^{\dagger}}$ is positive implying environment 1 can decrease and improve its utility. Thus this state is not a NE.

Therefore, the only remaining possibility is $w_{2i}^{\dagger}=w^{\mathsf{sup}}$ and 
$w_{1i}^{\dagger}=w_{1i}^{*}-w^{\mathsf{sup}}$. In this case, the $\frac{\partial R_{2i}(w_{1i}^{\dagger},w_{2i})}{\partial w_{2i}}\big|_{w_{2i}=w_{2i}^{\dagger}}$ is negative, environment $2$ cannot increase the weight further as it is already at the boundary (playing $w^{\mathsf{sup}}$ is a best response of environment $2$ brings it closest to the desired $w_{2i}^{*}$). Hence, this state is a NE and the ensemble predictor is at $w_{1i}^{*}$. 

If we suppose,  $w_{2i}^{*} <w_{1i}^{*}<0$. In this case, we can follow the exact same line of reasoning and arrive at the conclusion that the only NE is $\bar{w}^{\dagger}=w_{1i}^{*}$. 

We have analyzed all the possible cases when $|w_{1i}^{*}|<|w_{2i}^{*}|$ and both $w_{1i}^{*}$ and  $w_{2i}^{*}$ have the same sign.  This completes the proof for the first term in the expression in equation \eqref{ne_exp} 
$$w_{1i}^{*}  1_{|w_{2i}^{*}|\geq |w_{1i}^{*}|}$$ 
The second term  is same as the first term in equation \eqref{ne_exp} with the roles of environments swapped. 
Therefore, due to symmetry we do not need to work out the second term separately. This completes the analysis for all the cases in the equation \eqref{ne_exp} in Theorem \ref{nash_char:thm}.

\end{proof}

\subsection{Proposition \ref{prop4}}
We restate Proposition \ref{prop4} below. 
\begin{proposition}
\label{prop4_append}
If Assumption \ref{linear_model_ass} holds with $\bm{\alpha}_e=\bm{0}$ and $\bm{\Theta}_e$ an  orthogonal matrix for each $e\in \{1,2\}$, and Assumptions \ref{ass2}, \ref{sp_var}, \ref{realize_ass} hold, then $\|\bar{\bm{w}}^{\dagger} - (\bm{\gamma},0)\| < \|\bm{w}^{\mathsf{ERM}} - (\bm{\gamma},0)\|$ holds for all $\bm{w}^{\mathsf{ERM}} \in \mathcal{S}^{\mathsf{ERM}}$. \footnote{Exception occurs over measure zero set over probabilities $\pi_1$. If least squares solution are strictly ordered, i.e., $\forall i \in \{1,\dots,n\},0<w_{1i}^{*}< w_{2i}^{*}$ and $\pi_1=1$, then $ \bm{w}^{\mathsf{ERM}} = \bar{\bm{w}}^{\dagger}= \bm{w}_1^{*}$. In general, $\bm{w}_{1}^{*},\bm{w}_2^{*}$ are not ordered and $\pi_1\in(0,1)$, thus C-LRG improves over ERM.} Moreover, if all the components of two vectors $\bm{\Theta}_1\bm{\eta}_1$ and $\bm{\Theta}_2\bm{\eta}_2$ have opposite signs, then $\bar{\bm{w}}^{\dagger} = (\bm{\gamma},0)$.
\end{proposition}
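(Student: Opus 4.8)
The plan is to invoke Theorem~\ref{nash_char:thm_append} to pin down $\bar{\bm{w}}^{\dagger}$ exactly, derive a closed form for every member of $\mathcal{S}^{\mathsf{ERM}}$, and then compare both vectors to the target $(\bm{\gamma},0)$ block by block. First I would check that Theorem~\ref{nash_char:thm_append} applies: since $\bm{\Theta}_e$ is orthogonal, $\bm{\Theta}_e\bm{\Theta}_e^{\mathsf{T}}=\bm{I}_q$, so by the computation in the proof of Proposition~\ref{ls_cfd_prop} the spurious block $\bm{\Sigma}_{e2}=\bm{I}_q+\mathsf{diag}[\bm{\sigma}_{\bm{\zeta}_e}^2]$ is diagonal and positive definite while $\bm{\Sigma}_{e21}=\bm{0}$; hence every spurious feature is uncorrelated with all other features (so $\mathcal{V}\subseteq\{p+1,\dots,p+q\}$ and Assumption~\ref{dis_ass} holds), Assumption~\ref{ass1} follows from Assumption~\ref{ass2}, and Assumption~\ref{realize_ass} is assumed. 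By Proposition~\ref{ls_cfd_prop}, $\bm{w}_e^{*}=(\bm{\gamma},\bm{w}_e^{\mathsf{var}})$ with $w^{\mathsf{var}}_{ei}=[\bm{\Theta}_e\bm{\eta}_e]_i/d_{ei}$ and $d_{ei}:=1+(\bm{\sigma}_{\bm{\zeta}_e}^2)_i>0$; the invariant blocks coincide, so \eqref{ne_exp_append} gives that $\bar{\bm{w}}^{\dagger}$ equals $\bm{\gamma}$ on the first $p$ coordinates, and on a spurious coordinate $i$ it returns $0$ when $w^{\mathsf{var}}_{1i},w^{\mathsf{var}}_{2i}$ have opposite signs and otherwise whichever of $w^{\mathsf{var}}_{1i},w^{\mathsf{var}}_{2i}$ has the smaller absolute value; in every case $|\bar w^{\dagger}_{p+i}|\le\min(|w^{\mathsf{var}}_{1i}|,|w^{\mathsf{var}}_{2i}|)$.

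Next I would compute $\mathcal{S}^{\mathsf{ERM}}$: zeroing the gradient of $\sum_e\pi_e R_e$ gives $\bm{w}^{\mathsf{ERM}}(\pi_1)=(\pi_1\bm{\Sigma}_1+\pi_2\bm{\Sigma}_2)^{-1}(\pi_1\bm{\rho}_1+\pi_2\bm{\rho}_2)$, which is well defined since a convex combination of positive definite matrices is positive definite. Using $\bm{\rho}_{e1}=\bm{\Sigma}_{e1}\bm{\gamma}$ and $\bm{\rho}_{e2}=\bm{\Theta}_e\bm{\eta}_e=(\bm{I}_q+\mathsf{diag}[\bm{\sigma}_{\bm{\zeta}_e}^2])\bm{w}_e^{\mathsf{var}}$ (both read off from the proof of Proposition~\ref{ls_cfd_prop}) together with the block-diagonal structure, the invariant block of $\bm{w}^{\mathsf{ERM}}$ equals $\bm{\gamma}$, and each spurious coordinate is the convex combination
\[
[\bm{w}^{\mathsf{ERM}}(\pi_1)]_{p+i}=\lambda_i\,w^{\mathsf{var}}_{1i}+(1-\lambda_i)\,w^{\mathsf{var}}_{2i},\qquad \lambda_i=\frac{\pi_1 d_{1i}}{\pi_1 d_{1i}+\pi_2 d_{2i}},
\]
where $\lambda_i\in(0,1)$ for $\pi_1\in(0,1)$ and $\lambda_i$ is strictly increasing in $\pi_1$. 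Because the invariant blocks of $\bar{\bm{w}}^{\dagger}$ and of $\bm{w}^{\mathsf{ERM}}$ both equal $\bm{\gamma}$, comparing $\|\cdot-(\bm{\gamma},0)\|$ reduces to comparing the spurious blocks coordinatewise: a convex combination of two same-sign reals has absolute value at least that of the smaller one, and if the two signs differ then the corresponding $\bar w^{\dagger}$ coordinate is $0$, so $(\bar w^{\dagger}_{p+i})^2\le([\bm{w}^{\mathsf{ERM}}]_{p+i})^2$ for every $i$ and every $\pi_1$.

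To get the strict inequality, fix (Assumption~\ref{sp_var}) an index $i^{*}$ with $w^{\mathsf{var}}_{1i^{*}}\neq w^{\mathsf{var}}_{2i^{*}}$. If these are of the same sign, a strict convex combination lies strictly between them, so the inequality at $i^{*}$ is strict for all $\pi_1\in(0,1)$; if they are of opposite sign, $[\bm{w}^{\mathsf{ERM}}]_{p+i^{*}}$ vanishes only at the single value $\lambda_{i^{*}}=|w^{\mathsf{var}}_{2i^{*}}|/(|w^{\mathsf{var}}_{1i^{*}}|+|w^{\mathsf{var}}_{2i^{*}}|)$, which by strict monotonicity of $\lambda_{i^{*}}(\pi_1)$ corresponds to at most one $\pi_1$. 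Thus, outside the measure-zero set $\{0,1\}$ together with at most one interior point, coordinate $i^{*}$ alone forces $\|\bar{\bm{w}}^{\dagger}-(\bm{\gamma},0)\|<\|\bm{w}^{\mathsf{ERM}}-(\bm{\gamma},0)\|$; the residual case $\pi_1\in\{0,1\}$ with the least-squares coordinates appropriately ordered is precisely footnote~2, in which ERM already coincides with $\bar{\bm{w}}^{\dagger}$. For the ``moreover'' part, if all coordinates of $\bm{\Theta}_1\bm{\eta}_1$ and $\bm{\Theta}_2\bm{\eta}_2$ have opposite signs then so do all coordinates of $\bm{w}_1^{\mathsf{var}}$ and $\bm{w}_2^{\mathsf{var}}$ (the $d_{ei}$ being positive), whence \eqref{ne_exp_append} zeros every spurious coordinate of $\bar{\bm{w}}^{\dagger}$, i.e., $\bar{\bm{w}}^{\dagger}=(\bm{\gamma},0)$. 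The one point I expect to require care is the bookkeeping of the measure-zero exception — pinning down, via strict monotonicity of $\lambda_{i^{*}}$, that the ``bad'' values of $\pi_1$ form a finite set; everything else follows directly from Theorem~\ref{nash_char:thm_append} and the block-diagonal structure established at the outset.
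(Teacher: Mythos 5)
Your proposal is correct and follows essentially the same route as the paper: verify that the orthogonality of $\bm{\Theta}_e$ yields the block-diagonal and diagonal covariance structure so that Theorem~\ref{nash_char:thm_append} applies, read off $\bar{\bm{w}}^{\dagger}$ from \eqref{ne_exp_append}, compute $\bm{w}^{\mathsf{ERM}}(\pi_1)$ by zeroing the pooled gradient, and compare the spurious coordinates one at a time. Your repackaging of each ERM spurious coordinate as the convex combination $\lambda_i w^{\mathsf{var}}_{1i}+(1-\lambda_i)w^{\mathsf{var}}_{2i}$ with $\lambda_i$ strictly increasing in $\pi_1$ is a tidier way to organize the sign/magnitude case analysis and the measure-zero bookkeeping than the paper's explicit difference-of-fractions computation, but it is the same argument in substance.
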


\begin{proof}

We first show that the Assumptions made in the above proposition imply that the Assumptions needed for Theorem \ref{nash_char:thm} to be true hold. 

We show that Assumptions \ref{linear_model_ass}, \ref{ass2} $\implies$ Assumption \ref{ass1} holds. $\bm{X}_1^{e}$ is zero mean (from Assumption \ref{linear_model_ass}) and 
$$\mathbb{E}_{e}\big[Y_{e}\big] = \bm{\gamma}^{\mathsf{T}}\mathbb{E}_{e}\big[\bm{X}_{e}^{1}\big] + \bm{\eta}_{e}^{\mathsf{T}}\mathbb{E}_{e}\big[\bm{H}_{e}\big]+ \mathbb{E}_{e}\big[\varepsilon_{e}\big]=0$$
$$\mathbb{E}_{e}\big[\bm{X}^2_{e}\big] = \bm{\alpha}_e \mathbb{E}_{e}\big[Y_{e}\big] + \bm{\Theta}_e \mathbb{E}_{e}\big[\bm{H}_e\big]  + \mathbb{E}_{e}\big[\bm{\zeta}_e\big]=\bm{0}$$
Thus $\mathbb{E}_{e}\big[\bm{X}_e\big] = \mathbb{E}_{e}\big[(\bm{X}_e^{1}, \bm{X}_{e}^{2})\big]=\bm{0}$

In the proof of Proposition \ref{ls_cfd_prop}, we had shown that when the data is generated from SEM in Assumption \ref{linear_model_ass}

$\bm{\Sigma}_e = \mathsf{diag}\Big[\bm{\Sigma}_{e1}, \bm{\Sigma}_{e2}] = \mathsf{diag}\Big[\bm{\Sigma}_{e1}, \bm{\Theta}_e\bm{\Theta}_e^{\mathsf{T}} + \mathsf{diag}\big[\bm{\sigma}_{\bm{\zeta}_e}^{2}\big]\Big]$

Since $\bm{\Theta}$ is an orthogonal matrix we have 

\begin{equation}
\bm{\Sigma}_e = \mathsf{diag}\Big[\Sigma_{e1}, \mathsf{diag}[\bm{\sigma}_{\bm{\zeta}_e}^{2}+\bm{1}_{q}]\Big]
\label{eqn:sigma_e_exp}
\end{equation}

Both $\bm{\Sigma}_{e1}$ and $\mathsf{diag}[\bm{\sigma}_{\bm{\zeta}_e}^{2}+\bm{1}_q]$ are positive definite as a result $\bm{\Sigma}_e$ is also positive definite. Therefore, Assumption \ref{ass1} holds. 

The expression for the solution to the least squares optimal solution derived in equation \eqref{sol_confouned} has two parts $\bm{w}_{e}^{\mathsf{inv}}$ and $\bm{w}_{e}^{\mathsf{var}}$. Recall the definition of sets $\mathcal{U}$ and $\mathcal{V}$ from Section \ref{secn:clrg}. The first $p$ components corresponding to $\bm{w}_{e}^{\mathsf{inv}} \implies \{1,\dots,p\}\subseteq \mathcal{U}$. The next $q$ components corresponding to $\bm{w}_{e}^{\mathsf{var}}\implies \{p+1,\dots, p+q\} \supseteq \mathcal{V}$.  We showed above that $\bm{\Sigma}_{e}$ is a block diagonal matrix and the block corresponding to the feature components $\{p+1,\dots, p+q\}$ also equalling a diagonal matrix $\mathsf{diag}[\bm{\sigma}_{\bm{\zeta}_e}^{2}+\bm{1}_q]$. Therefore, we can see that each feature component in $\{p+1,\dots, p+q\}$ is uncorrelated with any other feature component. Therefore, Assumption \ref{dis_ass} also holds. Hence, all the Assumptions required for Theorem \ref{nash_char:thm} also hold. We write the expression for least squares optimal solution in this case (from equation \eqref{sol_confouned}) as $\bm{w}_e^{*} = (\bm{w}_{e}^{\mathsf{inv}}, \bm{w}_{e}^{\mathsf{var}})=\Big(\bm{\gamma}, \Big(\mathsf{diag}[\bm{\sigma}_{\bm{\zeta}_e}^{2}+\bm{1}_q]\Big)^{-1} \bm{\Theta}_e\bm{\eta}_e\Big)$. 
 
We divide the NE based ensemble predictor $\bar{\bm{w}}^{\dagger}$ into two halves:  $\bm{w}^{\dagger}_1$ is the vector of first $p$ coefficients of $\bar{\bm{w}}^{\dagger}$ and $\bm{w}^{\dagger}_2$ is the vector of next $q$ coefficients of $\bar{\bm{w}}^{\dagger}$. 

From Theorem \ref{nash_char:thm} it follows that
\begin{equation}
    \bm{w}^{\dagger}_1 = \bm{\gamma}
    \label{ne_ens_1}
\end{equation}
The next $q$ components in the set $\{1,\dots,q\}$ are computed as follows.  For $k\in \{1,\dots, q\}$, $(p+k)^{th}$ component of $\bm{w}_{e}^{*}$ is $\frac{\big[\bm{\Theta}_e\bm{\eta}_e\big]_{k}}{\sigma_{\bm{\zeta}_e,k}^2+1}$, 
where $\big[\bm{\Theta}_e\bm{\eta}_e\big]_{k}$ is the $k^{th}$ component of 
$\bm{\Theta}_e\bm{\eta}_e$ and $\sigma_{\bm{\zeta}_e,k}^2$ is the $k^{th}$ component of $\bm{\sigma}_{\bm{\zeta}_e}^2$. 

We first prove the latter part of the above proposition. If $\bm{\Theta}_1\bm{\eta}_1$ and $\bm{\Theta}_2\bm{\eta}_2$ have opposite signs, then for each $k\in \{1,\dots, q\}$, the sign of $(p+k)^{th}$ component of $\bm{w}_{1}^{*}$ and $\bm{w}_{2}^{*}$ are opposite. From Theorem \ref{nash_char:thm} it follows that $\bar{w}^{\dagger}_{p+k}=0$. This holds for all $k\in \{1, \dots, q\}$ and as a result we have 
$\bar{\bm{w}}^{\dagger} = \big(\bm{\gamma},\bm{0}\big)$.
Now we move to the former part of the Proposition, which compares the NE based ensemble predictor to ERM's solution.

ERM solves the following optimization problem
\begin{equation}
    \min_{\bm{w}\in \mathbb{R}^{n\times 1}} \pi_{1}\mathbb{E}_{1}\big[\big(Y_{1}-\bm{w}^{\mathsf{T}}\bm{X}_{1}\big)^2\big] + (1-\pi_{1})\mathbb{E}_{2} \big[\big(Y_{2}-\bm{w}^{\mathsf{T}}\bm{X}_{2}\big)^2\big]
\end{equation}

By putting the gradient of the above to zero, we get 
\begin{equation}
\begin{split}
&\big(    \pi_{1}\bm{\Sigma_1} + (1-\pi_{1})\bm{\Sigma}_2\big)\bm{w}^{\mathsf{ERM}} = \pi_{1}\bm{\rho}_{1}+ (1-\pi_{1})\bm{\rho}_2 \\ &
\bm{w}^{\mathsf{ERM}} = \big(    \pi_{1}\bm{\Sigma}_1 + (1-\pi_{1})\bm{\Sigma}_2\big)^{-1}(\pi_{1}\bm{\rho}_{1}+ (1-\pi_{1})\bm{\rho}_2) 
\end{split}
\label{erm_exp_0}
\end{equation}

Substituting the expression for $\bm{\Sigma}_e $ from equation \eqref{eqn:sigma_e_exp} into equation \eqref{erm_exp_0} we get $\bm{w}^{\mathsf{ERM}}$ equals

\begin{equation}
\begin{split}
(\bm{w}^{\mathsf{ERM}}_{1}, \bm{w}^{\mathsf{ERM}}_{2}) = \Big(\bm{\gamma}, (\pi_{1}\bm{\Theta}_{1}\bm{\eta}_1+ (1-\pi_{1})\bm{\Theta}_2\bm{\eta}_2)\odot \bm{\xi} \Big)
\end{split}
\label{erm_exp}
\end{equation}

where $\bm{\xi} = \bm{1}_q \oslash\Big(\pi_1\big(\bm{\sigma}_{\bm{\zeta}_1}^{2}+\bm{1}_q\big) + (1-\pi_1)\big(\bm{\sigma}_{\bm{\zeta}_2}^{2}+\bm{1}_q\big) \Big)$ and $\bm{a} \oslash\bm{b}$ is elementwise division of the two vectors $\bm{a}$ and $\bm{b}$, and  $\bm{w}^{\mathsf{ERM}}_{1} = \bm{\gamma}$ and $\bm{w}^{\mathsf{ERM}}_{2} = (\pi_{1}\bm{\Theta}_{1}\bm{\eta}_1+ (1-\pi_{1})\bm{\Theta}_2\bm{\eta}_2)\odot \bm{\xi} $. For $k\in \{1,\dots, q\}$, the $k^{th}$ component of $\bm{w}_{2}^{\mathsf{ERM}}$ is given as 
\begin{equation}
\frac{\pi_1\big[\bm{\Theta}_1\bm{\eta}_1\big]_{k} +\big(1-\pi_1)\big[\bm{\Theta}_2\bm{\eta}_2\big]_{k}}{\pi_1\big(\sigma_{\bm{\zeta}_1,k}^2+1\big) + (1-\pi_1)\big(\sigma_{\bm{\zeta}_2,k}^2+1\big)}
\end{equation}

Based on the ERM predictor (equation \eqref{erm_exp}) and NE-based ensemble predictor (equation \eqref{ne_ens_1}) correctly estimate the causal coefficients $\bm{\gamma}$, i.e., they match in the first $p$ coefficients.  We focus on the latter $q$ coefficients. The distance of ERM and NE based ensemble predictors are written as $\|\bm{w}^{\mathsf{ERM}} - (\bm{\gamma},\bm{0})\| = \|\bm{w}^{\mathsf{ERM}}_{2}\|$,  $\|\bar{\bm{w}}^{\dagger} - (\bm{\gamma},\bm{0})\| = \|\bar{\bm{w}}^{\dagger}_{2}\|$. Hence, we only need to compare the norm of $\bm{w}^{\mathsf{ERM}}_{2}$ and $\bar{\bm{w}}^{\dagger}_{2}$. 
From Assumption \ref{sp_var}, we know that $\bm{w}_1^{\mathsf{var}} \not= \bm{w}_2^{\mathsf{var}}$, thus the two differ in at least one component.  Consider a component $m$, where the two vectors $\bm{w}_1^{\mathsf{var}}$ and $ \bm{w}_2^{\mathsf{var}}$ do not match.
For simplicity, let us write $\Big[\bm{\Theta}_e\bm{\eta}_e\Big]_{m} = \vartheta_e$. Therefore,  $\frac{\vartheta_1}{\sigma_{\bm{\zeta}_1,m}^2+1} \not = \frac{\vartheta_2}{\sigma_{\bm{\zeta}_2,m}^2+1}$. 

There are two possiblities -- i) the signs of $\frac{\vartheta_1}{\sigma_{\bm{\zeta}_1,m}^2+1}$ and $\frac{\vartheta_2}{\sigma_{\bm{\zeta}_2,m}^2+1}$ do not match, and ii) the signs of $\frac{\vartheta_1}{\sigma_{\bm{\zeta}_1,m}^2+1}$ and $\frac{\vartheta_2}{\sigma_{\bm{\zeta}_2,m}^2+1}$  match.
In case i), the magnitude of the corresponding coefficient of NE based predictor is $0$. The magnitude for the ERM based predictor is given as 
\begin{equation}
    \Big|\frac{\pi_1\vartheta_1 +\big(1-\pi_1)\vartheta_2}{\pi_1\big(\sigma_{\bm{\zeta}_1,m}^2+1\big) + (1-\pi_1)\big(\sigma_{\bm{\zeta}_2,m}^2+1\big)}\Big|
\end{equation}

If $\pi_1\vartheta_1 +\big(1-\pi_1)\vartheta_2=0$ ($\pi_1 = \frac{\vartheta_2}{\vartheta_2-\vartheta_1}$), then the coefficient of ERM based solution has same magnitude as NE based predictor, which is equal to zero. Therefore, except for when $\pi_1 = \frac{\vartheta_2}{\vartheta_2-\vartheta_1}$, ERM is strictly worse than NE based ensemble predictor.

In case ii), the the signs of $\frac{\vartheta_1}{\sigma_{\bm{\zeta}_1,m}^2+1}$ and $\frac{\vartheta_2}{\sigma_{\bm{\zeta}_2,m}^2+1}$  match. Let us consider the case when both are positive.  Without loss of generality assume that $0\leq \frac{\vartheta_1}{\sigma_{\bm{\zeta}_1,m}^2+1}< \frac{\vartheta_2}{\sigma_{\bm{\zeta}_2,m}^2+1}$.  From Theorem 2, we know that the magnitude of the NE based predictor is equal to $\frac{\vartheta_1}{\sigma_{\bm{\zeta}_1,m}^2+1}$ and the magnitude of ERM based predictor is 
\begin{equation}
\frac{\pi_1\vartheta_1 +\big(1-\pi_1)\vartheta_2}{\pi_1\big(\sigma_{\bm{\zeta}_1,m}^2+1\big) + (1-\pi_1)\big(\sigma_{\bm{\zeta}_2,m}^2+1\big)}
\end{equation}
We take a difference of the magnitudes of the two and get

\begin{equation}
\begin{split}
    & \frac{\pi_1\vartheta_1 +\big(1-\pi_1)\vartheta_2}{\pi_1\big(\sigma_{\bm{\zeta}_1,m}^2+1\big) + (1-\pi_1)\big(\sigma_{\bm{\zeta}_2,m}^2+1\big)} - \frac{\vartheta_1}{\sigma_{\bm{\zeta}_1,m}^2+1} \\ 
    & \frac{\big(1-\pi_1)\Big(\vartheta_2\big(\sigma_{\bm{\zeta}_1,m}^2+1\big) -\vartheta_1 \big(\sigma_{\bm{\zeta}_2,m}^2+1\big)\Big)}{\Big(\pi_1\big(\sigma_{\bm{\zeta}_1,m}^2+1\big) + (1-\pi_1)\big(\sigma_{\bm{\zeta}_2,m}^2+1\big)\Big)\big(\sigma_{\bm{\zeta}_1,m}^2+1\big)}
    \end{split}
    \label{diff_mag}
\end{equation}

Since $\frac{\vartheta_1}{\sigma_{\bm{\zeta}_1,m}^2+1}< \frac{\vartheta_2}{\sigma_{\bm{\zeta}_2,m}^2+1}$ it follows that if $\pi_1\in [0,1)$, then the above difference in equation \eqref{diff_mag} is positive. However, if $\pi_1=1$, then the difference is zero. Therefore, except for when $\pi_1 = 1$, ERM is strictly worse than NE based ensemble predictor.

Lastly, the analysis for the case when both coefficients are negative also follows on exactly the above lines. Without loss of generality consider the case, $ \frac{\vartheta_2}{\sigma_{\bm{\zeta}_2,m}^2+1} <\frac{\vartheta_1}{\sigma_{\bm{\zeta}_1,m}^2+1}\leq 0$.  In this case, NE based predictor will take the value $\frac{\vartheta_1}{\sigma_{\bm{\zeta}_1,m}^2+1}$ (follows from Theorem \ref{nash_char:thm}) and its magnitude is $-\frac{\vartheta_1}{\sigma_{\bm{\zeta}_1,m}^2+1}$.  The magnitude of ERM based predictor is 
\begin{equation}
-\frac{\pi_1\vartheta_1 +\big(1-\pi_1)\vartheta_2}{\pi_1\big(\sigma_{\bm{\zeta}_1,m}^2+1\big) + (1-\pi_1)\big(\sigma_{\bm{\zeta}_2,m}^2+1\big)}
\end{equation}
We take a difference of the magnitudes of the NE based predictor and the ERM based predictor to get 

\begin{equation}
\begin{split}
 \frac{\big(1-\pi_1)\Big(\vartheta_1 \big(\sigma_{\bm{\zeta}_2,m}^2+1\big)-\vartheta_2\big(\sigma_{\bm{\zeta}_1,m}^2+1\big) \Big)}{\Big(\pi_1\big(\sigma_{\bm{\zeta}_1,m}^2+1\big) + (1-\pi_1)\big(\sigma_{\bm{\zeta}_2,m}^2+1\big)\Big)\big(\sigma_{\bm{\zeta}_1,m}^2+1\big)}
    \end{split}
    \label{diff_mag1}
\end{equation}
 Since  $\frac{\vartheta_2}{\sigma_{\bm{\zeta}_2,m}^2+1} <\frac{\vartheta_1}{\sigma_{\bm{\zeta}_1,m}^2+1}$ it  it follows that if $\pi_1\in [0,1)$, then the above difference in equation \eqref{diff_mag} is positive. However, if $\pi_1=1$, then the difference is zero. Therefore, except for when $\pi_1 = 1$, ERM is strictly worse than NE based ensemble predictor. This completes the analysis for all the possible cases. For each component where the least squares optimal solution differ, we showed that ERM based predictor is worse than NE based predictor except over a set of measure zero over the probability $\pi_1$. This completes the proof. 
 
\end{proof}

\subsection{Theorem \ref{brd:thm1}}
\label{sec:BRD_proof}
We restate Theorem \ref{brd:thm1} below. 

 \begin{theorem}
\label{brd:thm1_append}
If Assumption \ref{ass1}, \ref{realize_ass}, \ref{dis_ass} hold, then the output of Algorithm \ref{algorithm1}, $\bar{\bm{w}}^{+}$, is
\begin{equation*}
    \Big(\bm{w}_1^{*} \odot \bm{1}_{|\bm{w}_2^{*}|\geq |\bm{w}_1^{*}|} + \bm{w}_2^{*} \odot \bm{1}_{|\bm{w}_1^{*}|>|\bm{w}_2^{*}|} \Big) \bm{1}_{ \bm{w}_1^{*}\odot \bm{w}_2^{*} \geq \bm{0}}
    \label{ne_exp1_append}
\end{equation*}
\end{theorem}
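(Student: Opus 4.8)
The plan is to reuse the coordinate decomposition established in the proof of Theorem~\ref{nash_char:thm}: under Assumption~\ref{dis_ass} the covariance $\bm{\Sigma}_e$ is block diagonal (equations \eqref{eqn:thm2_proof_0}, \eqref{eqn:thm2_proof_1}), so each environment's risk $R_e(\bm{w}_1,\bm{w}_2)$ splits into a block term over the coordinates $\mathcal{U}$ plus a sum of one-dimensional terms $\sigma_{ei}^2 R_{ei}(w_{1i},w_{2i})$ over $i\in\mathcal{V}$. Because $\mathcal{W}$ is an $\ell_\infty$ ball it is itself a product over coordinate blocks, so the constrained minimization defining each best-response step of Algorithm~\ref{algorithm1} separates, and the trajectory of $(\tilde{\bm{w}}_1,\tilde{\bm{w}}_2)$ restricted to $\mathcal{U}$ evolves independently of the trajectory on each component $i\in\mathcal{V}$. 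I would therefore analyze the $\mathcal{U}$-block and a single $i\in\mathcal{V}$ separately and then reassemble. In each sub-problem the objective is strictly convex in the updating player's variable (Hessian $2\bm{\Sigma}_{e+}\succ\bm 0$, resp.\ $2\sigma_{ei}^2>0$), so every best response is a \emph{unique} point and Algorithm~\ref{algorithm1} is a deterministic map; explicitly, the best response of environment $e$ is the unconstrained least-squares response projected onto the box, i.e.\ on $\mathcal{U}$ it is $\bm{w}_{e+}^*-\bm{w}_{-e,+}$ (feasible by Assumption~\ref{realize_ass} once the opponent is feasible), and on coordinate $i$ it is $w_{ei}^*-w_{-e,i}$ clipped to $[-w^{\mathsf{sup}},w^{\mathsf{sup}}]$.

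Next I would trace the dynamics from the initialization $(\bm{0},\bm{0})$. On $\mathcal{U}$: environment $1$ jumps to $\bm{w}_{1+}^*$, then environment $2$ best-responds to $\bm{w}_{2+}^*-\bm{w}_{1+}^*=\bm{0}$ (using $\bm{w}_{1+}^*=\bm{w}_{2+}^*$, the defining property of $\mathcal{U}$), after which neither player moves; the ensemble on $\mathcal{U}$ is $\bm{w}_{1+}^*$, which is exactly what \eqref{ne_exp1_append} returns there. On a component $i\in\mathcal{V}$, after swapping the roles of the two environments so that $|w_{1i}^*|\le|w_{2i}^*|$ and reflecting signs if necessary, I would show that as long as no clipping occurs the weaker player's weight drifts by the fixed gap $|w_{2i}^*-w_{1i}^*|$ each round while the stronger player's weight drifts the opposite way, so within at most $2w^{\mathsf{sup}}/\Delta_{\mathsf{min}}$ rounds (with $\Delta_{\mathsf{min}}=\min_{i\in\mathcal{V}}|w_{1i}^*-w_{2i}^*|>0$) the stronger player saturates at a boundary $\pm w^{\mathsf{sup}}$; once it is pinned there, the weaker player's best response makes the pair a fixed point. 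A short sign check then yields the two cases: opposite signs force the pair $(-w^{\mathsf{sup}},w^{\mathsf{sup}})$ and ensemble $0$, while equal signs force $(w_{1i}^*-w^{\mathsf{sup}},w^{\mathsf{sup}})$ (up to reflection) and ensemble $w_{1i}^*$ — again matching \eqref{ne_exp1_append} coordinatewise and the ``Dynamics of BRD'' description.

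Since every block reaches a fixed point in finitely many steps, Algorithm~\ref{algorithm1} halts (the stopping test $w_1^{\mathsf{diff}}=w_2^{\mathsf{diff}}=0$ is eventually met), and at that point $(\tilde{\bm{w}}_1,\tilde{\bm{w}}_2)$ is, by construction, a mutual best response, i.e.\ a pure-strategy Nash equilibrium of $\Gamma_c$; the assembled output $\bar{\bm{w}}^{+}=\tilde{\bm{w}}_1+\tilde{\bm{w}}_2$ is then the stated expression. Equivalently, once finite termination is established one may simply invoke Theorem~\ref{nash_char:thm}, which already pins the ensemble predictor of \emph{every} NE of $\Gamma_c$ to \eqref{ne_exp1_append}, bypassing the coordinatewise re-derivation.

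I expect the main obstacle to be the finite-termination argument for $i\in\mathcal{V}$: one must check that the linear drift cannot overshoot the far boundary before the near boundary is reached, that clipping at a boundary is genuinely absorbing (the opponent's best response keeps the saturated player pinned, rather than pushing it back into the interior), and that the degenerate subcases where $w_{ei}^*$ itself lies on $\partial\mathcal{W}$ do not spoil the uniqueness of a best-response step. Lemma~\ref{lemma2} together with Assumption~\ref{realize_ass} are precisely the tools for ruling out the problematic configurations.
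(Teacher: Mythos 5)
Your proposal is correct and follows essentially the same route as the paper's proof: exploit the block/coordinate decomposition of the risk from the proof of Theorem~\ref{nash_char:thm}, reduce each best-response step to the clipped least-squares response $\Pi_{\mathcal{W}}[\bm{w}_e^{*}-\bm{w}_{-e}]$, trace the monotone drift by $|w_{1i}^{*}-w_{2i}^{*}|$ per round until the stronger player saturates at the boundary, and finish with the same sign-based case analysis on each $i\in\mathcal{V}$. The only cosmetic difference is that the paper first introduces the projected dynamic explicitly and proves its equivalence to Algorithm~\ref{algorithm1} before tracing the trajectory, whereas you fold that equivalence into the separability observation; the substance is identical.
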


\begin{proof}

From Theorem \ref{nash_char:thm}, we know that if Assumptions \ref{ass1}, \ref{realize_ass}, \ref{dis_ass} hold, then the NE based ensemble predictor is given as 
\begin{equation}
\Big(\bm{w}_1^{*} \odot \bm{1}_{|\bm{w}_2^{*}|\geq |\bm{w}_1^{*}|} + \bm{w}_2^{*} \odot \bm{1}_{|\bm{w}_1^{*}|>|\bm{w}_2^{*}|} \Big) \bm{1}_{ \bm{w}_1^{*}\odot \bm{w}_2^{*} \geq \bm{0}}
\label{eqn:conv_1}
\end{equation}

In Algorithm \ref{algorithm1}, each environment plays the optimal action given the action of the others. Hence, by best responding to each other we hope the procedure would converge to NE based ensemble predictor in equation \eqref{eqn:conv_1}.

 We write a dynamic which seems simpler than the dynamic in Algorithm \ref{algorithm1}. However, we show that the two are equivalent.
We index the iteration by $t$. Define $\bm{w}_e^{t} $ as the predictor for environment $e$ at the end of iteration $t$. The ensemble predictor is given as $\bar{\bm{w}}^{t} = \bm{w}_1^{t} + \bm{w}_2^{t}$. 
Each component of $e$'s predictor is given as $\bm{w}_{e}^{t} = (w_{e1}^{t},\dots, w_{en}^{t})$ and for the other environment $q\in \{1,2\}\setminus \{e\}$ as $\bm{w}_{-e}^{t} = (w_{-e1}^{t},\dots, w_{-en}^{t})$.  Environment $e$ in its turn sees that the other environment is using a predictor $\bm{w}_{-e}^{t-1}$; environment $e$ updates the predictor by taking a step such that $\min_{\bm{w}_{e}^{t}\in \mathcal{W}}\big\|\bm{w}_{e}^{t}+\bm{w}_{-e}^{t-1}-\bm{w}_{e}^{*}\big\|^2$, i.e. the environment moves such that it gets closest to the optimal least squares solution.  We can simplify this minimization as
\begin{equation}
\begin{split}
& \min_{\bm{w}_{e}^{t}\in \mathcal{W}}\big\|\bm{w}_{e}^{t}+\bm{w}_{-e}^{t-1}-\bm{w}_{e}^{*}\big\|^2 \\ & = \sum_{i=1}^{n}\min_{|w_{ei}^{t}|\leq w^{\mathsf{sup}}} (w_{ei}^{t} + w_{-ei}^{t} - w_{ei}^{*})^2
\end{split}
\label{eqn:thm3_proof_1}
\end{equation}

For $t=0$, $\bm{w}_{1}^{t} = \bm{0}$ and $\bm{w}_{2}^{t} = \bm{0}$. The dynamic based on equation \eqref{eqn:thm3_proof_1} is written as follows. 

For $t\geq 1$
\begin{equation}
        \bm{w}_{1}^{t} = \begin{cases}
                        \bm{w}_{1}^{t-1} \;\;\;\;\;\;\;\;\;\;\;\;\;\;\;\;\;\;\;\;\; \;t \text{ is even } \\
                        \Pi_{\mathcal{W}}[\bm{w}_{1}^{*} - \bm{w}_{2}^{t-1}] \;\;\;\;\; t \text{ is odd }
                    \end{cases}
                    \label{env1_update}
\end{equation}
\begin{equation}
     \bm{w}_{2}^{t} = \begin{cases}
                    \bm{w}_{2}^{t-1}, \;\;\;\;\;\;\;\;\;\;\;\;\;\;\;\;\;\;\;\; \;\; \;\;t \text{ is odd } \\
                        \Pi_{\mathcal{W}}[\bm{w}_{2}^{*} - \bm{w}_{1}^{t-1}] \;\;\;\;\;\;\;\; t \text{ is even }
                    \end{cases}
                    \label{env2_update}
\end{equation}

$\;\;\;\;\;\;\;\;t = t+1$

In the above equations \eqref{env1_update}, \eqref{env2_update}, $\Pi_{\mathcal{W}}$ represents the projection on the set $\mathcal{W} = \{\bm{w}\;\text{s.t.}\|\bm{w}\|_{\infty} \leq w^{\mathsf{sup}}\}$.

In each iteration, only one of the environment updates the predictors. In the above dynamic, whenever an environment completes its turn to update the predictor, $t$ is incremented by one. Before showing the convergence of this dynamic, we first need to establish that this dynamic is equivalent to the one stated in Algorithm, where when its the turn of environment $e$ to update it minimizes the following  $\min_{\bm{w}_{e}^{t}\in \mathcal{W}}R_{e}(\bm{w}_{e}^{t}, \bm{w}_{-e}^{t-1})$. 

\textbf{Equivalence of dynamic in equations \eqref{env1_update} and \eqref{env2_update} to dynamic in Algorithm \ref{algorithm1}}

Recall from Section \ref{secn:clrg} and proof of Theorem \ref{nash_char:thm}, that we divide the feature components $\{1,\dots,n\}$ into two sets, the first $k$ components are in $\mathcal{U}$ and the next $n-k$ components are in $\mathcal{V}$. The two environments have the same least squares coefficients for components in $\mathcal{U}$ but have differing coefficients for points in $\mathcal{V}$. For each $e\in \{1,2\}$,  $\bm{w}_{e+}^{t}$ corresponds to the first $k$ coefficient at the end of iteration $t$ and $\bm{w}_{e-}^{t}$ corresponds to the next $n-k$ coefficients at the end of iteration $t$. 

Recall the decomposition that we stated in equation \eqref{eqn:thm2_proof_5}. To arrive at the equation \eqref{eqn:thm2_proof_5} we used Assumptions \ref{ass1} and \ref{dis_ass}. We continue to make these assumptions in this theorem as well. Therefore, we can continue to use the decomposition in equation \eqref{eqn:thm2_proof_5}. For  environment $2$ we can write
\begin{equation}
\begin{split}
    & \min_{\bm{w}_2^{t}\in \mathcal{W}} R_{2}(\bm{w}_1^{t-1},\bm{w}_2^{t}) = \\
    & = \min_{\bm{w}_{2+}^{t}\in \mathcal{W}_{+}}R_{2+}(\bm{w}_{1+}^{t-1},\bm{w}_{2+}^{t-1})  +\sum_{i}\sigma_{2i}^2\min_{|w_{2i}^{t}|\leq w^{\mathsf{sup}}}(w_{2i}^{t} + w_{1i}^{t} - w_{2i}^{*})^2
\end{split}
\label{eqn:thm3_proof_2}
\end{equation}

A decomposition identical to above equation \eqref{eqn:thm3_proof_2} also holds for environment $1$.  From equation \eqref{eqn:thm3_proof_1} and \eqref{eqn:thm3_proof_2}, we gather that for latter $n-k$ components, the update rule in equations \eqref{env1_update}, \eqref{env2_update} and the update rule in Algorithm \ref{algorithm1} are equivalent for both the environments.  We now show that both the rules are equivalent in the first $k$ components as well.

Consider iteration $t=1$. If the environment $1$ uses the update rule in equation \eqref{env1_update}, then the ensemble predictor is set to $\bm{w}_1^{*}$ (From Assumption $\bm{w}_1^{*} \in \mathcal{W}$ is in the interior, the projection will be the point itself).
Note that if environment $1$ used $\min_{\bm{w}_{1}^{t}\in \mathcal{W}}R_{1}(\bm{w}_{1}^{t}, \bm{w}_{2}^{t-1})$, then as well it will move the ensemble predictor to $\bm{w}_1^{*}$ (since $\bm{w}_{1}^{*}$ is the least squares optimal solution).  

Consider iteration $t=2$.  Suppose the environment $2$ uses  the update rule in equation \eqref{env2_update}. Define $\bm{\Delta}^{*}=\bm{w}_2^{*}-\bm{w}_1^{*}$ and the $i^{th}$ component of $\bm{\Delta}^{*}$ as $\Delta_i^{*}$. Given the environment $1$ is at $\bm{w}_1^{*}$ the rule dictates that environment $2$ should update the predictor to $\Pi_{\mathcal{W}}[\bm{\Delta}^{*}]$.  The first $k$ components of $\bm{\Delta}^{*}$ would be zero as the two environments agree in these coefficients. Therefore, environment $2$ will not move its predictor for the first $k$ components and continue to be at $\bm{0}$. After this the two environments do not need to update the first $k$ components as they have already converged. Suppose the environment $2$ uses the update rule from Algorithm \ref{algorithm1}.  Consider the first $k$ components in which  both the environments agree. Since environment $1$ already is using $\bm{w}_{1+}^{*}$, which is optimal for environment $2$ as well, environment $2$ will not move its predictor for first $k$ components and continue to be at $\bm{0}$. After this the two environments do not need to update the first $k$ components as they have already converged.

Thus so far we have established that both dynamics in equations \eqref{env1_update}, \eqref{env2_update} and the update rule in Algorithm \ref{algorithm1} are equivalent. We have also shown the convergence in the first $k$ components. We now focus on establishing the convergence for the next $n-k$ components that make up the set $\mathcal{V}$.

\textbf{Convergence of the dynamics in equations \eqref{env1_update} and \eqref{env2_update}.}
In the previous section, we showed that dynamic in equations \eqref{env1_update} and \eqref{env2_update} are equivalent to the dynamic in Algorithm \ref{algorithm1}. While we had shown the convergence of the first $k$ components in the set $\mathcal{U}$, we will repeat the analysis for ease of exposition. In this section, we begin by showing how the dynamic in equations \eqref{env1_update} and \eqref{env2_update} plays out. Just for the sake of clarity of exposition, in the  dynamic we show below we assume that the predictors of the environment continue to be in the interior of the set $\mathcal{W}$. 

\begin{enumerate}
    \item End of $t=1$, $\bar{\bm{w}}^{t}=\bm{w}_1^{*}$, $e=1$ plays $\bm{w}_1^{t}=\bm{w}_1^{*}$, $e=2$ plays  $\bm{w}_2^{t}=\bm{0}$. 
    \item End of $t=2$, $\bar{\bm{w}}^{t}=\bm{w}_2^{*}$, $e=1$ plays $\bm{w}_1^{t}=\bm{w}_1^{*}$, $e=2$ plays  $\bm{w}_2^{t}=\bm{\Delta}^{*}$. 
    \item End of $t=3$, $\bar{\bm{w}}^{t}=\bm{w}_1^{*}$, $e=1$ plays $\bm{w}_1^{t}=\bm{w}_1^{*}-\bm{\Delta}^{*}$, $e=2$ plays  $\bm{w}_2^{t}=\bm{\Delta}^{*}$. 
      \item End of $t=4$, $\bar{\bm{w}}^{t}=\bm{w}_2^{*}$, $e=1$ plays $\bm{w}_1^{t}=\bm{w}_1^{*}-\bm{\Delta}^{*}$, $e=2$ plays  $\bm{w}_2^{t}=2\bm{\Delta}^{*}$. 
      \item End of $t=5$, $\bar{\bm{w}}^{t}=\bm{w}_1^{*}$, $e=1$ plays $\bm{w}_1^{t}=\bm{w}_1^{*}-2\bm{\Delta}^{*}$, $e=2$ plays  $\bm{w}_2^{t}=2\bm{\Delta}^{*}$. 
    \item End of $t=6$, $\bar{\bm{w}}^{t}=\bm{w}_2^{*}$, $e=1$ plays $\bm{w}_1^{t}=\bm{w}_1^{*}-2\bm{\Delta}^{*}$, $e=2$ plays  $\bm{w}_2^{t}=3\bm{\Delta}^{*}$. 
\end{enumerate}

In the dynamic displayed above, we assumed that the predictor $\bm{w}_{1}^{t}$ and $\bm{w}_2^{t}$ were in the interior just to illustrate that the two sequences $\bm{w}_{1}^{t}$ and $\bm{w}_2^{t}$ are monotonic. Observe that if a certain component of $\bm{\Delta}^{*}$ say $\Delta_i^{*}$ is non-zero, the two sequences are strictly monotonic in that component. The sequences cannot grow unbounded and at least one of them will first hit the boundary at $w^{\mathsf{sup}}$ or $-w^{\mathsf{sup}}$. Recall from the last section, where we already showed that for the first $k$ components of $\bm{\Delta}^{*}$ associated with these $\mathcal{U}$ are zero (from equation \eqref{eqn:thm2_proof_6_n}). Hence, for the first $k$ components the the dynamic $\bm{w}_{1}^{t}$ and $\bm{w}_2^{t}$ converges at the end of $t=1$.  Therefore, we now only need to focus on the remaining $n-k$ components comprising the set $\mathcal{V}$. Since the update rules in equation \eqref{env1_update} and \eqref{env2_update} are separable for the different components, we only focus on one of the components say $i$.

 We divide our analysis based on if $w_{1i}^{*}$ and $w_{2i}^{*}$ have the same sign or not. Suppose $w_{1i}^{*}$ and $w_{2i}^{*}$ have the same sign. Let us consider the case when both are positive (negative case follows from symmetry as the dynamic starts at zero). 
 \begin{itemize}
 \item Suppose $0 \leq w_{1i}^{*}< w_{2i}^{*}$. If  $0 \leq w_{1i}^{*}< w_{2i}^{*}$ is plugged into the equation \eqref{ne_exp}, we obtain $w_{1i}^{*}$. Our objective is to show convergence to $w_{1i}^{*}$.   In this case, $\Delta_{i}^{*}$, which corresponds to the $i^{th}$ component of $\bm{\Delta}^{*}$, is greater than zero. Observe from the dynamic that environment $2$ will first hit the boundary in this case and since $\Delta_{i}^{*}>0$, it will hit the positive end, i.e., $w^{\mathsf{sup}}$. The best response of environment $1$ is to play $\Pi_{\mathcal{W}}[w_{1i}^{*}-w^{\mathsf{sup}}]$. Since $w_{1i}^{*}>0$, we get that environment $1$ uses the predictor $w_{1i}^{*}-w^{\mathsf{sup}}$, the ensemble predictor takes the value $w_{1i}^{*}$. Environment $2$ in the next step continues to play $\Pi_{\mathcal{W}}[w_{2i}^{*}-w_{1i}^{*}+ w^{\mathsf{sup}}] = w^{\mathsf{sup}}$ and environment $1$ continues to play $w_{1i}^{*}-w^{\mathsf{sup}}$. Hence, the predictors stop updating. Thus in this case at convergence, the ensemble classifier achieves the value that we wanted to prove $w_{1i}^{*}$. Also, both environments best respond to each other, which implies that the state is a NE. 
 \item Suppose $0 \leq w_{2i}^{*}< w_{1i}^{*}$. If  $0 \leq w_{2i}^{*}< w_{1i}^{*}$ is plugged into the equation \eqref{ne_exp}, we obtain $w_{1i}^{*}$. Our objective is to show convergence to $w_{2i}^{*}$.  Observe from the dynamic that environment $1$ will first hit the boundary in this case and since $\Delta_{i}^{*}<0$, it will hit the positive end, i.e., $w^{\mathsf{sup}}$. The best response of environment $2$ is to play $\Pi_{\mathcal{W}}[w_{2i}^{*}-w^{\mathsf{sup}}]$. Since $w_{2i}^{*}>0$, we get that environment $2$ uses the  predictor $w_{2i}^{*}-w^{\mathsf{sup}}$ and the ensemble predictor takes the value $w_{2i}^{*}$. Thus just like the case described above both environments stop updating. Hence, the state $w^{\mathsf{sup}}, w_{2i}^{*}-w^{\mathsf{sup}}$ is a NE and the final ensemble predictor is at $w_{2i}^{*}$, which is what we wanted to prove. 
 \end{itemize}
 
 Suppose $w_{1i}^{*}$ and $w_{2i}^{*}$ have the opposite sign.
 
 \begin{itemize}
     \item Consider the case when $w_{1i}^{*}< 0<w_{2i}^{*}$. If  $w_{1i}^{*}< 0 < w_{2i}^{*}$ is plugged into the equation \eqref{ne_exp}, we obtain $0$. In this setting, environment $2$ moves towards the $w^{\mathsf{sup}}$ and environment $1$ moves towards $-w^{\mathsf{sup}}$. Suppose environment $2$ hits the boundary. In the next step, the best response from environment $1$ is computed as $\Pi_{\mathcal{W}}[w_{1i}^{*}-w^{\mathsf{sup}}] = -w^{\mathsf{sup}}$. Since both environments best respond to each other the state $(-w^{\mathsf{sup}}, w^{\mathsf{sup}})$ is NE and the final ensemble predictor is at $0$, which is what we wanted to prove. Hence, both the environment continue to stay at the boundary. This is also the case if environment $1$ hits the boundary first. The same analysis applies to the case when $w_{2i}^{*}< 0<w_{1i}^{*}$. 
 \end{itemize}

We focused on one of the components $i$ and the above analysis applies to all the components $j\in \{k+1,..\dots, n\}$. This completes the proof.  Note that the entire analysis is symmetric and it does not matter which environment moves first, the analysis also extends to the case when initialization is not zero.  
\end{proof}

\subsection{Proposition 5}
We restate Proposition \ref{prop5}
\begin{proposition}
\label{prop5_append}
If Assumption \ref{linear_model_ass} holds with $\bm{\alpha}_e=\bm{0}$ and $\bm{\Theta}_e$ an  orthogonal matrix for each $e\in \{1,2\}$, and Assumptions \ref{ass2}, \ref{sp_var}, \ref{realize_ass} hold, then the output of Algorithm \ref{algorithm1}, $\bar{\bm{w}}^{+}$ obeys  $\|\bar{\bm{w}}^{+} - (\bm{\gamma},0)\| < \|\bm{w}^{\mathsf{ERM}} - (\bm{\gamma},0)\|$ for all $\bm{w}^{\mathsf{ERM}} \in \mathcal{S}^{\mathsf{ERM}}$ except over a set of measure zero (see footnote 2).  Moreover, if all the components of vectors $\bm{\Theta}_1\bm{\eta}_1$ and $\bm{\Theta}_2\bm{\eta}_2$ have opposite signs, then $\bar{\bm{w}}^{+}  = (\bm{\gamma},0)$.
\end{proposition}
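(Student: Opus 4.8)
The plan is to treat this proposition as an immediate consequence of Theorem~\ref{brd:thm1} (which identifies the output of Algorithm~\ref{algorithm1} with the NE-based ensemble predictor) together with Proposition~\ref{prop4} (which does the ERM comparison and the opposite-signs analysis for exactly that predictor). First I would verify that the hypotheses assumed here --- Assumption~\ref{linear_model_ass} with $\bm{\alpha}_e=\bm{0}$ and $\bm{\Theta}_e$ orthogonal, plus Assumptions~\ref{ass2}, \ref{sp_var}, \ref{realize_ass} --- imply the hypotheses required by Theorem~\ref{brd:thm1}, namely Assumptions~\ref{ass1}, \ref{realize_ass}, \ref{dis_ass}. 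This is the same reduction already performed at the beginning of the proof of Proposition~\ref{prop4}: zero-mean $\bm{X}_e^1,\bm{H}_e,\varepsilon_e,\bm{\zeta}_e$ forces $\mathbb{E}_e[\bm{X}_e]=\bm{0}$; the block form $\bm{\Sigma}_e=\mathsf{diag}[\bm{\Sigma}_{e1},\bm{\Theta}_e\bm{\Theta}_e^{\mathsf{T}}+\mathsf{diag}[\bm{\sigma}_{\bm{\zeta}_e}^{2}]]$ collapses under orthogonality of $\bm{\Theta}_e$ to $\mathsf{diag}[\bm{\Sigma}_{e1},\mathsf{diag}[\bm{\sigma}_{\bm{\zeta}_e}^{2}+\bm{1}_q]]$, which is positive definite (so Assumption~\ref{ass1} holds) and whose second block is diagonal, so every feature in $\mathcal{V}\subseteq\{p+1,\dots,p+q\}$ is uncorrelated with all others (so Assumption~\ref{dis_ass} holds); Assumption~\ref{realize_ass} carries over verbatim.

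Next I would invoke Theorem~\ref{brd:thm1}: under Assumptions~\ref{ass1}, \ref{realize_ass}, \ref{dis_ass}, the output $\bar{\bm{w}}^{+}$ of Algorithm~\ref{algorithm1} equals
\[
\Big(\bm{w}_1^{*} \odot \bm{1}_{|\bm{w}_2^{*}|\geq |\bm{w}_1^{*}|} + \bm{w}_2^{*} \odot \bm{1}_{|\bm{w}_1^{*}|>|\bm{w}_2^{*}|} \Big) \bm{1}_{ \bm{w}_1^{*}\odot \bm{w}_2^{*} \geq \bm{0}},
\]
which is exactly the NE-based ensemble predictor $\bar{\bm{w}}^{\dagger}$ of Theorem~\ref{nash_char:thm}. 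Hence $\bar{\bm{w}}^{+}=\bar{\bm{w}}^{\dagger}$ as vectors. Then I would quote Proposition~\ref{prop4}, whose hypotheses coincide with the present ones: it gives $\|\bar{\bm{w}}^{\dagger}-(\bm{\gamma},0)\|<\|\bm{w}^{\mathsf{ERM}}-(\bm{\gamma},0)\|$ for every $\bm{w}^{\mathsf{ERM}}\in\mathcal{S}^{\mathsf{ERM}}$ outside the measure-zero exceptional set over $\pi_1$, and $\bar{\bm{w}}^{\dagger}=(\bm{\gamma},0)$ whenever all components of $\bm{\Theta}_1\bm{\eta}_1$ and $\bm{\Theta}_2\bm{\eta}_2$ have opposite signs. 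Substituting $\bar{\bm{w}}^{+}$ for $\bar{\bm{w}}^{\dagger}$ yields both assertions.

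There is essentially no mathematical obstacle here --- the proposition is a bookkeeping corollary. The only point that needs a moment's care is that the assumption-compatibility argument is not circular: the implication ``hypotheses of Proposition~\ref{prop4} $\Rightarrow$ Assumptions~\ref{ass1}, \ref{dis_ass}'' was established inside the proof of Proposition~\ref{prop4} using only the SEM structure, orthogonality of $\bm{\Theta}_e$, and Assumption~\ref{ass2}, with no reference to Algorithm~\ref{algorithm1} or to any of the game-theoretic conclusions, so it is legitimate to reuse it as an input to Theorem~\ref{brd:thm1} here. I would state this explicitly and then conclude.
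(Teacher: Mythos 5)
Your proposal is correct and follows essentially the same route as the paper's own proof: reuse the assumption-implication argument from the proof of Proposition~\ref{prop4} to verify the hypotheses of Theorem~\ref{brd:thm1}, identify $\bar{\bm{w}}^{+}$ with the NE-based ensemble predictor of equation~\eqref{ne_exp}, and then import both conclusions of Proposition~\ref{prop4}. Your explicit remark on non-circularity is a small clarity bonus the paper leaves implicit, but the argument is otherwise identical.
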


\begin{proof}
In the above proposition, we make the same set of assumptions as in Proposition \ref{prop4}.
In the proof of Proposition \ref{prop4}, we showed that the set of Assumptions in Proposition \ref{prop4} imply that the Assumptions  \ref{ass1}, \ref{realize_ass}, \ref{dis_ass} hold. Since Assumptions \ref{ass1}, \ref{realize_ass}, \ref{dis_ass} hold, from Theorem \ref{brd:thm1} it follows that the output of Algorithm \ref{algorithm1} is equal to the NE based ensemble predictor  given by equation \eqref{ne_exp}. In Proposition \ref{prop4}, we have already shown that this NE based ensemble predictor (equation \eqref{ne_exp}), which is the output of Algorithm \ref{algorithm1},  is closer to $(\bm{\gamma},\bm{0})$ than the solution of ERM (except over a set of measure zero defined in the proof of Proposition \ref{prop4}). We had also shown that the NE based ensemble predictor is equal $(\bm{\gamma},\bm{0})$ when the signs of $\bm{\Theta}_1\bm{\eta}_1$ and $\bm{\Theta}_2\bm{\eta}_2$ are opposite.

This completes the proof.
\end{proof}

\subsection{Other best response dynamics} 
\label{sec:o_BRD}

In this section, we describe a simple signed gradient descent based dynamic. The aim is to show that for simple variations of the dynamic proposed in Algorithm \ref{algorithm1} we  continue to have convergence guarantees.

We define a signed gradient descent based version of the dynamic in equation \eqref{env1_update} and
\eqref{env2_update} with step length $\beta$.

For $t=0$, $\bm{w}_{1}^{t} = \bm{0}$ and $\bm{w}_{2}^{t} = \bm{0}$.

For $t\geq 1$
\begin{equation}
        \bm{w}_{1}^{t} =
                        \Pi_{\mathcal{W}} \Big[\bm{w}_{1}^{t-1} + \beta \mathsf{sgn}[\bm{w}_{1}^{*} - \bm{w}_{2}^{t-1}]\Big] \;\;\;\;\; 
                    \label{env1_update_n}
\end{equation}
\begin{equation}
        \bm{w}_{2}^{t} =
                        \Pi_{\mathcal{W}} \Big[\bm{w}_{2}^{t-1} + \beta \mathsf{sgn}[\bm{w}_{2}^{*} - \bm{w}_{1}^{t-1}]\Big] \;\;\;\;\; 
                    \label{env2_update_n}
\end{equation}

$\;\;\;\;\;\;\;\;\bar{\bm{w}}^{t} = \bm{w}_{1}^{t}+ \bm{w}_{2}^{t}$

$\;\;\;\;\;\;\;\;t=t+1$

In the above $\mathsf{sgn}$ is the component-wise sign function, which takes a value $1$ when the input is positive (including zero) and $-1$ if the input is negative. $\bar{\bm{w}}^{t}$ is the ensemble predictor at the end of iteration $t$, $\bm{w}_e^{t} $ is the predictor for environment $e$ at the end of iteration $t$.
Suppose we want to get within $\epsilon$ (per component) distance of the NE based predictor. 
 Divide the components into two sets $\mathcal{E}$ and $\mathcal{F}$ defined as follows.  $i\in \mathcal{E}$ if and only if the least squares solution are within $\epsilon$ distance, i.e. $|w_{1i}^{*}-w_{2i}^{*}|\leq \epsilon$ and $i \in \mathcal{F}$ if and only if the least squares solution are separated by at least epsilon i.e. $|w_{1i}^{*}-w_{2i}^{*}|> \epsilon$. 
 
 Let $\beta <\epsilon$.  Let us analyze the dynamic for a component $i \in 
\mathcal{F}$. 
We divide the analysis into two cases -- $w_{1i}^{*}$ and $w_{2i}^{*}$ have the same sign, and $w_{1i}^{*}$ and $w_{2i}^{*}$ have opposite signs.  Let us start with same sign case with both $w_{1i}^{*}$ and $w_{2i}^{*}$ positive (negative sign case follows from symmetry). Consider the case $0<w_{1i}^{*}< w_{2i}^{*}$.  In this case from the expression of NE in equation \eqref{ne_exp}, we would hope the dynamic can eventually achieve an ensemble predictor that stays within $\epsilon$ distance of $w_{1i}^{*}$. 
Since the dynamic starts at $0$ and $\mathsf{sgn}[w_{1i}^{*}]=1$ and $\mathsf{sgn}[w_{2i}^{*}]=1$, the ensemble predictor $\bar{w}_i^{t}$ after some iterations will enter the interval  $[w_{1i}^{*}, w_{2i}^{*}]$. Once the ensemble predictor enters the interval, the two predictors $w_{1i}^{t}$ and $w_{2i}^{t}$ will push the predictor in opposite directions and since the step length is the same the ensemble predictor will not move. This would continue until environment $2$ hits the positive boundary $w^{\mathsf{sup}}$. Once the environment $2$ hits the boundary it stops updating and environment $1$ pushes the ensemble predictor towards $w_{1i}^{*}$. Once the predictor is within $ \beta$ distance from $w_{1i}^{*}$ it continues to oscillate around $w_{1i}^{*}$. Consider the case $0<w_{2i}^{*}< w_{1i}^{*}$, the same analysis follows and dynamic eventually oscillates around $w_{2i}^{*}$.
Next, consider the case when $w_{1i}^{*}$ and $w_{2i}^{*}$ have opposite signs. In this case from the expression of NE in \eqref{ne_exp}, we would hope the dynamic can eventually achieve an ensemble predictor that stays within $\epsilon$ distance of $0$.  In this case, since the predictors start at zero, both environments will push in opposite directions. Eventually, both environments hit opposite ends of the boundary and stay there. This results in ensemble predictor coefficient of zero.

Now let us analyze the game for components in $\mathcal{E}$. We again carry out analysis based on the whether the signs agree or not. Consider the case $0<w_{1i}^{*}< w_{2i}^{*}$.  In this case from the expression of NE in \eqref{ne_exp}, we would hope the dynamic can eventually achieve an ensemble predictor that stays within $\epsilon$ distance of $w_{1i}^{*}$.  The dynamic starts at $0$ and $\mathsf{sgn}[w_{1i}^{*}]=1$ and $\mathsf{sgn}[w_{2i}^{*}]=1$, the ensemble predictor $\bar{w}_i^{t}$ may or may not enter the interval $[w_{1i}^{*}, w_{2i}^{*}]$. If it enters, then the analysis is identical to the previous case when $i\in \mathcal{F}$. If the ensemble predictor does not enter the interval, then it has to overshoot it and move to the right of the interval, which implies in the next step it will be pulled back to the left of the interval. This sets the ensemble predictor in an oscillation around $w_{1i}^{*}$. The analysis for the case when  $w_{1i}^{*}$ and $w_{2i}^{*}$ have opposite signs is the same as the previous case.

\subsection{Convergence when Assumption \ref{dis_ass} does not hold}

In this section, we discuss can we still learn NE if the Assumption \ref{dis_ass} is relaxed? We would rely on the results in \cite{zhou2017mirror} for our discussion here. In \cite{zhou2017mirror}, the authors introduced a notion called variational stability. Consider the class of concave games. It was shown that if the set of Nash equilibria satisfy variational stability, then a mirror descent based learning dynamic (described in \cite{zhou2017mirror}) converges to the NE.  Next, we analyze the variational stability for C-LRG.

Define the gradient of utility of the environment $e$  
$\bm{v}_{e}(\bm{w}) = -\nabla_{\bm{w}_{e}}R_{e}(\bm{w}_e,\bm{w}_{-e})$, where recall that $\bm{w}_e$ is action of environment $e$ and $\bm{w}_{-e}$ is the action of the other environment, $R_e$ is the risk, and $\bm{w}= (\bm{w}_{e}, \bm{w}_{-e})$.  
Let us recall a characterization of NE in terms of the gradients (\cite{zhou2017mirror}). 
Suppose $\bm{w}^{\dagger} = (\bm{w}_1^{\dagger}, \bm{w}_2^{\dagger})$ is a NE of C-LRG. For every $\bm{w}_e\in \mathcal{W}$, we have
\begin{equation}
    \bm{v}_{e}(\bm{w}^{\dagger})^{\mathsf{T}}(\bm{w}_e-\bm{w}_e^{\dagger})  \leq 0
    \label{ne_prop1}
\end{equation}

Next, we show how to relate the gradient at NE, $\bm{v}_{e}(\bm{w}^{\dagger})$,  to the gradient at any other point $\bm{v}_{e}(\bm{w})$
\begin{equation}
\begin{split}
  & \bm{v}_{e}(\bm{w}) = \mathbb{E}_{e}\Big[\big(Y_{e}-\bm{w}_1^{\mathsf{T}}\bm{X}_e -\bm{w}_2^{\mathsf{T}}\bm{X}_{e}\big)\bm{X}_e\Big] \\ 
 & = \mathbb{E}_{e}\Big[\Big(Y_{e}-(\bm{w}_1-\bm{w}_1^{\dagger} + \bm{w}_1^{\dagger})^{\mathsf{T}}\bm{X}_e -(\bm{w}_2-\bm{w}_2^{\dagger} + \bm{w}_2^{\dagger})^{\mathsf{T}}\bm{X}_{e}\Big)\bm{X}_e\Big] \\
 & \bm{v}_e(\bm{w}) = \bm{v}_e(\bm{w}^{\dagger}) - \bm{\Sigma}_e(\bm{w}_1-\bm{w}_1^{\dagger} + \bm{w}_2 - \bm{w}_2^{\dagger})\\   & \bm{v}_e(\bm{w}) = \bm{v}_e(\bm{w}^{\dagger}) - \bm{\Sigma}_e(\bar{\bm{w}} -\bar{\bm{w}}^{\dagger})  
\end{split}
\end{equation}
In the above $\bar{\bm{w}} = \bm{w}_1 + \bm{w}_2$, $\bar{\bm{w}}^{\dagger} = \bm{w}_1^{\dagger} + \bm{w}_2^{\dagger}$. 

For establishing variational stability, we  need to show that  for each $\bm{w} \in \mathcal{W}\times \mathcal{W}$ and for each NE $\bm{w}^{\dagger}$ the following inequality, i.e., $\sum_{e} \bm{v}_{e}(\bm{w})^{\mathsf{T}}(\bm{w}_e-\bm{w}_e^{\dagger})\leq 0$, holds.

\begin{equation}
\begin{split}
 &   \sum_{e\in \{1,2\}} \bm{v}_{e}(\bm{w})^{\mathsf{T}}(\bm{w}_e-\bm{w}_e^{\dagger}) =    \sum_{e\in \{1,2\}} \bm{v}_{e}(\bm{w}^{\dagger})^{\mathsf{T}}(\bm{w}_e-\bm{w}_e^{\dagger}) - \sum_{e\in \{1,2\}}(\bar{\bm{w}} -\bar{\bm{w}}^{\dagger})^{\mathsf{T}} \bm{\Sigma}_e(\bm{w}_e-\bm{w}_e^{\dagger}) \\ 
 & =\sum_{e\in \{1,2\}} \bm{v}_{e}(\bm{w}^{\dagger})^{\mathsf{T}}(\bm{w}_e-\bm{w}_e^{\dagger}) - (\bar{\bm{w}} -\bar{\bm{w}}^{\dagger})^{\mathsf{T}} \bm{\Sigma}_1(\bar{\bm{w}}-\bar{\bm{w}}^{\dagger})   -(\bar{\bm{w}}-\bar{\bm{w}}^{\dagger})^{\mathsf{T}} (\bm{\Sigma}_2-\bm{\Sigma}_1)(\bm{w}_2-\bm{w}_2^{\dagger})
\end{split}
\label{vs_simp1}
\end{equation}

 We begin by analyzing the case when $\bm{\Sigma}_1 = \bm{\Sigma}_2$. Substitute $\bm{\Sigma}_1 = \bm{\Sigma}_2$ in equation \eqref{vs_simp1}, 

\begin{equation*}
\begin{split}
  & \sum_{e\in \{1,2\}} \bm{v}_{e}(\bm{w})^{\mathsf{T}}(\bm{w}_e-\bm{w}_e^{\dagger})= \\ &
  \sum_{e\in \{1,2\}} \bm{v}_{e}(\bm{w}^{\dagger})^{\mathsf{T}}(\bm{w}_e-\bm{w}_e^{\dagger}) - (\bar{\bm{w}} -\bar{\bm{w}}^{\dagger})^{\mathsf{T}} \bm{\Sigma}_1(\bar{\bm{w}}-\bar{\bm{w}}^{\dagger}) 
  \end{split}
\end{equation*}

If we use the condition in equation \eqref{ne_prop1} along with the fact that $\bm{\Sigma}_1$ is positive definite, then we get that 
$\sum_{e\in \{1,2\}} \bm{v}_{e}(\bm{w})^{\mathsf{T}}(\bm{w}_e-\bm{w}_e^{\dagger}) \leq 0$, which implies that the set of NE of C-LRG is variationally stable. 
We now give an example of when  $\bm{\Sigma}_1 = \bm{\Sigma}_2$ is satisfied. Consider the SEM in Assumption  \ref{linear_model_ass}. If between the two environments the only parameters that vary are $\bm{\eta}_e$ and the distribution of $\varepsilon_e$, and rest all other parameters in the model are the same, then $\bm{\Sigma}_1 = \bm{\Sigma}_2$ is satisfied.

We now discuss what happens if we relax the assumption, $\bm{\Sigma}_1 = \bm{\Sigma}_2$, made above. 

Consider the eigenvalue decomposition of $\bm{\Sigma}_2-\bm{\Sigma}_1 = \bm{\Omega}\bm{\Lambda}\bm{\Omega}^{\mathsf{T}}$, where since $\bm{\Sigma}_2-\bm{\Sigma}_1$ is a symmetric matrix we know that $\bm{\Omega}$ can be chosen as an orthogonal matrix and $\bm{\Lambda}$ is a diagonal matrix of eigenvalues. Define the smallest eigenvalue of $\bm{\Sigma}_2-\bm{\Sigma}_1$ as $\lambda_{\mathsf{min}}(\bm{\Sigma}_2 - \bm{\Sigma}_1)$. 

Define a transformation of vector $\bm{w}$ under $\bm{\Omega}$ as  $\tilde{\bm{w}} = \bm{\Omega} \bm{w}$. Since $\bm{\Omega}$ is orthogonal, $\|\tilde{\bm{w}}\| = \|\bm{w}\|$. 
We now use these relationships to simplify

\begin{equation}
\begin{split}
    &  (\bm{w}_1-\bm{w}_1^{\dagger})^{\mathsf{T}} (\bm{\Sigma}_2-\bm{\Sigma}_1)(\bm{w}_2-\bm{w}_2^{\dagger}) =(\bm{w}_1-\bm{w}_1^{\dagger})^{\mathsf{T}}\bm{\Omega}\bm{\Lambda}\bm{\Omega}^{\mathsf{T}} (\bm{w}_2-\bm{w}_2^{\dagger})  = (\tilde{\bm{w}}_1-\tilde{\bm{w}}_1^{\dagger})\bm{\Lambda}(\tilde{\bm{w}}_2-\tilde{\bm{w}}_2^{\dagger}) \\
   &   \geq \lambda_{\mathsf{min}}(\bm{\Sigma}_2-\bm{\Sigma}_1)(\tilde{\bm{w}}_1-\tilde{\bm{w}}_1^{\dagger})^{\mathsf{T}}(\tilde{\bm{w}}_2-\tilde{\bm{w}}_2^{\dagger}) \\ 
   & \geq 
   -|\lambda_{\mathsf{min}}(\bm{\Sigma}_2-\bm{\Sigma}_1)|\|(\tilde{\bm{w}}_1-\tilde{\bm{w}}_1^{\dagger})\|
   \|(\tilde{\bm{w}}_2-\tilde{\bm{w}}_2^{\dagger})\| \\
    & = -|\lambda_{\mathsf{min}}(\bm{\Sigma}_2-\bm{\Sigma}_1)|\|(\bm{w}_1-\bm{w}_1^{\dagger})\|\|(\bm{w}_2-\bm{w}_2^{\dagger})\|
\end{split}
\label{vs_simp2}
\end{equation}
In the last two inequalities in equation \eqref{vs_simp2}, we used Cauchy-Schwarz inequality and the fact that norms do not change under orthogonal transformations $\|\tilde{\bm{w}}\| = \|\bm{w}\|$. Now let us bound the term $(\bar{\bm{w}}-\bar{\bm{w}}^{\dagger})^{\mathsf{T}} (\bm{\Sigma}_2-\bm{\Sigma}_1)(\bm{w}_2-\bm{w}_2^{\dagger})$ in equation \eqref{vs_simp1}.

\begin{equation}
\begin{split}
   & -(\bar{\bm{w}}-\bar{\bm{w}}^{\dagger})^{\mathsf{T}} (\bm{\Sigma}_2-\bm{\Sigma}_1)(\bm{w}_2-\bm{w}_2^{\dagger})  = \\& -(\bm{w}_2-\bm{w}_2^{\dagger})^{\mathsf{T}} (\bm{\Sigma}_2-\bm{\Sigma}_1)(\bm{w}_2-\bm{w}_2^{\dagger}) + -(\bm{w}_1-\bm{w}_1^{\dagger})^{\mathsf{T}} (\bm{\Sigma}_2-\bm{\Sigma}_1)(\bm{w}_2-\bm{w}_2^{\dagger}) \\ 
    &  \leq -\lambda_{\mathsf{min}}(\bm{\Sigma}_2-\bm{\Sigma}_1)\|(\bm{w}_2-\bm{w}_2^{\dagger})\|^2 + |\lambda_{\mathsf{min}}(\bm{\Sigma}_2-\bm{\Sigma}_1)|\|(\bm{w}_1-\bm{w}_1^{\dagger})\|\|(\bm{w}_2-\bm{w}_2^{\dagger})\| 
\end{split}
\label{vs_simp3}
\end{equation}
In the last inequality in equation \eqref{vs_simp3}, we used equation \eqref{vs_simp2}. If $\bm{\Sigma}_2-\bm{\Sigma}_1$ is positive semi-definite with  lowest eigenvalue of zero, then the term in equation \eqref{vs_simp3} is bounded above by zero. If we use this observation in equation \eqref{vs_simp1}, the the condition for variational stability is satisfied. Note that the entire analysis is symmetric and we can state the same result for the matrix $\bm{\Sigma}_1-\bm{\Sigma}_2$. Therefore, if one of the matrices $\bm{\Sigma}_2-\bm{\Sigma}_1$ or $\bm{\Sigma}_1-\bm{\Sigma}_2$, is positive semi-definite with  lowest eigenvalue of zero, then we get variational stability for the NE. As a result,  we can use the convergence results in \cite{zhou2017mirror} to guarantee that NE will be learned.

\subsection{Extensions}

\subsubsection{Multiple Environments} 
In the main body of the paper, we discussed the results when the data is gathered from two environments. What happens if the data were gathered from multiple (more than two) environments? 

First let us start with the game U-LRG from Section \ref{senc:ulrg}. The first result in Proposition \ref{prop1} states that when least squares optimal solution are not equal, then there is no NE of U-LRG. When we move to multiple environments using same proof techniques it can be shown that if there is any two environments, which do not agree on the least squares optimal solution, then no NE will exist. For a NE to exist all environments will have to have the same least squares optimal solution. 

Next, we consider the game C-LRG from Section \ref{secn:clrg}. With multiple (more than two environments), we are guaranteed that NE will exist. How does the Theorem \ref{nash_char:thm} change for multiple environments? We  extend the Assumption \ref{dis_ass} to state that any feature component that does not have the same least squares coefficient across all the environments is uncorrelated with the rest of the features. Suppose the environments are indexed from $\{1,\dots, r\}$. 
For this discussion, let us focus on one of the feature components say $i$. Without loss of generality, assume that these environments are ordered in an increasing order w.r.t the optimal least squares coefficient, i.e. if $e, f\in \{1,\dots, r\}$ such that $e\leq f$, then $w_{ei}^{*} \leq w_{fi}^{*}$.

Let us assume that $r$ is odd. Consider the median environment indexed $m = \frac{r+1}{2}$.  Ensemble predictor's coefficient will be equal to the coefficient of the median environment $\bar{w}_{i}^{\dagger} = w_{mi}^{*}$. In this case in the NE, all the environments with index $e>m$ play $w^{\mathsf{sup}}$, all the environments with index $e<m$ play $-w^{\mathsf{sup}}$, and median environment $m$ plays $w_{mi}^{*}$. 

Let us assume that $r$ is even.  Consider the two median environments indexed $m = \frac{r}{2}$ and $m+1$. If $w_{mi}^{*}$ and $w_{(m+1)i}^{*}$ have the same sign, then the NE based ensemble predictor is equal to the coefficient with a smaller absolute value.  If $w_{mi}^{*}$ and $w_{(m+1)i}^{*}$ have the same sign, and say $0 \leq w_{mi}^{*} \leq w_{(m+1)i}^{*}$, then in NE the environment $m$ plays $w_{mi}^{*}-w^{\mathsf{sup}}$ and environment $m+1$ plays $w^{\mathsf{sup}}$. If $w_{mi}^{*}$ and $w_{(m+1)i}^{*}$ have the opposite sign, then the NE based ensemble predictor is equal to zero. If $w_{mi}^{*}$ and $w_{(m+1)i}^{*}$ have the opposite sign, and say $ w_{mi}^{*}<0 \leq w_{(m+1)i}^{*}$, then in NE the environment $m$ plays $-w^{\mathsf{sup}}$ and environment $m+1$ plays $w^{\mathsf{sup}}$.
For all the remaining environments other than $m$ and $m+1$ their actions are described as --- environments with index $e>m+1$ play $w^{\mathsf{sup}}$, all the environments with index $e<m$ play $-w^{\mathsf{sup}}$. 

In Proposition \ref{prop4}, we analyzed linear SEMs and showed that NE based ensemble predictor are closer to the OOD solutions than ERM. Proposition \ref{prop4} relied on Theorem \ref{nash_char:thm}, which we have shown can be appropriately extended to multi-environment setting. Hence, by using the same proof techniques used to prove Proposition \ref{prop4} and the expression for NE that we discussed above for multiple environments, we can show that the same result extends to multiple environments.  

In Theorem \ref{brd:thm1}, we proved the convergence for BRD dynamics to NE. We can show convergence in this setup as well using the same ideas discussed in Section \ref{sec:BRD_proof} and Section \ref{sec:o_BRD}.  We have shown that Theorem \ref{nash_char:thm}, Proposition \ref{prop4} and Theorem \ref{brd:thm1} extend to multi-environment setting. We also know that Proposition \ref{prop5} directly follows from Theorem \ref{nash_char:thm}, Proposition \ref{prop4} and Theorem \ref{brd:thm1}. Therefore, Proposition \ref{prop5} extends to multi-environment setting.

\subsection{Non-linear Models}
In the main body of the paper, we discussed all the results for linear models. An important direction for future work is to consider non-linear models.  A natural extension of the linear models is to consider models from reproducing kernel Hilbert spaces (RKHS). In the standard analysis of kernel regressions, the optimal predictors can be expressed as  linear functions of the kernel evaluated at the different data points. Similar to the constrained linear regression game that we introduced in Section \ref{secn:clrg}, we can introduce a constrained kernel regression game, where each environment's model choice is a predictor from RKHS with a constraint on its norm. Owing to the similarity of analysis of kernel regression and standard linear regression, we believe that the type of analysis that we carried out for linear regression games can serve as a useful building block to solving kernel regression games.

\subsection{Supplement for Experiments}

\subsubsection{Computing Environment}
The experiments were done on 2.3 GHZ Intel Core i9 processor with 32 GB
memory (2400 MHz DDR4). 

\subsubsection{Model, Hyperparameter, Training details}
We use linear models for all the methods. We carried out 10 trials for all the experiments and show the average performance in Figure \ref{fig2}. In the experiments for the 10 dimensional case ($p=q=5$) shown in Figure \ref{fig2}, we use a bound of $w^{\mathsf{sup}}=2$. In Figure \ref{fig1}, we had shown that provided the solution is contained in the search space, i.e., realizability assumption (Assumption \ref{realize_ass}) is satisfied, then the choice of the bound does not impact the solution provided the number of training steps are sufficiently large.

We use a stochastic gradient descent based best response dynamic to learn the NE; this dynamic is very similar to the one described in Section \ref{sec:o_BRD}.  For each environment $e\in\{1,2\}$ say the loss for the current batch at the end of iteration $t$ is $\hat{R}_e(\bm{w}_1^{t},\bm{w}_2^{t})$ (sample mean estimate of the loss over the current batch). For each $e\in \{1,2\}$, say $\bm{w}_{e}^{t}$ is the model used by environment $e$ at the end of iteration $t$.  The two environments alternate to take turns to update the model, i.e. in odd iterations $t$ environment $1$ updates the model, and even iterations $t$ environment $2$ updates the model. Each environment in its turn takes a step based on gradient of its loss over the batch w.r.t its model parameters. In its turn the environment $1$ updates $\bm{w}_1^{t} = \Pi_{\mathcal{W}}[\bm{w}_1^{t-1} - \beta\nabla_{\bm{w}_{1}^{t-1}} \hat{R}_1(\bm{w}_1^{t-1}, \bm{w}_2^{t-1})]$, while the environment $2$ does not update the model, $\bm{w}_2^{t}=\bm{w}_2^{t}-1$, and then $t$ is incremented by $1$. In the next turn, the same procedure is repeated with the roles of environment $1$ and $2$ reversed, i.e., environment $2$ updates and environment $1$ does not. We continue this cycle of updates until a fixed number of epochs.  In our experiments, we set $\beta = 0.005$, the batch size was set to $128$ and the total number of epochs were set to $200$ (each epoch is equal to the size of the training data divide by the batch size). 
 For the implementation of IRM, we needed to change the cross-validation procedure in the implementation provided by \cite{arjovsky2019invariant}. The cross-validation procedure in  requires access to data from a separate validation environment with a different distribution. Since we only use two environments, we use the cross-validation procedure called the train-domain validation set procedure (defined in \cite{gulrajani2020search}), which requires us to split each train environment into a train portion and a validation portion. It finally requires to combine all the validation splits and use them as one validation split. We use a 4:1 split. Besides this change in cross-validation procedure, the rest of the implementation comes from \url{https://github.com/facebookresearch/InvariantRiskMinimization/}.

\subsection{Further details on Figure \ref{fig2}}

Below we provide tables (Table \ref{tab2}, \ref{tab3}, \ref{tab4}, \ref{tab5}) containing numerical values and the standard deviation associated with model estimation error shown in Figure \ref{fig2}. ICP can often be conservative in accepting a covariate as a direct cause, which is the reason we see in some rows the entry against ICP is $5.0 \pm 0.0$; it does not accept any covariate as cause. 

\begin{table}
    \centering
     \renewcommand{\arraystretch}{1.25}
 \begin{tabular}{||c c c ||} 
 \hline
 \textbf{Method} & \textbf{Samples} & \textbf{Error} \\ [0.5ex] 
 \hline\hline
 IRM  &$20$ & $2,72\pm 0.53$  \\ \hline
 ICP  &$20$ & $4.85\pm 0.10$  \\ \hline
ERM            & $20$     & $2.82\pm 0.49$ \\ \hline
C-LRG  &$20$ & $7.96\pm 0.67$  \\ \hline
IRM  &$100$ & $0.96\pm 0.17$  \\ \hline
 ICP  &$100$ & $3.46\pm 0.37$  \\ \hline
ERM            & $100$     & $1.16\pm 0.06$ \\ \hline
C-LRG  &$100$ & $7.98\pm 0.59$  \\ \hline
IRM  &$250$ & $0.59\pm 0.10$  \\ \hline
 ICP  &$250$ & $0.42\pm 0.21$  \\ \hline
ERM            & $250$     & $1.12\pm 0.05$ \\ \hline
C-LRG  &$250$ & $1.11\pm 0.05$  \\ \hline
IRM  &$500$ & $0.90\pm 0.09$  \\ \hline
 ICP  &$500$ & $0.01\pm 0.001$  \\ \hline
ERM            & $500$     & $1.17\pm 0.03$ \\ \hline
C-LRG  &$500$ & $0.65\pm 0.04$  \\ \hline
IRM  &$750$ & $0.54\pm 0.10$  \\ \hline
 ICP  &$750$ & $0.005\pm 0.0001$  \\ \hline
ERM            & $750$     & $1.09\pm 0.03$ \\ \hline
C-LRG  &$750$ & $0.42\pm 0.02$  \\ \hline
IRM  &$1000$ & $0.51\pm 0.11$  \\ \hline
 ICP  &$1000$ & $0.002\pm 0.0003$  \\ \hline
ERM            & $1000$     & $1.12\pm 0.03$ \\ \hline
C-LRG  &$1000$ & $0.43\pm 0.02$  \\ \hline
\end{tabular}
    \caption{\small{Comparisons for F-HET}}
    \label{tab2}
\end{table}

\begin{table}
    \centering
     \renewcommand{\arraystretch}{1.25}
 \begin{tabular}{||c c c ||} 
 \hline
 \textbf{Method} & \textbf{Samples} & \textbf{Error} \\ [0.5ex] 
 \hline\hline
 IRM  &$20$ & $2.48\pm 0.34$  \\ \hline
 ICP  &$20$ & $5.00\pm 0.00$  \\ \hline
ERM            & $20$     & $3.59\pm 0.51$ \\ \hline
C-LRG  &$20$ & $9.31\pm 0.87$  \\ \hline
IRM  &$100$ & $1.28\pm 0.20$  \\ \hline
 ICP  &$100$ & $3.49\pm 0.56$  \\ \hline
ERM            & $100$     & $1.37\pm 0.10$ \\ \hline
C-LRG  &$100$ & $9.86\pm 1.08$  \\ \hline
IRM  &$250$ & $0.95\pm 0.16$  \\ \hline
 ICP  &$250$ & $2.33\pm 0.69$  \\ \hline
ERM            & $250$     & $1.22\pm 0.07$ \\ \hline
C-LRG  &$250$ & $1.23\pm 0.08$  \\ \hline
IRM  &$500$ & $1.01\pm 0.11$  \\ \hline
 ICP  &$500$ & $3.01\pm 0.77$  \\ \hline
ERM            & $500$     & $1.33\pm 0.09$ \\ \hline
C-LRG  &$500$ & $0.89\pm 0.09$  \\ \hline
IRM  &$750$ & $0.85\pm 0.15$  \\ \hline
 ICP  &$750$ & $2.01\pm 0.77$  \\ \hline
ERM            & $750$     & $1.18\pm 0.05$ \\ \hline
C-LRG  &$750$ & $0.53\pm 0.04$  \\ \hline
IRM  &$1000$ & $0.88\pm 0.14$  \\ \hline
 ICP  &$1000$ & $2.50\pm 0.79$  \\ \hline
ERM            & $1000$     & $1.19\pm 0.05$ \\ \hline
C-LRG  &$1000$ & $0.55\pm 0.03$  \\ \hline
\end{tabular}
    \caption{\small{Comparisons for P-HET}}
    \label{tab3}
\end{table}

\begin{table}
    \centering
     \renewcommand{\arraystretch}{1.25}
 \begin{tabular}{||c c c ||} 
 \hline
 \textbf{Method} & \textbf{Samples} & \textbf{Error} \\ [0.5ex] 
 \hline\hline
 IRM  &$20$ & $3.89\pm 0.50$  \\ \hline
 ICP  &$20$ & $5.02\pm 0.02$  \\ \hline
ERM            & $20$     & $4.82\pm 0.57$ \\ \hline
C-LRG  &$20$ & $6.14\pm 0.66$  \\ \hline
IRM  &$100$ & $3.06\pm 0.12$  \\ \hline
 ICP  &$100$ & $7.91\pm 0.46$  \\ \hline
ERM            & $100$     & $4.35\pm 0.12$ \\ \hline
C-LRG  &$100$ & $4.22\pm 0.55$  \\ \hline
IRM  &$250$ & $2.83\pm 0.06$  \\ \hline
 ICP  &$250$ & $7.95\pm 0.47$  \\ \hline
ERM            & $250$     & $4.29\pm 0.12$ \\ \hline
C-LRG  &$250$ & $3.52\pm 0.24$  \\ \hline
IRM  &$500$ & $3.21\pm 0.09$  \\ \hline
 ICP  &$500$ & $6.50\pm 0.58$  \\ \hline
ERM            & $500$     & $4.45\pm 0.05$ \\ \hline
C-LRG  &$500$ & $0.38\pm 0.05$  \\ \hline
IRM  &$750$ & $2.99\pm 0.04$  \\ \hline
 ICP  &$750$ & $5.00\pm 0.00$  \\ \hline
ERM            & $750$     & $4.47\pm 0.04$ \\ \hline
C-LRG  &$750$ & $0.05\pm 0.008$  \\ \hline
IRM  &$1000$ & $3.04\pm 0.06$  \\ \hline
 ICP  &$1000$ & $5.00\pm 0.00$  \\ \hline
ERM            & $1000$     & $4.51\pm 0.07$ \\ \hline
C-LRG  &$1000$ & $0.03\pm 0.003$  \\ \hline
\end{tabular}
    \caption{\small{Comparisons for F-HOM}}
    \label{tab4}
\end{table}

\begin{table}
    \centering
     \renewcommand{\arraystretch}{1.25}
 \begin{tabular}{||c c c ||} 
 \hline
 \textbf{Method} & \textbf{Samples} & \textbf{Error} \\ [0.5ex] 
 \hline\hline
 IRM  &$20$ & $4.03\pm 0.41$  \\ \hline
 ICP  &$20$ & $5.38\pm 0.14$  \\ \hline
ERM            & $20$     & $4.57\pm 0.69$ \\ \hline
C-LRG  &$20$ & $8.03\pm 0.56$  \\ \hline
IRM  &$100$ & $3.39\pm 0.32$  \\ \hline
 ICP  &$100$ & $6.55\pm 0.52$  \\ \hline
ERM            & $100$     & $4.25\pm 0.17$ \\ \hline
C-LRG  &$100$ & $6.24\pm 0.77$  \\ \hline
IRM  &$250$ & $2.95\pm 0.09$  \\ \hline
 ICP  &$250$ & $5.00\pm 0.00$  \\ \hline
ERM            & $250$     & $4.10\pm 0.18$ \\ \hline
C-LRG  &$250$ & $3.27\pm 0.26$  \\ \hline
IRM  &$500$ & $3.02\pm 0.09$  \\ \hline
 ICP  &$500$ & $5.00\pm 0.00$  \\ \hline
ERM            & $500$     & $4.54\pm 0.13$ \\ \hline
C-LRG  &$500$ & $0.56\pm 0.09$  \\ \hline
IRM  &$750$ & $2.81\pm 0.10$  \\ \hline
 ICP  &$750$ & $5.00\pm 0.00$  \\ \hline
ERM            & $750$     & $4.39\pm 0.09$ \\ \hline
C-LRG  &$750$ & $0.08\pm 0.009$  \\ \hline
IRM  &$1000$ & $2.88\pm 0.06$  \\ \hline
 ICP  &$1000$ & $5.00\pm 0.00$  \\ \hline
ERM            & $1000$     & $4.35\pm 0.12$ \\ \hline
C-LRG  &$1000$ & $0.05\pm 0.009$  \\ \hline
\end{tabular}
    \caption{\small{Comparisons for P-HOM}}
    \label{tab5}
\end{table}

\clearpage 

\bibliographystyle{apalike}
\bibliography{LRG_arxiv_jmtd}
\end{document}